\documentclass{article} % For LaTeX2e
\usepackage{iclr2026_conference,times}
\definecolor{mydarkblue}{rgb}{0,0.08,0.45}
\usepackage[colorlinks, anchorcolor=white, linkcolor=mydarkblue, urlcolor=mydarkblue, citecolor=mydarkblue]{hyperref}       % hyperlinks
\usepackage{url}
\usepackage{wrapfig}
\usepackage{amsthm}

\usepackage{changdefs}
\allowdisplaybreaks[2]
\usepackage{subcaption}
\usepackage{graphicx} % Required for inserting images
\usepackage{booktabs} % For \toprule, \midrule, etc.
\usepackage{multirow}
\usepackage[table]{xcolor}
\usepackage{xcolor}

\newcommand{\SE}{\mathrm{SE}}
\newcommand{\SO}{\mathrm{SO}}
\newcommand{\so}{\mathfrak{so}}

\newcommand{\Ker}{\mathrm{Ker}\,}
\usepackage{algorithm,algorithmicx,algpseudocode}
\newcommand{\prior}{\mathrm{prior}}
\newcommand{\target}{\mathrm{target}}
\newcommand{\joint}{\mathrm{joint}}
\newcommand{\Vol}{\mathrm{Vol}}
\newcommand{\CoM}{\mathrm{CoM}}
\newcommand{\CoMcirc}{{\mathrm{CoM}^\circ}}

\newcommand{\bfeeb}{\bar{\bfee}}

\newcommand{\bfxxt}{\tilde{\bfxx}}

\newcommand{\bfhht}{\tilde{\bfhh}}
\newcommand{\bfuut}{\tilde{\bfuu}}

\newcommand{\bfwwt}{\tilde{\bfww}}
\newcommand{\alphahd}{\dot{\alphah}}
\newcommand{\uppern}{{\smash{\raisebox{0.3ex}{\scriptsize $n$}}}}
\usepackage{pifont}
\newcommand{\cmark}{\ding{51}}%
\newcommand{\xmark}{\ding{55}}%
\usepackage{tabularx}
\usepackage{titlesec}
\titlespacing*{\section}{3pt}{4pt}{2pt} % {2pt}{6pt}{2pt}
\titlespacing*{\subsection}{1pt}{3pt}{1pt} % {2pt}{4pt}{1pt}
\titlespacing*{\subsubsection}{1pt}{2pt}{1pt} % {2pt}{2pt}{1pt}

\newtheorem*{theorem*}{Theorem \ref{thm:projected-diffusion}'}
\newtheorem*{theorem**}{Theorem \ref{thm:horizontal-lifted-sde}'}
\newtheorem*{theorem***}{Corollary \ref{cor:length}'}
\newtheorem*{theorem****}{Theorem \ref{thm:so3-diffusion}'}

\title{Quotient-Space Diffusion Models}

\author{
  \tabincell{l}{Yixian Xu$^1$\thanks{Equal contribution.}, \quad Yusong Wang$^{2,5*}$, \quad Shengjie Luo$^1$, \quad Kaiyuan Gao$^3$, \quad Tianyu He$^4$, \\
  Di He$^1$\thanks{Correspondence to: Di He \textless\texttt{dihe@pku.edu.cn}\textgreater, Chang Liu \textless\texttt{liuchang@bza.edu.cn}\textgreater.}, \quad Chang Liu$^{5\dagger}$} \\
  $^1$ State Key Laboratory of General Artificial Intelligence, Peking University, Beijing, China \\
  $^2$ State Key Laboratory of Human-Machine Hybrid Augmented Intelligence, Institute of Artificial \\
  \phantom{$^2$} Intelligence and Robotics, Xi'an Jiaotong University \\
  $^3$ Huazhong University of Science and Technology, Wuhan, China \\
  $^4$ Microsoft Research Asia, Beijing, China \\
  $^5$ Zhongguancun Academy, Beijing, China
}

\iclrfinalcopy % Uncomment for camera-ready version, but NOT for submission.
\begin{document}
\abovedisplayskip=4pt
\belowdisplayskip=4pt
\abovedisplayshortskip=3pt
\belowdisplayshortskip=4pt

% \TLDR{We propose a principled way to leverage group symmetry of the target distribution by defining a diffusion model on the quotient space, which achieves both easier learning and correct sampling for the first time.}

\maketitle
\vspace{-15pt}
\begin{abstract}
Diffusion-based generative models have reformed generative AI, and also enabled new capabilities in the science domain, \emph{e.g.}, fast generation of 3D structures of molecules.
In such tasks, there is often a \emph{symmetry} in the system, identifying elements that can be converted by certain transformations as equivalent. 
Equivariant diffusion models guarantee a symmetric distribution, but miss the opportunity to make learning easier,
while alignment-based simplification attempts fail to preserve the target distribution.
In this work, we develop \emph{quotient-space diffusion models}, a principled generative framework to fully handle and leverage symmetry.
By viewing the intrinsic generation process on the quotient space, the exact construction that removes symmetry redundancy, the framework simplifies learning by allowing model output to have an arbitrary intra-equivalence-class movement, while generating the correct symmetric target distribution with guarantee.
We instantiate the framework 
for molecular structure generation which follows $\mathrm{SE}(3)$ (rigid-body movement) symmetry.
It improves the performance over equivariant diffusion models and outperforms alignment-based methods universally for small molecules and proteins,
representing a new framework that surpasses previous symmetry treatments in generative models.
\end{abstract}

\section{Introduction} \label{sec:intro}

Diffusion models have emerged as the dominant approach for modeling distributions in high-dimensional spaces.
Building on their success in real-world domains such as images \citep{ho2020denoising, song2021score}, audios \citep{kong2021diffwave,evans2024fast}, and videos \citep{ho2022imagen,li2023videogen}, diffusion models are now increasingly adopted in scientific applications, ranging from fluid field solving \citep{bastek2025physicsinformed}, electronic structure prediction \citep{kim2025high}, molecular structure generation \citep{xu2022geodiff,abramson2024accurate,hassan2024flow,geffner2025proteina}, and thermodynamic ensemble modeling \citep{zheng2024predicting,lewis2025scalable}.

Compared to general applications, % real-world tasks,
scientific tasks often exhibit inherent \emph{symmetry} structures, wherein elements that can be converted by specific transformations are regarded as equivalent.
As a representative example, a molecular structure can be represented as a $\bbR^{3N}$ vector by concatenating the 3D coordinates of its $N$ atoms. Due to spatial homogeneity, molecular structures that differ only by a global 3D translation and rotation (\ie, a rigid-body movement) hold exactly the same physical properties, hence should be regarded as \emph{symmetric/equivalent} in representing the same state, \ie, the shape of the molecule.
Mathematically, such transformations typically form a Lie group, \eg, the 3D special Euclidean group $\SE(3)$ in the molecular case, 
which formally characterizes the symmetry.

The common treatment
is assigning the same probability to equivalent elements in the original space, resulting in a group-invariant target distribution.
This can be implemented by simply augmenting training data by applying random group actions \citep{abramson2024accurate}, or using a group-equivariant generation process starting from a group-invariant prior distribution \citep{kohler2020equivariant,xu2022geodiff,hoogeboom22equivariant}. 
Nevertheless, these approaches have not leveraged the symmetry to reduce learning difficulty, as the neural network for modeling the generation process still needs to learn a specific equivalent movement
(\eg, translating and rotating a molecule as a rigid body ($\SE(3)$ action)), which is unnecessary as \emph{any} such a movement does not update the essential system state (\eg, the shape of a molecule). 
In hope to remove this redundancy for an easier learning task, a few heuristic attempts are proposed by landmark prior works, \eg, GeoDiff \citep{xu2022geodiff} and AlphaFold~3 \citep{abramson2024accurate}.
They align the target samples with respect to model input or output to reduce their degrees of freedom (DoFs) 
under equivalence. 
Nevertheless, we find (\secref{comparison})
these treatments alter the learning target from what is required in the sampling process, hence distort the generated distribution, even with heuristic fix attempts \citep{wohlwend2025boltz}.

In this work, 
we develop quotient-space diffusion models, a principled framework for the desire to \emph{reduce learning difficulty} by leveraging symmetry, while \emph{guaranteeing the correct symmetric target distribution} by its sampling process.
\emph{Quotient space} treats a set of equivalent elements, \ie, an equivalence class, as one element.
It is hence the exact mathematical construction that reflects the intrinsic variability without redundancy (\eg, the shape space of a molecule). 
We first transform the conventional equivariant diffusion process onto the quotient space, describing the generation process in essential move.
Considering that the quotient space is quite abstract and inconvenient to carry out simulation directly, 
we further leverage the associated construction of \emph{horizontal lift} to pull the quotient-space process back 
to the original space, making simulation as easy as for the original process. 
The resulting process effectively 
projects each original generation update 
onto the subspace that moves only \emph{across} equivalence classes (\eg, deformation), removing the component that moves \emph{within} an equivalence class (\eg, rigid-body translation and rotation). 
An additional update term arises from deduction which \emph{guarantees the correct distribution}. 
\figref{so2}(Left) shows a conceptual illustration. In contrast to the conventional equivariant diffusion, which leads to highly jagged paths, the quotient-space diffusion process moves straight only in the radial direction in this rotationally symmetric system, while producing the same target distribution (Middle vs. Right).

Importantly, with the help of the projection, the model output is allowed to be arbitrary in the subspace of intra-equivalence-class movement (\eg, rigid-body translation and rotation), since any output in this subspace would be projected out in each generation step. This \emph{reduces learning difficulty} as the model does not need to learn anything in this subspace. 
A mechanistic comparison with prior treatments is shown in \tabref{comparison}. 
The quotient-space diffusion admits either an equivariant model or a general model with data augmentation in training.
\appxref{related} discusses more related works.

\begin{figure*}[t]
  \vspace{-2pt}
  \begin{subfigure}[b]{0.32\textwidth}
    \centering
    \includegraphics[width=\textwidth]{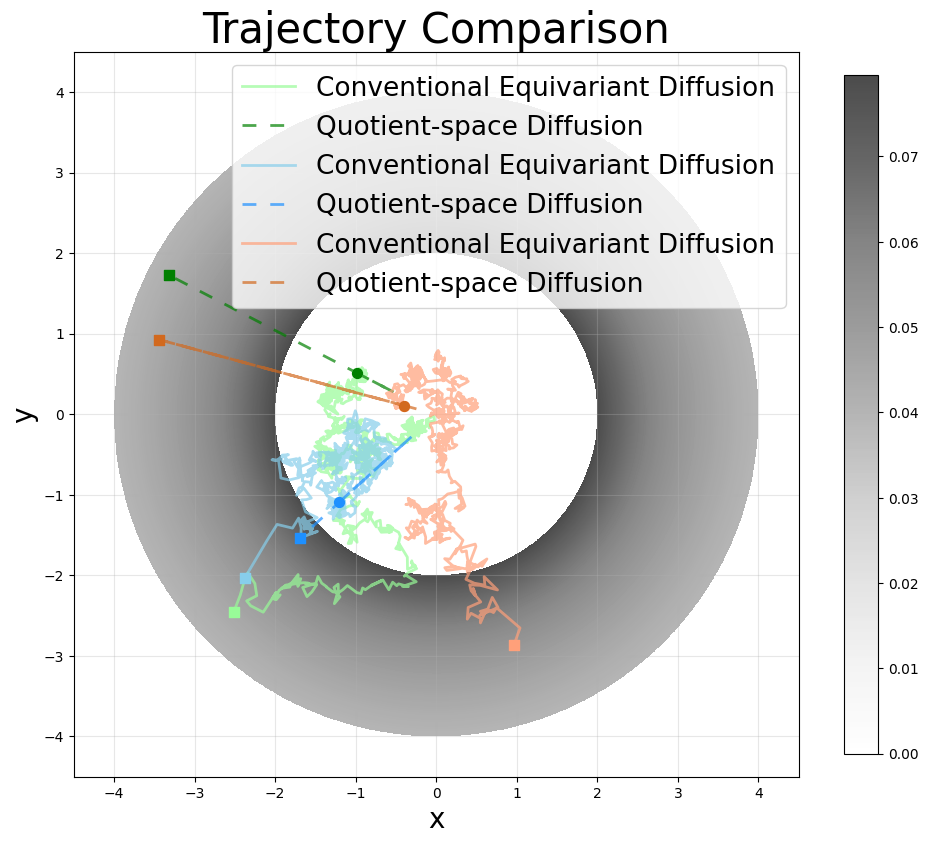}
    \captionsetup{skip=2pt}
  \end{subfigure}
  \hspace{-2pt}
  \begin{subfigure}[b]{0.64\textwidth}
    \centering
    \includegraphics[width=\textwidth]{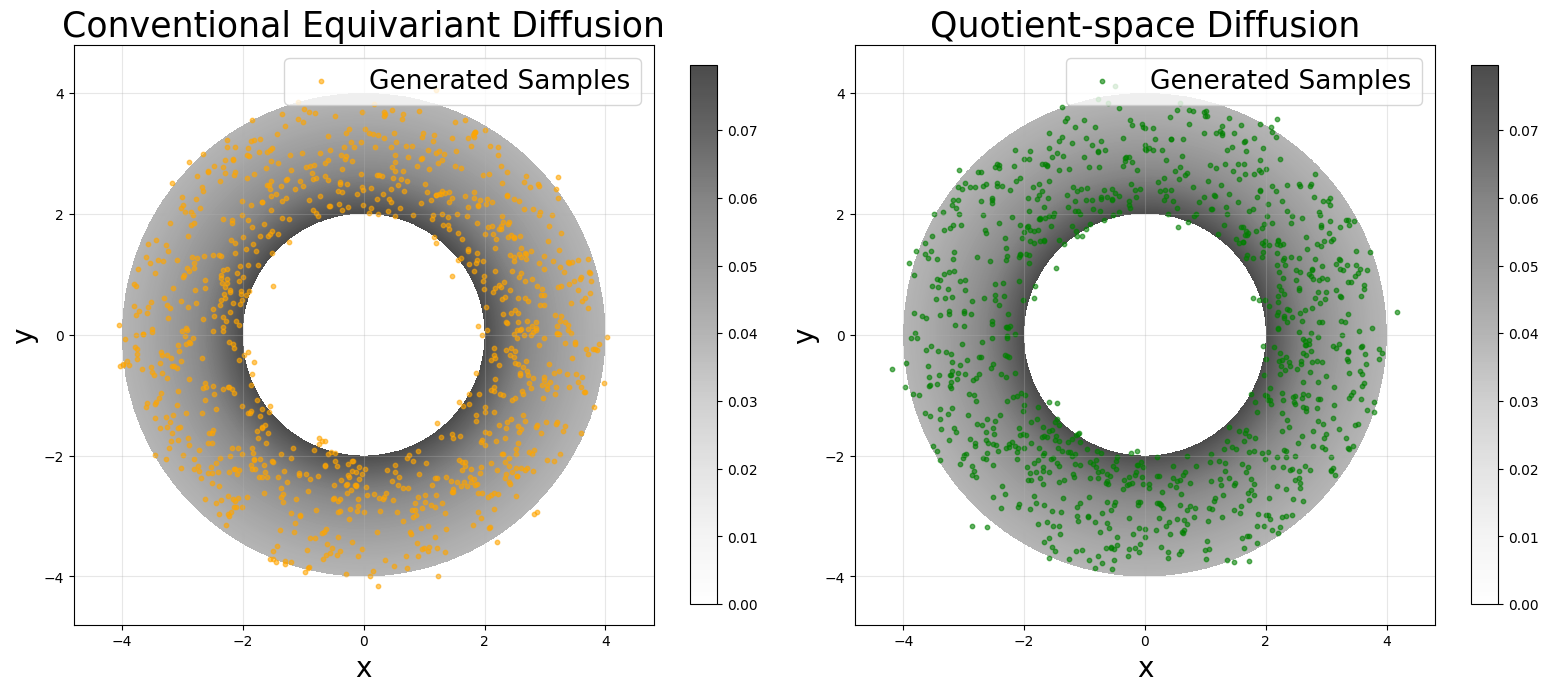}
    \captionsetup{skip=2pt}
  \end{subfigure}
  \hspace{-2pt}
  \vspace{-4pt}
  \caption{\looseness=-1 A conceptual 
  illustration highlighting characteristics of the quotient-space diffusion model 
  for a distribution (shown in gray scale) on $\bbR^2$ with $\SO(2)$ (\ie, rotational) symmetry. 
  \textbf{(Left)} SDE sampling trajectories. 
  Same color indicates the same starting point (round dot).
  The quotient-space diffusion moves only radially along rays emanating from the origin, representing the quotient space $\bbR^2/\SO(2)$ which traverses across equivalence classes (origin-centered concentric circles) without moving within an equivalence class (angular movement).
  The conventional diffusion model moves over the whole $\bbR^2$ space, leading to jagged curves requiring subtler simulation.
  \textbf{(Middle)} Samples generated by the conventional equivariant diffusion model and \textbf{(Right)} by the quotient-space diffusion model. Both recover the same target distribution.
  The quotient-space diffusion also reduces learning difficulty (\eqnref{objective-projection}) by allowing arbitrary model output in intra-equivalence-class movement (angular movement).
  }
  \label{fig:so2}
  \vspace{-18pt}
\end{figure*}

For the representative case of the $\bbR^{3N}/\SE(3)$ quotient space (shape space), we derive explicit expressions for the concepts, leading to concrete training and sampling algorithms. 
Particularly, the projection removes the \emph{total linear and angular momentum} of the $N$ points, leaving only deformation updates without rigid-body movement, and relaxes the model from learning to predict the total linear and angular momentum.
In the application of molecular structure generation, 
our quotient-space diffusion models universally outperform conventional diffusion models as well as alignment-based treatments on a diverse range of generation quality metrics for small molecules and for proteins.
Notably, with quotient-space diffusion, the 60M-parameter Prote\'ina model \citep{geffner2025proteina} even outperforms the much larger 200M-parameter model for protein structure generation.

\begin{table*}[t]
  \centering
  \vspace{-4pt}
  \caption{\looseness=-1 Mechanistic comparison among diffusion models for handling symmetry.
  Regarding leveraging the symmetry for easier training, we consider whether a method removes the need to learn to predict a target in the equivalent degrees of freedom (DoFs), and (if not) whether it removes the variance in the equivalent DoFs of target samples.
  Regarding generation validity, we consider whether the modified learning target has a compatible sampler that produces the correct target distribution.
  The denoising form $\bfD_\theta$ of diffusion model is adopted. 
  $\clA_\bfyy(\bfxx)$ (\eqnref{def-align}) aligns $\bfxx$ towards $\bfyy$, and $\thetab$ treats $\theta$ as constant (\ie, stop-gradient).
  See \secref{comparison} for details.
  }
  \vspace{-10pt}
  \label{tab:comparison}
  \setlength{\tabcolsep}{3pt}
  \resizebox{0.9\textwidth}{!}{%
  \begin{tabular}{c@{}c@{\hspace{-2pt}}ccc}
    \toprule
    \multirow{2}{*}{\tabincell{c}{\\ Training strategies \\ for $\bfD_\theta$}} & \multirow{2}{*}{\tabincell{c}{\\[-8pt] Learning \\ targets \\ % Optimal \\ solutions \\
    of $\bfD_\theta$}} & \multicolumn{2}{c}{Reduction of learning difficulty} & \multirow{2}{*}{\tabincell{c}{\\[-4pt] Sampling \\ compatibility}} \\
    \cmidrule(lr){3-4}
    & & \tabincell{c}{Removal of \\ equivalent DoFs} & \tabincell{c}{Removal of variance \\ on equivalent DoFs} & \\
    \midrule
    \tabincell{c}{Conventional diffusion loss \\[-2pt] $\bbE \lrVert{\bfD_\theta(\bfxx_t,t) - \bfxx_1}^2$}
    & $\bbE[\bfxx_1 | \bfxx_t]$
    & \xmark & \xmark & \cmark \\[12pt]
    \tabincell{c}{GeoDiff alignment loss \\[-2pt] $\bbE \lrVert{\bfD_\theta(\bfxx_t,t) - \clA_{\bfxx_t}(\bfxx_1)}^2$}
    & $\bbE[\clA_{\bfxx_t}(\bfxx_1) | \bfxx_t]$
    & \xmark & \cmark & \xmark \\[12pt]
    \tabincell{c}{AF3 alignment loss \\[-2pt] $\bbE \lrVert{\bfD_\theta(\bfxx_t,t) \!-\! \clA_{\bfD_{\thetab}(\bfxx_t,t)}\!(\bfxx_1)}^2$}
    & \tabincell{c}{$g \cdot \bbE[\clA_{\bfxx_t}(\bfxx_1) | \bfxx_t]$ \\ for arbitrary $g \in \clG$}
    & \cmark & \cmark & \xmark \\
    \midrule
    \tabincell{c}{Quotient-space diffusion loss \\[-2pt] $\bbE \lrVert{P_{\bfxx_t}(\bfD_\theta(\bfxx_t,t) - \bfxx_1)}^2$}
    & \tabincell{c}{$\bbE[P_{\bfxx_t}(\bfxx_1) | \bfxx_t] + \bfvv^\clV$ \\ for arbitrary $\bfvv^\clV \!\!\in\! \Ker\!(\!P_{\bfxx_t})$}
    & \cmark & \cmark & \cmark \\
    \bottomrule
  \end{tabular}
  }
  \vspace{-15pt}
\end{table*}

\vspace{-5pt}
\section{Background}
\vspace{-3pt}
\subsection{Diffusion-based Generative Models on Euclidean Space}\label{sec:euclidean-diffusion}
The main idea of diffusion models is to construct a step-by-step transformation from a simple prior distribution $p_\prior$ at $t=0$ to a complicated target distribution $p_\target$ at $t=1$. 
We follow the Stochastic Interpolant framework \citep{albergo2023stochastic} unifying diffusion models \citep{ho2020denoising,song2021score} and flow matching models \citep{lipman2023flow,liu2023flow}.
% The following linear interpolation is constructed:
It defines the intermediate distribution for $t\in[0,1]$ by:
\begin{align}\label{eqn:euclidean-dp}
    \bfxx_t = \alpha_t \bfxx_0 + \beta_t\bfxx_1+ \gamma_t \bfeps, \quad \text{where } (\bfxx_0,\bfxx_1)\sim p_\joint, \quad \bfeps \sim \clN(0, \bfI),
\end{align}
where $p_\joint(\bfxx_0,\bfxx_1)$ is a pre-defined joint distribution with marginals $p_\prior(\bfxx_0)$ and $p_\target(\bfxx_1)$,
and $\alpha_t,\beta_t,\gamma_t$ satisfy the boundary conditions $\alpha_0 = 1$, $\beta_0 = \gamma_0 = 0$, and $\alpha_1 = \gamma_1 = 0$, $\beta_1 = 1$ so that $p_0=p_\prior$ and $p_1=p_\target$.
This distribution transformation can be achieved by the following ordinary differential equation (ODE)~\citep[Cor.~2.18]{albergo2023stochastic}:
\begin{align}\label{eqn:diffusion-ode}
    \dd \bfxx_t = \bfvv(\bfxx_t,t) \ud t, \quad
    \text{where } \bfvv(\bfxx_t,t):=\bbE[\alphad_t \bfxx_0 + \betad_t\bfxx_1+ \gammad_t \bfeps\mid\bfxx_t],
\end{align}
and the dot decoration denotes time derivative.
The velocity vector field $\bfvv(\bfxx_t,t)$ is typically 
trained with the objective:
    $\clL(\theta):=\bbE_{p(t)} w(t) \bbE_{p_\joint(\bfxx_0,\bfxx_1) p(\bfeps)} \Vert\bfvv_\theta(\bfxx_t,t)-(\alphad_t \bfxx_0 + \betad_t\bfxx_1+ \gammad_t \bfeps)\Vert^2$,
where $p(t)$ and $w(t)$ control the sampling frequency and weight over time. 
The distribution transformation can also be achieved by a stochastic process given by the stochastic differential equation (SDE):
\begin{align}\label{eqn:diffusion-sde}
    \dd\bfxx_t = \left(\bfvv(\bfxx_t,t) + \eta_t \bfss(\bfxx_t,t)\right)\dd t + \sqrt{2\eta_t} \ud \bfww_t, \quad \text{where } \bfss(\bfxx_t,t) := \nabla_{\bfxx_t} \log p_t(\bfxx_t)
\end{align}
is called the score function, and $\eta_t \ge 0$ controls stochasticity, and $\bfww_t$ is the Wiener process~\citep[Cor.~2.10]{albergo2023stochastic}. 
When $p_\prior = \clN(\bfzro, \bfI)$ 
\citep[Def.~3.4]{albergo2023stochastic}, $\bfxx_0$ and $\bfeps$ can be combined and $\bfxx_t=\alphah_t\bfeps+\beta_t\bfxx_1$, where $\alphah_t=\sqrt{\alpha_t^2+\gamma_t^2}$, and the score function can be expressed by the velocity field:
$\bfss(\bfxx_t,t) = \frac{\betad_t\bfxx_t-\beta_t\bfvv(\bfxx_t,t)}{\alphah_t(\alphahd_t\beta_t-\alphah_t\betad_t)}$.
A common convenient parameterization for a $\bfvv_\theta(\bfxx_t,t)$ model is to use a neural network $\bfD_\theta(\bfxx_t,t)$ in the following way, which turns the objective into:
\begin{align} \label{eqn:v=D}
  \bfvv_\theta(\bfxx_t,t) :={} & (\alphahd_t/\alphah_t) \bfxx_t - (\alphahd_t\beta_t/\alphah_t - \betad_t) \bfD_\theta(\bfxx_t,t), \\
    \label{eqn:objective-D}
    \clL(\theta) ={} & \bbE_{p(t)} w(t) (\alphahd_t\beta_t-\alphah_t\betad_t)^2/\alphah_t^2 \; \bbE_{p(\bfxx_1,\bfxx_t)} \lrVert{\bfD_\theta(\bfxx_t,t)-\bfxx_1}^2,
\end{align}
where $p(\bfxx_1,\bfxx_t)$ is derived from \eqnref{euclidean-dp} by integrating out $\bfxx_0$ and $\bfeps$.
This objective conveys the intuition of predicting the clean-data sample $\bfxx_1$ from a noisy sample $\bfxx_t$, hence $\bfD_\theta(\bfxx_t,t)$ is called a denoising model and suits prevalent architectures.

\subsection{From Manifold to Quotient Space}
Symmetry can be formalized as identifying elements that can be converted to each other by a certain set of transformations. Such transformations typically form a Lie group, which is also a manifold. We consider the general case where the state space is a manifold. \figref{manifold} illustrates the concepts.

\textbf{Manifold and tangent vectors} (\appxref{rg}).
A (smooth) manifold $\clM$ is a space is locally isomorphic to a Euclidean space. It generalizes the Euclidean space to allow spatial heterogeneity and a different global topology.
The velocity of movement at a point $\bfxx$ on $\clM$ is represented as a \emph{tangent vector} at $\bfxx$.
All tangent vectors at $\bfxx$ constitute a linear space $T_\bfxx\clM$ with the same dimension called the tangent space at $\bfxx$.
In contrast to the Euclidean space, tangent spaces at different points are different linear spaces, but a smooth transformation $F$ between the points (even on different manifolds) can induce a \emph{push-forward} map $F_{*\bfxx}: T_\bfxx\clM \to T_{F(\bfxx)}\clM$ between the tangent spaces by linking infinitesimal movements around $\bfxx$ and around $F(\bfxx)$. It can be seen as a generalization of the Jacobian matrix. 
A manifold is typically endowed with a \emph{Riemannian metric}, \ie, an inner product in each tangent space. It further induces common constructions like curve length, distance, measure, gradient, Laplacian, and the Wiener process (\appxref{manifold-stoc}) on the manifold.

\begin{wrapfigure}{r}{0.55\textwidth}
  \vspace{-18pt}
  \centering
  \hspace{-16pt} \includegraphics[width=1.04\linewidth]{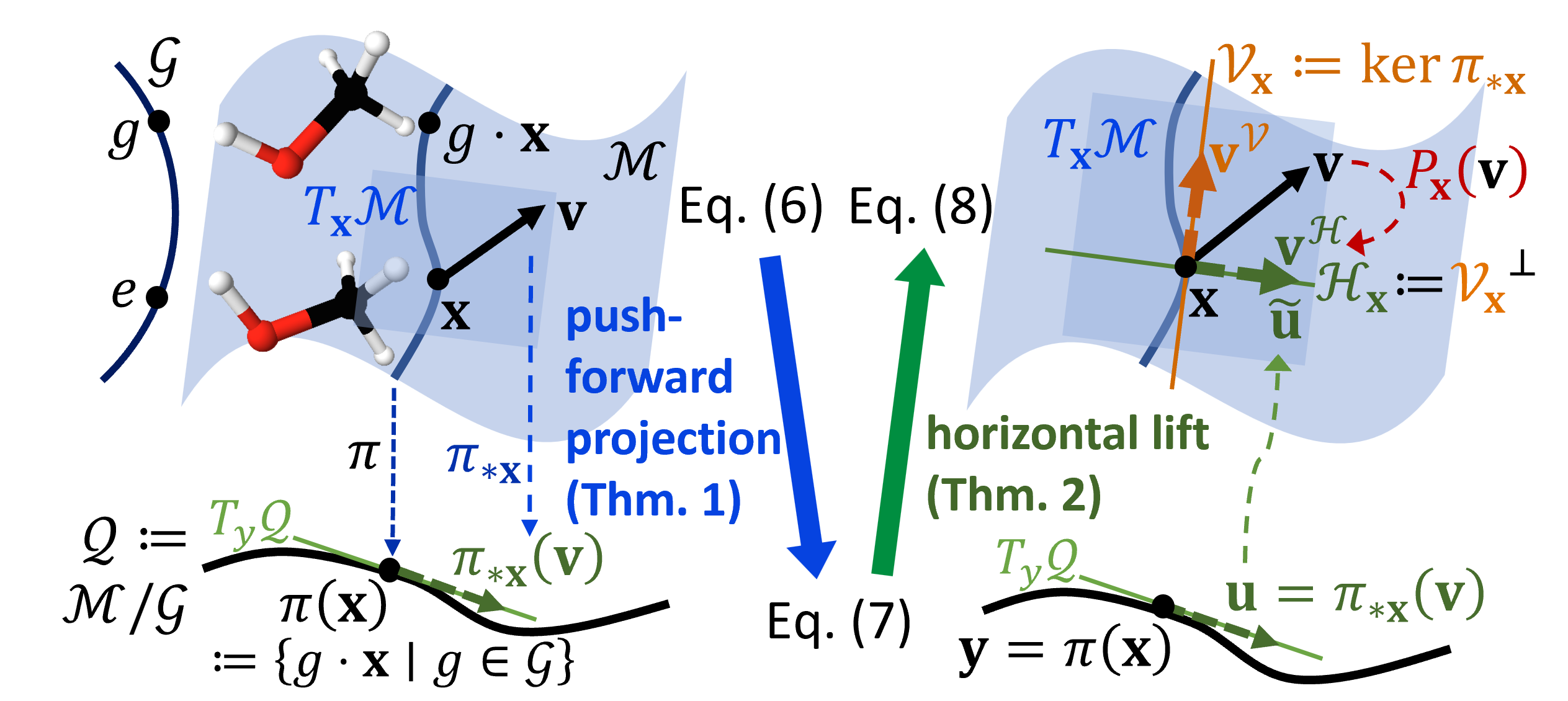} \\
  \vspace{-4pt}
  \caption{Illustration of mathematical concepts, their relations, and the scheme of technical development.
  }
  \vspace{-10pt}
  \label{fig:manifold}
\end{wrapfigure}

\textbf{Lie group} (\appxref{lie-group-action}).
A Lie group is a group that is also a manifold, also regarded as a continuous group.
Its role to characterize the symmetry of a state space $\clM$ is by its (left) \emph{group action} $L_g: g \mapsto g \cdot \bfxx$ on $\clM$ (\eg, translating and rotating a molecular structure when $\clG = \SE(3)$).
To comply with the symmetry, we ask the Riemannian metric to be invariant under group action (or, the group action is \emph{isometric}): $\lrangle{\bfvv,\bfvv'}_\bfxx=\lrangle{(L_g)_{*\bfxx}(\bfvv), (L_g)_{*\bfxx} (\bfvv')}_{g\cdot\bfxx}$, for all $\bfvv,\bfvv' \in T_\bfxx\clM$.
Symmetry of constructions on $\clM$ can then be characterized by $\clG$:
a distribution $p$ is said \emph{$\clG$-invariant} if
$p(g\cdot\bfxx) = p(\bfxx)$, $\forall g\in\clG,\bfxx\in\clM$,
and a vector field $\bfvv$ is \emph{$\clG$-equivariant} if 
$(L_g)_{*\bfxx}(\bfvv_\bfxx) = \bfvv_{g \cdot \bfxx}$.

\textbf{Quotient space} (\appxref{quotient-space}).
The group action defines a symmetry/equivalence relation on $\clM$: $\bfxx \sim \bfxx'$ if there exists $g \in \clG$ such that $g \cdot \bfxx = \bfxx'$. 
The quotient space $\clQ := \clM / \clG$ is defined under this equivalence, treating each equivalence class as one element.
It hence reflects the intrinsic variability of the system.
There is a natural \emph{projection} mapping connecting the two spaces: $\pi: \bfxx\in\clM \mapsto \{ g \cdot \bfxx \mid g \in \clG \} \in \clQ$.
Under certain conditions, the quotient space is also a smooth manifold. 

\vspace{-3pt}
\section{Methods}\label{sec:methods}
\vspace{-3pt}

As the quotient space represents the ``essential states'' of a system with symmetry, a principled diffusion model for the system is expected to be built on it. We unroll the development
by first deriving the projected diffusion process onto the quotient space, then lifting it back into the total space (\ie, the original space $\clM$) for convenient implementation.
See \figref{manifold} for illustration of involved concepts and the scheme of the development.
We then derive the specification in the $\bbR^{3N}/\SE(3)$ case for molecular structure generation, followed by training and sampling algorithms. Finally, we highlight the merit of the quotient-space diffusion in reducing training difficulty and sampler validity with a comparative analysis with existing symmetry treatments.

\subsection{Diffusion Process on a General Quotient Space} \label{sec:framework}

When viewed in the total space $\clM$, due to the intrinsic symmetry, the distribution should assign the same probability to elements in each equivalence class, formally regarded as a $\clG$-invariant distribution. 
To guarantee this, a common approach is to start from a simple $\clG$-invariant distribution $p_\prior$ and let it undergo a $\clG$-equivariant generation process, which translates to the $\clG$-equivariance of the vector field $\bfff_t$ (the drift term) for a diffusion process (the Wiener process $\bfww_t$ defined under the Riemannian metric induces a $\clG$-invariant transition distribution when $\clG$ acts on the Riemannian manifold $\clM$ isometrically).
This guarantees that each intermediate distribution along the generation process is also $\clG$-invariant \citep{kohler2020equivariant,hoogeboom2022equivariant}.
As $\clG$-invariant distributions vary only \emph{across} equivalence classes, they correspond to distributions on the quotient space $\clQ := \clM/\clG$, and the generation process can also be exactly represented on the quotient space, as explicitly given by the following theorem.

\begin{theorem}\label{thm:projected-diffusion}
\vspace{-2pt}
  Assume $\{\bfxx_t\}_{t\in[0,T]}$ is a diffusion process on $\clM$, specified by the following SDE:
  \begin{align}\label{eqn:original-sde}
    \ud \bfxx_t = \bfff_t(\bfxx_t) \ud t + \sigma_t \ud \bfww_t,\quad \bfxx_0\sim p_\prior,
  \end{align}
  where $\bfff_t$ is a $\clG$-equivariant vector field on $\clM$, 
  and $p_\prior$ is $\clG$-invariant.
  Then the projected process $\{\bfyy_t:=\pi(\bfxx_t)\}_{t\in[0,T]}$ onto the quotient space $\clQ := \clM/\clG$ is specified by the following SDE:
  \begin{align}\label{eqn:projected-sde}
    \dd\bfyy_t = \lrparen*[\Big]{ (\pi_*\bfff_t)(\bfyy_t) - \frac{\sigma_t^2}{2} \bfhh(\bfyy_t) } \dd t + \sigma_t \ud \bfomega_t, \quad \bfyy_0 \sim \pi_{\#}p_\prior,
  \end{align}
  where:
  \itemone~$\pi_*\bfff_t$ is the pushed-forward vector field onto $\clQ$ induced by $\pi$, which is well-defined due to the $\clG$-equivariance of $\bfff_t$;
  \itemtwo~$\bfhh(\bfyy_t)$ is the mean curvature vector field induced on $\clQ$; 
  \itemthr~$\bfomega_t$ is the Wiener process on $\clQ$; and
  \itemfor~$\pi_{\#}p_\prior$ is the pushed-forward distribution of $p_\prior$; \ie, its samples follow $\bfyy_0 = \pi(\bfxx_0)$ where $\bfxx_0 \sim p_\prior$.
  \vspace{-2pt}
\end{theorem}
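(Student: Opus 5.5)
We plan to work with generators and apply Itô's formula to functions pulled back from the quotient. We take $\clG$ to act freely and properly and isometrically, so that $\pi:\clM\to\clQ$ is a Riemannian submersion, where $\clQ$ carries the metric for which $\pi_{*\bfxx}$ restricted to the horizontal space $\clH_\bfxx := (\Ker \pi_{*\bfxx})^\perp$ is an isometry onto $T_{\pi(\bfxx)}\clQ$. The Wiener process in \eqnref{original-sde} has generator $\frac{\sigma_t^2}{2}\Delta_\clM$. For any smooth test function $\phi$ on $\clQ$ we will compute the generator of $\bfxx_t$ applied to $\phi\circ\pi$ and show that it is a function of $\pi(\bfxx)$ alone. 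We will then identify it with the generator of \eqnref{projected-sde} applied to $\phi$. The martingale-problem characterization then gives that $\bfyy_t=\pi(\bfxx_t)$ solves \eqnref{projected-sde} in law, and the initial condition $\bfyy_0\sim\pi_\# p_\prior$ holds by definition. The $\clG$-invariance of $p_\prior$ is only needed for the companion statement that the law of $\bfxx_t$ is recovered from that of $\bfyy_t$. Pathwise, the projection does not require it, because equivariance makes the generator descend.

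\textbf{Drift term.} We have $\bfff_t(\phi\circ\pi)(\bfxx) = \lrangle{\nabla\phi, \pi_{*\bfxx}\bfff_t(\bfxx)}_{\pi(\bfxx)}$. By equivariance, $\pi_{*g\cdot\bfxx}\bfff_t(g\cdot\bfxx) = \pi_{*g\cdot\bfxx}(L_g)_{*\bfxx}\bfff_t(\bfxx) = (\pi\circ L_g)_{*\bfxx}\bfff_t(\bfxx) = \pi_{*\bfxx}\bfff_t(\bfxx)$, since $\pi\circ L_g=\pi$. Hence $(\pi_*\bfff_t)(\bfyy)$ is well defined and gives the first drift term.

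\textbf{Laplacian term (the main step).} We need a formula for $\Delta_\clM(\phi\circ\pi)$ under a Riemannian submersion. At $\bfxx$, we pick an orthonormal frame consisting of horizontal vectors $\{E_i\}$, which are horizontal lifts of an orthonormal frame $\{\bar E_i\}$ on $\clQ$, together with vertical vectors $\{V_a\}$ tangent to the orbit (fiber) $\pi^{-1}(\bfyy)$. The Laplacian is the trace of the Hessian, $\Delta(\phi\circ\pi)=\sum_i \mathrm{Hess}(\phi\circ\pi)(E_i,E_i)+\sum_a \mathrm{Hess}(\phi\circ\pi)(V_a,V_a)$, and we treat the two sums separately.

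For the horizontal part, we use O'Neill's identity that $\nabla_{E_i}E_i$ has horizontal part equal to the lift of $\bar\nabla_{\bar E_i}\bar E_i$. Together with $V(\phi\circ\pi)=0$, this yields $\sum_i \mathrm{Hess}(\phi\circ\pi)(E_i,E_i)=(\Delta_\clQ\phi)\circ\pi$.

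For the vertical part, $\phi\circ\pi$ is constant along fibers, so $\mathrm{Hess}(\phi\circ\pi)(V_a,V_a) = -(\nabla_{V_a}V_a)(\phi\circ\pi)$. Summing over $a$ gives $-\lrangle{\bfHH, \nabla(\phi\circ\pi)}$, where $\bfHH=\sum_a(\nabla_{V_a}V_a)^{\clH}$ is the mean curvature vector of the fiber. This is a horizontal vector.

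Because $\clG$ acts isometrically and transitively on each fiber, $\bfHH$ is $\clG$-equivariant, so it descends to a vector field $\bfhh:=\pi_*\bfHH$ on $\clQ$. This $\bfhh$ is the ``mean curvature vector field induced on $\clQ$''; we fix its normalization (a trace, not an average) to match the factor $\sigma_t^2/2$ in the statement. Combining the pieces, the generator applied to $\phi\circ\pi$ equals $\big[(\pi_*\bfff_t)\phi - \frac{\sigma_t^2}{2}\bfhh\phi + \frac{\sigma_t^2}{2}\Delta_\clQ\phi\big]\circ\pi$. This is exactly the generator of \eqnref{projected-sde}, with $\bfomega_t$ the Wiener process of the quotient metric.

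\textbf{Expected obstacle.} The delicate part is the vertical second fundamental form computation and the sign and normalization conventions for $\bfhh$. A further technical issue is the regularity of $\clQ$: singular orbits, such as the origin in \figref{so2} or collinear configurations for $\SE(3)$, have to be excluded or shown to be hit with probability zero. We will handle this by restricting to the principal-orbit open dense set, where the action is free and proper, and invoking standard nonexplosion/nonhitting arguments. Finally, we conclude uniqueness in law from well-posedness of the martingale problem for the smooth generator on $\clQ$.
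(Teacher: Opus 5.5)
Your proposal is correct and is essentially the paper's argument. Both write the generator $\bfff_t + \frac{\sigma_t^2}{2}\Delta^\clM$ in an orthonormal frame adapted to $\clH_\bfxx\oplus\clV_\bfxx$, use Prop.~\ref{prop:connection-quotient-space} to turn the horizontal terms into $\Delta^\clQ$, and collect the vertical terms $-\nabla_{\bfee_i}\bfee_i$ into $-\frac{\sigma_t^2}{2}\bfhh$ with the same sign and trace normalization; the difference is only presentational, in that the paper pushes a Stratonovich representation with frame fields $\sigma_t\bfee_i$ forward by $\pi_*$, whereas you apply the generator directly to pullbacks $\phi\circ\pi$ and invoke the martingale problem, which also makes explicit (through your equivariance argument) why the pushforward of the vertical mean-curvature term is well defined.
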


See \appxref{proof-thm1} for formal definitions and proof.
\thmref{projected-diffusion} shows that the projected process on the quotient space is indeed a diffusion process, and running the original process \eqnref{original-sde} followed by projection $\pi$ gives the same process by running \eqnref{projected-sde} on the quotient space.
The quotient-space process \eqnref{projected-sde} is expressed using the projected vector field $\pi_* \bfff_t$, the Wiener process $\bfomega_t$ on $\clQ$, and perhaps unexpectedly, an additional vector field $\bfhh$ reflecting the curvature of $\clQ$.
As the quotient space squeezes different equivalence classes all into points, a process viewed on the quotient space should accommodate for the change of volume of the equivalence class along the movement. This additional vector reflects the change rate of the volume of the equivalence class (see also \appxref{proof-thm3}). 

Although the diffusion process on the quotient space is well defined, it is not convenient to simulate it in the quotient space directly, due to the difficulty in representing this quite abstract space, which usually cannot be represented by Euclidean vectors.
We hence work towards deriving a diffusion process in the total space $\clM$, which can be simulated in the same way as the original diffusion process, 
while inheriting key features of a quotient-space diffusion process.

A key feature from \thmref{projected-diffusion} is that if using $\bfff' = \bfvv+\bfff$ where $\bfvv \in \Ker\pi_{*\bfxx} := \{\bfvv \in T_\bfxx \clM \mid \pi_{*\bfxx}(\bfvv) = \bfzro\}$ at each $\bfxx \in \clM$, then the corresponding SDE in \eqnref{original-sde} induces the same quotient-space diffusion \eqnref{projected-sde}. Indeed, vector components in this subspace are not really necessary, as it only induces a movement \emph{within} the equivalence class $\pi(\bfxx)$. 
Under the view of the illustration in \figref{manifold}(Right), this subspace is called the \emph{vertical space} $\clV_\bfxx := \Ker\pi_{*\bfxx}$. 
Correspondingly, its orthogonal completion in $T_\bfxx \clM$ under the Riemannian metric, referred to as the \emph{horizontal space} $\clH_\bfxx := \clV_\bfxx^\perp$, identifies vectors that leads to essential movements.
Since they form $T_\bfxx\clM$ by direct sum $T_\bfxx\clM = \clV_\bfxx \oplus \clH_\bfxx$, any tangent vector $\bfvv \in T_\bfxx\clM$ admits a unique decomposition: $\bfvv = \bfvv^\clV + \bfvv^\clH$, where the horizontal component $\bfvv^\clH$ identifies the essential component. 
We denote this correspondence as the horizontal projection $P_\bfxx(\bfvv) := \bfvv^\clH$ in $T_\bfxx \clM$.

With this construction, there is a unique correspondence from a quotient-space tangent vector $\bfuu \in T_\bfyy \clQ$ to a total-space horizontal tangent vector $\bfuut \in \clH_\bfxx$ at any $\bfxx \in \pi^{-1}(\bfyy)$ in the equivalence class, called the \emph{horizontal lift} of $\bfuu$ (\defref{horizontal-lift}).
It forms a $\clG$-equivariant vector field in $\clM$, and if $\bfuu = \pi_{*\bfxx}(\bfvv)$, its lift \raisebox{-0.20ex}{$\smash{\scriptstyle \widetilde{\pi_{*\bfxx}(\bfvv)}}$} is given by $P_\bfxx(\bfvv)$.
Since it is always horizontal, it preserves the gist of the quotient-space vector $\bfuu$ of an essential update direction, we can leverage this notion to pull the desired quotient-space diffusion \eqnref{projected-sde} back to the total space $\clM$, which admits a straightforward representation and simulation.

\begin{theorem}\label{thm:horizontal-lifted-sde}
%\vspace{-pt}
    The horizontal lift of \eqnref{projected-sde} has the following explicit expression:
    \begin{align}\label{eqn:lifted-sde}
        \dd\bfxxt_t = \lrparen*[\Big]{ P_{\bfxxt_t}(\bfff_t(\bfxxt_t))-\frac{\sigma_t^2}{2}\bfhht(\bfxxt_t) } \dd t + \sigma_t \ud \bfwwt_t,
        \quad \bfxxt_0\sim p_\prior,
    \end{align}
    where $P_\bfxx(\bfvv) := \bfvv^\clH$ is the horizontal projection in the tangent space of $\clM$, and $\bfhht$ is the horizontal lift of the mean curvature vector $\bfhh$, and $\bfwwt_t$ is the horizontal lift of the Wiener process of $\clQ$.
    \vspace{-2pt}
\end{theorem}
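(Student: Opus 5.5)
We take "horizontal lift of an SDE" in the standard sense, for a Riemannian submersion $\pi:\clM\to\clQ$: a process $\bfxxt_t$ on $\clM$ such that $\pi(\bfxxt_t)$ solves \eqnref{projected-sde} and every infinitesimal increment of $\bfxxt_t$ is horizontal. The cleanest route is the Stratonovich (Eells--Elworthy--Malliavin) formalism, because horizontal lifts commute with Stratonovich integration.

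\textbf{Step 1: rewrite in Stratonovich form.} Write \eqnref{projected-sde} in Stratonovich form using a local orthonormal frame $\{E_i\}$ of $\clQ$:
\[
\dd\bfyy_t = \bigl(b_t(\bfyy_t)\bigr)\dd t + \sigma_t E_i(\bfyy_t)\circ \dd B^i_t ,
\]
where $b_t$ equals $\pi_*\bfff_t-\frac{\sigma_t^2}{2}\bfhh$ minus the Itô-to-Stratonovich correction $\frac{\sigma_t^2}{2}\nabla_{E_i}E_i$. More invariantly, the Wiener process $\bfomega_t$ is the process whose generator is $\frac12\Delta_\clQ$.

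\textbf{Step 2: lift the Stratonovich equation.} Lift each vector field horizontally to get
\[
\dd\bfxxt_t=\tilde b_t\,\dd t+\sigma_t\tilde E_i\circ\dd B^i_t .
\]
By the chain rule for Stratonovich calculus, $\pi(\bfxxt_t)$ solves the Step 1 equation. Since $\pi_*\tilde X=X$ for every lifted field, uniqueness in law identifies $\pi(\bfxxt_t)$ with $\bfyy_t$. The lifted equation is also horizontal by construction.

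\textbf{Step 3: identify the terms of \eqnref{lifted-sde}.} The lift of $\pi_*\bfff_t$ at $\bfxx$ is $P_\bfxx(\bfff_t(\bfxx))$, as stated before the theorem; this uses $\clG$-equivariance to make the lift well defined along fibers. The lift of $\bfhh$ is by definition $\bfhht$.

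\textbf{Step 4: the main obstacle.} What remains is to show that converting $\sigma_t\tilde E_i\circ \dd B^i$ back to Itô form in $\clM$ yields exactly $\sigma_t\dd\bfwwt_t$ plus the lift of the Stratonovich correction, with no extra term. We will define $\bfwwt_t$ as the horizontal lift of $\bfomega_t$, i.e.\ the horizontal Brownian motion with generator $\frac12\sum_i(\tilde E_i\tilde E_i-\widetilde{\nabla_{E_i}E_i})$. We then use O'Neill's formula
\[
\nabla^\clM_{\tilde E_i}\tilde E_i=\widetilde{\nabla^\clQ_{E_i}E_i}+\tfrac12[\tilde E_i,\tilde E_i]^\clV=\widetilde{\nabla^\clQ_{E_i}E_i}.
\]
The vertical O'Neill term vanishes because of the symmetry of the $A$-tensor on the diagonal. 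So the Itô correction on $\clM$ is exactly the lift of the correction on $\clQ$, the corrections cancel, and we are left with \eqnref{lifted-sde}.

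\textbf{Initial law.} The initial condition $\bfxxt_0\sim p_\prior$ projects to $\pi_\#p_\prior$. Because the lift is only determined up to the starting point in the fiber, any choice consistent with that law gives a valid lift, and $p_\prior$ itself is such a choice.

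The step we expect to be delicate is this last one: getting the Itô/Stratonovich bookkeeping consistent with the paper's definitions of $\bfhh$ and the Wiener process. In particular, we must check that $\bfwwt_t$ is defined as the horizontal lift of $\bfomega_t$ and is not the horizontal projection of the ambient Wiener process, since those two differ precisely by the mean curvature of the fibers.
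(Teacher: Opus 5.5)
Your proposal is correct and follows essentially the paper's route. Both arguments use horizontal lifts of an orthonormal frame of $\clQ$ and write the lifted equation in Stratonovich form with all vector fields horizontal, which gives horizontality in the sense of Definition~\ref{def:sto-horizontal-lift}. Both also use the relation between $\nabla^\clM$ and $\nabla^\clQ$ for lifted fields (Proposition~\ref{prop:connection-quotient-space}, your O'Neill step) to express the generator of $\bfwwt_t$. The only difference is that the paper checks that $\pi(\bfxxt_t)$ has the law of $\bfyy_t$ by pushing the generator forward under $\pi_*$, whereas you use the pathwise Stratonovich chain rule.

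One correction to your closing caveat, reading ``horizontal projection of the ambient Wiener process'' as the per-step projection $P_{\bfxx}\ud\bfww_t$. In a flat total space (the setting of Theorem~\ref{thm:so3-diffusion}), $P_{\bfxx}\ud\bfww_t$ does coincide with $\bfwwt_t$. Both have generator $\frac12\sum_{j,k}P_{\bfxx}^{jk}\partial_j\partial_k$, because $(\nabla_{\bfee_i}\bfee_i)^\clV=\bfzro$ for horizontal $\bfee_i$, and that theorem relies on exactly this. What differs from $\bfomega_t$ by the mean-curvature drift $-\frac12\bfhh$ is instead the projection $\pi(\bfww_t)$ of the ambient Wiener process (Theorem~\ref{thm:projected-diffusion} with $\bfff_t=\bfzro$, $\sigma_t\equiv1$).
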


See \appxref{proof-thm2} for proof.
Note that this construction does not require $\clQ$ to be embedded in $\clM$, which is generally not possible.
Comparing \eqnref{original-sde} and \eqnref{lifted-sde}, the lifted process is not simply given by projecting the vector field and the Wiener process in \eqnref{original-sde}, but the additional vector field $\bfhht$ persists as the reflection of the spatial change rate of equivalence-class volume to compensate the disability to change the spread of mass within an equivalence class.
Importantly, \eqnref{lifted-sde} only has horizontal movements, which focuses on movement across equivalence classes only. 
As a result, the length of sampling paths is reduced. This is one of the key advantages over existing equivariant diffusion models, which still traverse within each equivalence class following a certain rule. 
Moreover, \thmref{horizontal-lifted-sde} indicates that \eqnref{lifted-sde} produces the same target distribution, guaranteeing the correctness of sampling.
A representative illustration is given in \figref{so2}.
These merits are formally stated as follows.
\begin{corollary}[Proof in \appxref{proof-thm2}]\label{cor:length}
    \itemone~The resulting random variable $\bfxxt_1$ by the lifted diffusion process \eqnref{lifted-sde} and the resulting $\bfxx_1$ by the original diffusion process \eqnref{original-sde} follow the same distribution in the total space: $p_{\bfxxt_1} = p_{\bfxx_1} = p_\target$. 
    \itemtwo~When $\sigma_t \equiv 0$ and starting from the same $\bfxx_0 \in \clM$, \eqnref{lifted-sde} leads to a shorter trajectory than \eqnref{original-sde} does.
    
\end{corollary}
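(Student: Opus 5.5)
For part (i), the plan is to combine Theorems~\ref{thm:projected-diffusion} and~\ref{thm:horizontal-lifted-sde}. By Theorem~\ref{thm:projected-diffusion}, $\pi(\bfxx_t)$ solves \eqnref{projected-sde} with initial law $\pi_\#p_\prior$. By construction of the horizontal lift in Theorem~\ref{thm:horizontal-lifted-sde}, $\pi(\bfxxt_t)$ solves the same SDE \eqnref{projected-sde}, and its initial law is again $\pi_\#p_\prior$ since $\bfxxt_0\sim p_\prior$. Uniqueness in law for SDEs on $\clQ$ then gives $\pi_\# p_{\bfxxt_1} = \pi_\# p_{\bfxx_1}$. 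This holds on the regular part of $\clQ$, where it is a manifold; the singular set has measure zero.

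It remains to upgrade equality on $\clQ$ to equality on $\clM$. The law $p_{\bfxx_1}$ is $\clG$-invariant, as noted in \secref{framework}, because $\bfff_t$ is equivariant, the Wiener process is invariant under the isometric action, and $p_\prior$ is invariant. For $p_{\bfxxt_1}$, the drift $P_\bfxx(\bfff_t)$ is equivariant because $P$ commutes with the isometric push-forward $(L_g)_*$, which preserves $\clV$ and hence $\clH$. The lift $\bfhht$ is equivariant, and the horizontal Wiener process has $\clG$-invariant transitions. So $p_{\bfxxt_1}$ is invariant as well. A $\clG$-invariant distribution is determined by its pushforward to $\clQ$: disintegrate along the fibres, so that the conditional on each orbit is the normalized invariant (Riemannian) measure on that orbit. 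Therefore $p_{\bfxxt_1}=p_{\bfxx_1}=p_\target$.

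I expect the main obstacle to be the invariance of the lifted Wiener process and the uniqueness argument. I would handle invariance by writing $\bfwwt_t$ locally as $\sum_i E_i\circ\ud B^i$ with a horizontal orthonormal frame $E_i$ lifted from $\clQ$. Lifts of a frame are equivariant, so the generator commutes with $L_g$.

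For part (ii), set $\sigma_t\equiv0$. The two curves $\bfxx_t$ and $\bfxxt_t$ start at the same point and satisfy $\pi(\bfxx_t)=\pi(\bfxxt_t)$ for all $t$, because they are deterministic solutions of the same ODE on $\clQ$. Write $\bfyy_t=\pi(\bfxx_t)$. At each time, $\pi_*\dot\bfxx_t=\pi_*\dot\bfxxt_t=\dot\bfyy_t$, and $\dot\bfxxt_t=P(\bfff_t)$ is the horizontal lift of $\dot\bfyy_t$. Moreover $\pi_*$ restricted to $\clH_\bfxx$ is an isometry onto $T_\bfyy\clQ$ (Riemannian submersion), so
\begin{align*}
\lVert\dot\bfxx_t\rVert^2=\lVert P_{\bfxx_t}(\bfff_t)\rVert^2+\lVert \bfff_t^\clV\rVert^2\ge\lVert\dot\bfyy_t\rVert^2=\lVert\dot\bfxxt_t\rVert^2 .
\end{align*}
Integrating over $t$ gives $\mathrm{len}(\bfxxt)\le\mathrm{len}(\bfxx)$. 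The inequality is strict whenever $\bfff_t$ has a nonzero vertical component on a set of times of positive measure. The statement ``shorter'' should be read in this sense, with equality only when $\bfff$ is already horizontal along the path.
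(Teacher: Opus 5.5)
Your proposal is correct. Part (i) follows the paper's outline: both processes induce the same law on $\clQ$, both laws on $\clM$ are $\clG$-invariant, and so they coincide. You also supply two justifications that the paper only asserts.

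\begin{itemize}
\item The paper derives the invariance of $p_{\bfxxt_1}$ from the lifted process having ``no vertical movement''. The operative reason is the one you give: $P_\bfxx(\bfff_t)$, $\bfhht$ and the horizontal diffusion part are all $\clG$-equivariant, because $(L_g)_*$ preserves both $\clV$ and $\clH$.
\item The paper simply states that an invariant law is determined by its push-forward to $\clQ$. You justify this by disintegrating along orbits, which uses compactness of $\clG$, as in the paper's standing assumption.
\end{itemize}

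Part (ii) takes a slightly different route. The paper writes $\bfxx_t = g_t\cdot\bfxxt_t$ and uses equivariance of $\bfff_t$ plus isometry of $L_{g_t}$ to replace $\Vert\bfff_t(\bfxx_t)\Vert$ by $\Vert\bfff_t(\bfxxt_t)\Vert$. It then splits orthogonally at $\bfxxt_t$. You instead split at $\bfxx_t$ and use that $\pi_*$ is an isometry on horizontal spaces, so both horizontal speeds equal the speed $\Vert\dot\bfyy_t\Vert$ of the common quotient curve.

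What each buys:
\begin{itemize}
\item Your pointwise speed comparison yields both the length inequality and the paper's energy form $\int_0^1\langle\cdot,\cdot\rangle\,\ud t$.
\item Your ODE-uniqueness argument for $\pi(\bfxx_t)=\pi(\bfxxt_t)$ makes explicit what the paper compresses into ``the vector fields differ only in a vertical movement''.
\item The paper's version avoids invoking the Riemannian-submersion isometry, but it needs the existence of $g_t$, which is equivalent to that same projection identity.
\end{itemize}

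One small inaccuracy: the Stratonovich sum $\sum_i E_i\circ\ud B^i$ has generator $\frac{1}{2}\sum_i E_i^2$, which is not the horizontal Laplacian. The horizontal lift of the Wiener process on $\clQ$ also carries the drift $-\frac{1}{2}\sum_i(\nabla^\clM_{E_i}E_i)^\clH$, as in the proof of \thmref{horizontal-lifted-sde}. By \propref{connection-quotient-space}, this drift is the horizontal lift of a (local) vector field on $\clQ$, so it is equivariant too. Your conclusion that the generator commutes with $L_g$ therefore stands.
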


\subsection{Special Case: the Shape Space} \label{sec:se3-case}
The general abstract constructions can be specified and give instructions to the motivating and representative case of molecular structure generation, where $\clM$ is the space of concatenated vectors of 3-dimensional coordinates of $N$ points, and $\clG$ is the special Euclidean group $\SE(3)$ composed of the 3-dimensional translation group $\bbT(3)$ and rotation group $\SO(3)$, representing \emph{rigid-body movements}.
An element of $\clM$ is structured as $\bfxx := [\xxv^{(1)}, \cdots, \xxv^{(N)}]$ (compactly, $[\xxv^{(n)}]_n$), where each $\xxv^{(n)} \in \bbR^3$, and the $\SE(3)$ group acts on $\bfxx$ by translating and rotating each $\xxv^{(n)}$.

Since $\bbT(3)$ is not compact, there does not exist a translational invariant distribution.
We (as well as many others \citep{yim2023se,lin2024equivariant}) hence represent the quotient space with respect to $\bbT(3)$ by the center-of-mass(CoM)-free subspace $\bbR^{3N}_\CoM := \{\bfxx\in\bbR^{3N} \mid \frac{1}{N} \sum_{n=1}^N \xxv^{(n)}=\zrov\}$, and consider the $\SO(3)$ action on its ``purified'' version $\bbR^{3N}_\CoMcirc$ (removing negligible degenerate cases; see \appxref{r3N-se3-construction}).
The resulting quotient space $\clQ := \bbR^{3N}_\CoMcirc/\SO(3)$, as the concrete construction for $\bbR^{3N}/\SE(3)$, is a smooth manifold. 
Each element in $\clQ$ represents $N$-point configurations that are equivalent under rigid-body movements; therefore, $\clQ$ is regarded as the ``\emph{shape space}'' reflecting essentially different $N$-point configurations that differ by deformations.
As the manifold $\bbR^{3N}_\CoMcirc$ can be embedded in a Euclidean space, and structures of $\SO(3)$ are well-established (\appxref{r3N-se3-construction}), we can derive the explicit expressions for the lifted quotient-space diffusion process (\eqnref{lifted-sde}):

\begin{theorem}\label{thm:so3-diffusion}
\vspace{-2pt}    
    Assume $\bfxx_t$ is a diffusion process in a manifold $\clM$ that can be embedded into a Euclidean space, specified by the SDE:
        $\dd \bfxx_t = \bfff_t(\bfxx_t) \ud t + \sigma_t \ud \bfww_t$,
    where $\bfff_t(\bfxx_t)$ is $\clG$-equivariant and $\bfxx_0 \sim p_\prior$ is $\clG$-invariant.
    Then the lifted quotient-space diffusion process on $\clQ := \clM/\clG$ onto the total space $\clM$ is given by:
    {\belowdisplayskip=2pt
    \abovedisplayskip=0pt
    \begin{align}\label{eqn:so3-lifted-sde}
        \dd\bfxxt_t = \lrparen*[\Big]{ P_{\bfxxt_t}(\bfff_t(\bfxxt_t))-\frac{\sigma_t^2}{2}\bfhht(\bfxxt_t) } \dd t +\sigma_t  P_{\bfxxt_t} \ud \bfww_t,
        \quad \bfxxt_0 \sim p_\prior,
    \end{align}}%
    where $P_{\bfxxt_t} \ud \bfww_t$ can be simulated by projecting the infinitesimal Gaussian sample using $P_{\bfxxt_t}$ in each step.
    Particularly, for the shape space $\clQ := \bbR^{3N}/\SE(3) = \bbR^{3N}_\CoMcirc/\SO(3)$, the horizontal projection $P$ and the $\bfhht$ vector field have the following explicit expressions:
    for any $\bfxx = [\xxv^{(n)}]_n \in \bbR^{3N}_\CoMcirc$, which is CoM-free, and any $\bfvv = [\vvv^{(n)}]_n \in T_\bfxx \bbR^{3N}_\CoMcirc$, which is momentum-free,
    {\belowdisplayskip=-0pt
    \begin{align}
        & P_\bfxx(\bfvv) = \lrbrack*[\Big]{ \vvv^{(n)} - \bfK(\bfxx)^{-1} \lrparen*[\Big]{ \sum\nolimits_{n'=1}^{N}  \xxv^{(n')} \times \vvv^{(n')} } \times \xxv^{(n)} }_\uppern, \text{ and} \label{eqn:so3-projection} \\
        & \bfhht(\bfxx) = \lrbrack{ \bfK(\bfxx)^{-1} \xxv^{(n)} \!-\! \tr(\bfK(\bfxx)^{-1}) \xxv^{(n)} }_\uppern, \; \text{where } \bfK(\bfxx) \!:=\! \sum_{n=1}^{N} \! \Vert\xxv^{(n)}\Vert^2 \bfI -\! \sum_{n=1}^{N} \xxv^{(n)} {\xxv^{(n)}}\trs \!\!\! \in\bbR^{3\times3},
    \end{align}}%
    and ``$\times$'' denotes the cross product in $\bbR^3$. 
    \vspace{-4pt}
\end{theorem}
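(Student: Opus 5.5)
The plan has two parts: first the general embedded case, then the explicit shape-space formulas.

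\textbf{General embedded case.} I start from \thmref{horizontal-lifted-sde}, which already gives the drift $P(\bfff_t) - \frac{\sigma_t^2}{2}\bfhht$. What remains is to identify the horizontal lift $\bfwwt_t$ of the Wiener process $\bfomega_t$ of $\clQ$ with $P_{\bfxxt_t}\ud\bfww_t$. I would work with generators. When $\clM$ is embedded in $\bbR^D$, the Stratonovich (or small-step) process $\ud\bfxx = P_\bfxx\circ\ud\bfww$ has generator $\frac12\sum_i \bfee_i^\clH\bfee_i^\clH$ in terms of projected coordinate fields. For a $\clG$-invariant function $\phi=\psi\circ\pi$, this reduces to the horizontal Laplacian, which equals $(\Delta_\clQ\psi)\circ\pi$ by the Riemannian submersion identity. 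The correction terms from the embedding, coming from the derivatives of $P$, are exactly what the It\^o-vs-projected-step convention absorbs. The same second-fundamental-form argument already used for \thmref{horizontal-lifted-sde} supplies this. Since the lifted process stays horizontal and projects to a Brownian motion on $\clQ$, uniqueness of horizontal lifts identifies it with $\bfwwt_t$. The claim that it ``can be simulated by projecting each Gaussian increment'' then follows from this Stratonovich-type interpretation.

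\textbf{Horizontal projection on the shape space.} Here $\clM=\bbR^{3N}_\CoMcirc$ carries the Euclidean metric, and the vertical space at $\bfxx$ is the $\SO(3)$ orbit tangent $\{[\boldsymbol\omega\times\xxv^{(n)}]_n : \boldsymbol\omega\in\bbR^3\}$. A vector $\bfvv$ is horizontal iff it is orthogonal to every such vector. Using $\sum_n\lrangle{\vvv^{(n)},\boldsymbol\omega\times\xxv^{(n)}}=\lrangle{\boldsymbol\omega,\sum_n\xxv^{(n)}\times\vvv^{(n)}}$, this means the angular momentum $\sum_n\xxv^{(n)}\times\vvv^{(n)}$ vanishes. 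To project, I write $P_\bfxx(\bfvv)=\bfvv-[\boldsymbol\omega\times\xxv^{(n)}]_n$ and solve for $\boldsymbol\omega$:
\[
\sum_n\xxv^{(n)}\times(\boldsymbol\omega\times\xxv^{(n)})=\bfK(\bfxx)\boldsymbol\omega .
\]
This gives $\boldsymbol\omega=\bfK^{-1}\sum_n\xxv^{(n)}\times\vvv^{(n)}$, which is \eqnref{so3-projection}. Invertibility of the inertia tensor $\bfK$ is exactly the non-degeneracy built into $\CoMcirc$. Because subtracting a rotational field preserves zero total momentum when $\bfxx$ is CoM-free, the result stays in $T_\bfxx\bbR^{3N}_\CoM$.

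\textbf{Mean curvature term.} For $\bfhht$, I use the characterization noted after \thmref{projected-diffusion}: $\bfhh$ is the mean curvature of the fibres, i.e.\ minus the horizontal gradient of the log orbit volume. The orbit map $\SO(3)\to\clM$, $R\mapsto R\bfxx$, has pulled-back metric on $\so(3)\cong\bbR^3$ given by $\boldsymbol\omega\mapsto\sum_n\|\boldsymbol\omega\times\xxv^{(n)}\|^2=\boldsymbol\omega\trs\bfK\boldsymbol\omega$. Hence the orbit volume is proportional to $\sqrt{\det\bfK(\bfxx)}$. Then
\[
\bfhht=-\tfrac12 P\nabla\log\det\bfK=-\tfrac12 P\,[\,2\tr(\bfK^{-1})\xxv^{(n)}-2\bfK^{-1}\xxv^{(n)}]_n ,
\]
using $\partial\bfK/\partial\xxv^{(n)}$. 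Finally I would check that $[\bfK^{-1}\xxv^{(n)}]_n$ and $[\tr(\bfK^{-1})\xxv^{(n)}]_n$ are already horizontal. This holds because $\sum_n\xxv^{(n)}\times\bfA\xxv^{(n)}=0$ for any symmetric $\bfA$ commuting with $\sum_n \xxv^{(n)}{\xxv^{(n)}}\trs$, and both $\bfK^{-1}$ and $\tr(\bfK^{-1})\bfI$ qualify. The projection is therefore the identity, yielding the stated $\bfhht$ up to the sign convention of \thmref{projected-diffusion}.

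\textbf{Main obstacle.} I expect the hardest part to be the sign and convention bookkeeping. This includes the It\^o-vs-Stratonovich correction in $P\ud\bfww$, and whether the $-\frac{\sigma^2}{2}\bfhh$ sign matches ``$-\nabla\log\Vol$''. I would fix both by testing on the $\SO(2)$ example of \figref{so2}: there the radial lifted process must be a 2D Bessel process with drift $\frac{\sigma^2}{2r}$, and the same computation gives $\bfK=r^2$.
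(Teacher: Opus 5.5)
Your derivation of $P_\bfxx$ is the paper's. Your treatment of $\bfhht$ is also sound: orbit volume $\propto\sqrt{\det\bfK(\bfxx)}$, $\bfhht=-\frac12\nabla\log\det\bfK$, and horizontality via $[\bfK^{-1},\sum_n\xxv^{(n)}{\xxv^{(n)}}^{\top}]=0$. The paper derives $\bfhht=-\frac12\nabla_\bfxx\log\det\bfG^\bfxx$ from the first variation of volume and left-invariance rather than citing it, and your $\SO(2)$ Bessel check does fix the sign correctly. The genuine gap is in the general part. There you must show that $\bfwwt_t$ equals $P_{\bfxxt_t}\,\dd\bfww_t$ \emph{in the sense the theorem asserts}, namely simulated by projecting the Gaussian increment at the current point.

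Your generator-plus-uniqueness argument, done carefully, proves the Stratonovich statement. The process $\dd\bfxx=P_\bfxx\circ\dd\bfww$ is driven by the horizontal fields $P\partial_k$, so it is horizontal. On basic functions its generator is $\frac12\Delta^\clQ$, so it is the lift of $\bfomega_t$.

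However, an Euler--Maruyama step $\bfxx\mapsto\bfxx+\sigma_tP_\bfxx\Delta\bfww$ converges to the \emph{It\^o} SDE, not the Stratonovich one. Your phrase ``Stratonovich (or small-step)'' conflates two processes that differ by the drift $\frac{\sigma_t^2}{2}\sum_k\nabla_{P\partial_k}(P\partial_k)=\frac{\sigma_t^2}{2}\sum_{i=1}^{M-G}(\nabla^\clM_{\bfee_i}\bfee_i)^\clV$, for a horizontal orthonormal frame $\{\bfee_i\}$. There is no second-fundamental-form contribution here, since $\clM$ is an open subset of the linear space $\bbR^{3N}_\CoM$. This term is not ``absorbed'' by any convention. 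If it were nonzero, the projected sampler would produce a process that moves within orbits, i.e.\ not the horizontal lift the theorem claims.

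The missing idea is that this term vanishes. Apply O'Neill's formula (\propref{connection-quotient-space}) to the horizontal lifts $\bfee_i$ of an orthonormal frame on $\clQ$: $(\nabla^\clM_{\bfee_i}\bfee_i)^\clV=\frac12(\clL_{\bfee_i}\bfee_i)^\clV=0$. This is exactly the key step of the paper's proof. It turns the second-order part $\frac{\sigma_t^2}{2}\sum_i(\bfee_i^2-(\nabla^\clM_{\bfee_i}\bfee_i)^\clH)$ of the lifted generator into $\frac{\sigma_t^2}{2}\sum_i(\bfee_i^2-\nabla^\clM_{\bfee_i}\bfee_i)$. In Euclidean coordinates the first-order terms then cancel, leaving $\frac{\sigma_t^2}{2}\sum_{j,k}P^{jk}\partial_j\partial_k$, the It\^o generator for coefficient $\sigma_tP$ (using $PP^{\top}=P$).

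Your planned $\SO(2)$ test cannot substitute for this. There $P=\hat{r}\,\hat{r}^{\top}$ with $\hat{r}$ independent of $r$, so the correction is identically zero in that example. The test would pass regardless of what happens for $\SO(3)$.
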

\vspace{-0pt}

See \appxref{proof-thm3} for proof. 
On $\bbR^{3N}_\CoMcirc$ with $\SO(3)$ symmetry, a vertical vector is induced from an infinitesimal 
$\SO(3)$ action, leading to a total angular momentum describing a rigid-body rotation without deformation. Correspondingly, a horizontal vector leads to a movement with zero total angular momentum.
As a projection onto the horizontal space, $P_\bfxx(\bfvv)$ in \eqnref{so3-projection} essentially removes the total angular momentum of $\bfvv$, leaving only a deformation without rigid-body rotation; see the end of \appxref{proof-thm3} for more physical interpretations.
This is analogous to the conventional treatment for 
$\bbT(3)$ symmetry by removing the total (linear) momentum of $\bfvv$. 
Together, the lifted process only deforms the point cloud (molecule) without any rigid-body movement, exactly corresponding to a movement on the shape space $\bfQ$.
With the correction of the additional $\bfhht$ term (which is horizontal by definition), the process in \eqnref{so3-lifted-sde} produces the same target distribution $p_\target$ (by \corref{length}).

\subsection{Training and Sampling}
\label{sec:practice}
With the general recipe for constructing a quotient-space diffusion process from a conventional diffusion process in the same space, we can now develop training and sampling methods for a quotient-space diffusion model.
We consider the Euclidean total space and a Gaussian prior $p_\prior=\clN(\bfzro,\bfI)$. A more general case is shown in \appxref{general-method}.

We start from the general diffusion process \eqnref{diffusion-sde} of conventional diffusion model to generate $p_\target = p_1$ from $p_\prior = p_0$. With a Gaussian prior and noting \eqnref{v=D}, the effective drift term $\bfff_t(\bfxx_t) = \bfvv(\bfxx_t,t) + \eta_t \bfss(\bfxx_t,t)$ can be expressed using the denoising model $\bfD_\theta(\bfxx_t,t)$ under an affine transformation.
The quotient-space diffusion is then given by \eqnref{so3-lifted-sde}.

\textbf{Training objective.}
With the help of the horizontal projection $P_\bfxx(\bfvv)$, \eqnref{so3-lifted-sde} indicates that only the projected component of $\bfff_t$ matters, and $\bfff_t$ is allowed to produce an arbitrary output on the vertical space.
Since $\bfff_t$ and the $\bfD_\theta(\cdot,t)$ model differ only in an affine transformation and $P_\bfxx$ is linear, the same conclusions apply to $\bfD_\theta(\cdot,t)$. Therefore, we can modify the original training objective in \eqnref{objective-D} to only optimize the projected component:
\begin{align}
  \clL(\theta) := \bbE_{p(t)} w(t) \bbE_{p(\bfxx_1,\bfxx_t)} \Vert P_{\bfxx_t}(\bfD_\theta(\bfxx_t,t)-\bfxx_1)\Vert^2,
  \label{eqn:objective-projection}
\end{align}
where all the time-step weights are subsumed into $w(t)$.
We can see that $\bfD_\theta + \bfvv^\clV$ has the same loss value with $\bfD_\theta$, where $\bfvv^\clV$ is an arbitrary vertical vector.
So the model does not need to learn \emph{anything} on the vertical space, corresponding to movement within each equivalence class.
This effect fully leverages the symmetry in \emph{reducing learning difficulty}, as does AF3 alignment, hence could converge faster than using GeoDiff alignment in training, meanwhile AF3 alignment does not have a compatible sampler with the modified learning target (\tabref{comparison}).
This is empirically observed as shown in \figref{acc}(Left) in \appxref{expm-acceleration}.

\textbf{ODE sampler.}
The original ODE sampler in \eqnref{diffusion-ode} corresponds to the case with $\sigma_t \equiv 0$. By \eqnref{so3-lifted-sde}, the corresponding quotient-space diffusion process in the same total space is given by:
$\frac{\dd\bfxx_t}{\dd t} = P_{\bfxx_t}(\bfvv_\theta(\bfxx_t,t))$, where $\bfvv_\theta(\bfxx_t,t)$ is given by \eqnref{v=D}. 

\textbf{SDE sampler.}
Comparing the original SDE in \eqnref{diffusion-sde} and \eqnref{so3-lifted-sde}, the lifted quotient-space SDE is:
\begin{align}
    \dd\bfxx_t = P_{\bfxx_t} \lrparen*[\big]{ \bfvv_\theta(\bfxx_t,t) + \eta_t \bfss_\theta(\bfxx_t,t) } \ud t + \eta_t \bfhht(\bfxx_t) \ud t + \sqrt{2 \eta_t} P_{\bfxx_t} \ud \bfww_t,
    \label{eqn:quotient-space-SDE-sampler}
\end{align}
where $\bfss_\theta(\bfxx_t,t) = -\frac{\bfxx_t-\beta_t\bfD_\theta(\bfxx_t,t)}{\alphah_t^2}$ from \secref{euclidean-diffusion}.
Details are summarized in Alg.~\ref{alg:training} and~\ref{alg:sampling-appx} in \appxref{general-method}.

\subsection{Comparative Analysis on Existing Treatments for Symmetry} \label{sec:comparison}

We now make a detailed analysis on existing methods that handle symmetry, and verify the conclusions in \tabref{comparison}. In contrast to our quotient-space diffusion, we find that they either have not fully leveraged the symmetry to reduce learning difficulty, or do not have a proper sampler.

\textbf{Conventional equivariant diffusion models and data augmentation.}
The common treatment of equivariant diffusion model guarantees the invariance of $p(\bfxx_1)$ (\secref{framework}, Para.~1).
This can be implemented by either augmenting data samples by applying randomly chosen group actions, mimicking sampling from the invariant distribution, or using an invariant prior distribution and an equivariant architecture securing the equivariance of the model. 
Training remains the same as 
\eqnref{objective-D}, and the standard samplers by \eqnsref{diffusion-ode,diffusion-sde} remain valid.
For each value of $\bfxx_t$, this objective amounts to let the model minimize $\lrVert{\bfD_\theta(\bfxx_t, t) - \bfxx_1}^2$ averaged over $\bfxx_1 \sim p(\bfxx_1 | \bfxx_t)$,
so the optimal solution is the conditional expectation $\bbE[\bfxx_1 | \bfxx_t]$.

\figref{comparison} shows an example 
for generating the structure of a diatomic molecule,
where the target distribution $p(\bfxx_1)$ concentrates on a single structure $\bfxx^\star$ up to a uniform random orientation (a). For a given $\bfxx_t$, samples of $p(\bfxx_1 | \bfxx_t)$ are $\bfxx^\star$ structures with orientations distributed around the orientation of $\bfxx_t$ (b).
Indeed, an $\bfxx_1$ sample more closely oriented with $\bfxx_t$ would have a higher probability to produce the given $\bfxx_t$, 
so there is a specific orientation correspondence between the learning target $\bbE[\bfxx_1 | \bfxx_t]$ and $\bfxx_t$.
So the model is still asked to learn a correspondence in the equivalent degrees of freedom (DoFs) (\ie, orientations),
in contrast to the quotient-space case in \eqnref{objective-projection} where the model output is unconstrained in the vertical space (\ie, total angular momentum).
Moreover, the $\bfxx_1$ samples are not all in the orientation of $\bfxx_t$ because $\bfxx^\star$ in other orientations can also generate this $\bfxx_t$, so the model learns the orientation correspondence from samples with a variance, leading to another aspect of learning difficulty.

\textbf{GeoDiff alignment.}
To reduce the learning difficulty, some heuristic treatments are proposed based on alignment. 
A representative treatment in GeoDiff \citep{xu2022geodiff} modifies the training loss into: $\bbE_{p(\bfxx_1, \bfxx_t)} \lrVert{\bfD_\theta(\bfxx_t, t) - \clA_{\bfxx_t}(\bfxx_1)}^2$,
where the \emph{alignment operation} is defined as:
{\abovedisplayskip=1pt
\belowdisplayskip=1pt
\begin{align}
  \clA_\bfyy(\bfxx) := \argmin\nolimits_{\bfxx' \in \{g \cdot \bfxx \mid g \in \clG\}} \Vert \bfxx' - \bfyy \Vert^2.
  \label{eqn:def-align}
\end{align}}%
The learning task asks $\bfD_\theta(\bfxx_t, t)$ to fit $\clA_{\bfxx_t}(\bfxx_1)$ samples where $\bfxx_1 \sim p(\bfxx_1 | \bfxx_t)$, so the learning target becomes $\bbE[\clA_{\bfxx_t}(\bfxx_1) | \bfxx_t]$.
As illustrated by \figref{comparison}(c), all the $\clA_{\bfxx_t}(\bfxx_1)$ samples 
coincide with the $\bfxx^\star$ structure in the orientation of $\bfxx_t$.
So the model learns $\bbE[\clA_{\bfxx_t}(\bfxx_1) | \bfxx_t]$ from samples with no variance in the equivalent DoFs (\ie, orientation), reducing certain learning difficulty. 
Nevertheless, this target still requires the model to learn a specific orientation correspondence. 

A caveat of this treatment is that using the conventional diffusion samplers is no longer valid, as they require a $\bbE[\bfxx_1 | \bfxx_t]$ model, which is different (not just in orientation) from 
$\bbE[\clA_{\bfxx_t}(\bfxx_1) | \bfxx_t]$ since $\SO(3)$ is not a linear space.
\figref{comparison}(b,c) illustrates this difference: $\bbE[\bfxx_1 | \bfxx_t]$ averages diversely oriented $\bfxx^\star$ structures, resulting in a different shape than $\bfxx^\star$ (the bond is shorter),
while $\bbE[\clA_{\bfxx_t}(\bfxx_1) | \bfxx_t]$ has the same shape as $\bfxx^\star$ (and is oriented following $\bfxx_t$).

\begin{figure}[t!]
  \vspace{-8pt}
  \centering
  \centering  
  \includegraphics[width=0.95\textwidth]{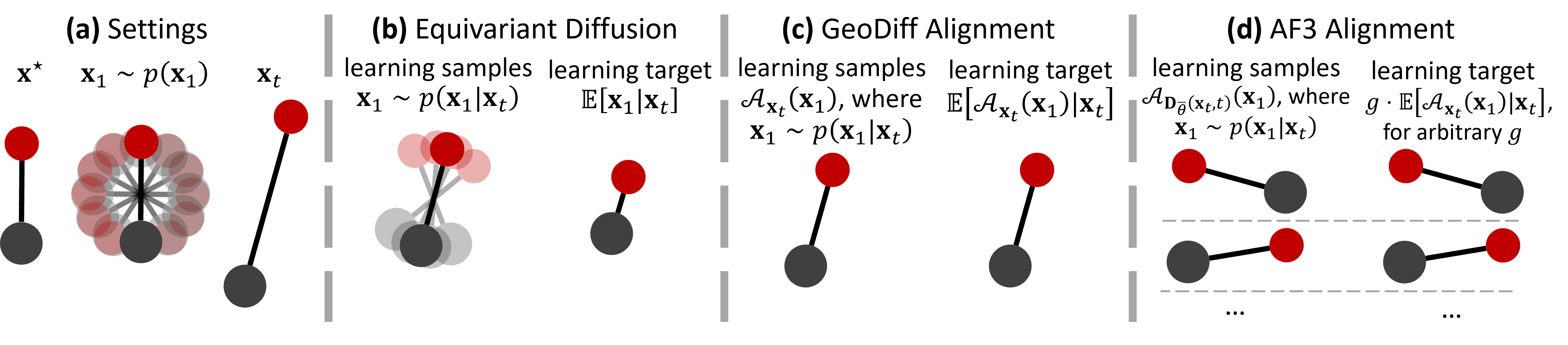}
  \vspace{-10pt}
  \caption{\looseness=-1 Illustration of denoising-model learning target using conventional equivariant diffusion, GeoDiff alignment, and AlphaFold~3 (AF3) alignment.
  \itema~The example considers the structure distribution $p(\bfxx_1)$ of a diatomic molecule, which concentrates on a single structure $\bfxx^\star$ up to a uniform random orientation. A $\bfxx_t$ sample is produced by scaling and adding noise to a $\bfxx_1$ sample.
  \itemb~For the given $\bfxx_t$, equivariant diffusion training asks the model to match $\bfxx_1$ samples that distribute with a variance in orientation (the equivalent DoFs). The learning target $\bbE[\bfxx_1 | \bfxx_t]$ has an orientation correspondence to $\bfxx_t$, and is not in the same shape as $\bfxx^\star$ (the bond is shorter).
  \itemc~Using GeoDiff alignment, all learning samples coincide with $\bfxx^\star$ in the orientation of $\bfxx_t$ (no variance in equivalent DoFs), so is the learning target $\bbE[\clA_{\bfxx_t}(\bfxx_1) | \bfxx_t]$ (has orientation correspondence), which is different from $\bbE[\bfxx_1 | \bfxx_t]$ hence mismatches conventional diffusion samplers.
  \itemd~Using AlphaFold~3 (AF3) alignment, all learning samples coincide with $\bfxx^\star$ in the arbitrary orientation of model output (no variance in equivalent DoFs), so is the learning target, which hence does not ask the model to learn an orientation correspondence to $\bfxx_t$, but also mismatches conventional diffusion samplers.
  }
  \vspace{-10pt}
  \label{fig:comparison}
\end{figure}

\textbf{AF3 alignment.}
Alphafold~3 (AF3) \citep{abramson2024accurate} introduces another alignment treatment by aligning the $\bfxx_1$ samples towards the model output:
  $\bbE \lrVert{\bfD_\theta(\bfxx_t, t) - \clA_{\bfD_{\thetab}(\bfxx_t,t)}(\bfxx_1)}^2$,
where $\thetab$ is treated constant in optimization.
This loss function allows the model output to vary by an arbitrary group action (\eg, rotation) (\appxref{related}), hence removes the need to learn a specific target in the equivalent DoFs.
Up to this DoF, the learning target is the same as that of GeoDiff $\bbE[\clA_{\bfxx_t}(\bfxx_1) | \bfxx_t]$, since all the $\bfxx_1$ samples are averaged after aligned with the same reference. 

Nevertheless, in the sampling process, the arbitrariness in the equivalent DoFs (\eg, orientation) of the model output relative to $\bfxx_t$ leads to an essential arbitrariness 
in the vector field $\bfvv_\theta(\bfxx_t,t)$ through \eqnref{v=D};
therefore, there is no guarantee of recovering the target distribution using conventional diffusion samplers.
This problem is also noted by Boltz-1 \citep{wohlwend2025boltz}, which proposes to align the prediction $\bfD_\theta(\bfxx_t,t)$ towards $\bfxx_t$ in the sampling process. As the AF3 target is the same as that of GeoDiff up to an arbitrary rotation, this amounts to using the GeoDiff model 
for sampling, which still cannot guarantee producing the target distribution as discussed above.

In contrast, our \emph{quotient-space diffusion} in \eqnref{so3-lifted-sde} guarantees a valid sampling process through a solid deduction (\corref{length}), while the horizontal projection $P_\bfxx$ allows the model to have an arbitrary total angular momentum hence reduces learning difficulty.
\tabref{comparison} summarizes the conclusions.

\vspace{-4pt}
\section{Experiments} \label{sec:expm}
\vspace{-4pt}

To show the empirical advantages of our quotient-space diffusion model in real-world applications, we consider the molecular structure generation and protein structure generation tasks, both of which models $\bbR^{3N}$ distributions with $\SE(3)$ symmetry.
See \appxref{experiments} for experimental details.

\subsection{Molecular Structure Generation}

\textbf{Datasets.} Molecular structure generation requires generating the 3D structure of a molecule given its molecular graph. We consider the GEOM-QM9 and GEOM-DRUGS datasets \citep{axelrod2022geom}, which provide a structure ensemble by metadynamics in CREST \citep{pracht2024crest} for each molecule.
We follow the same dataset treatments as \citet{hassan2024flow}.

\begin{table}[t]
\centering
\vspace{-8pt}
\caption{
The effect of the quotient-space diffusion framework for molecular structure generation on the GEOM-QM9 and the GEOM-DRUGS datasets using the ET-Flow($\SO(3)$) and ET-Flow($\rmO(3)$) architectures.
Best results are in \textbf{bold}. Best results using the same architecture are \underline{underlined}.}
\label{tab:structure_results_all}
\vspace{-0.3cm}
\resizebox{0.93\textwidth}{!}{%
\begin{tabular}{clcccccccc}
\toprule
\multirow{3}{*}{Datasets} & \multirow{3}{*}{Methods} & \multicolumn{4}{c}{Recall} & \multicolumn{4}{c}{Precision} \\
\cmidrule(lr){3-6} \cmidrule(lr){7-10}
& & \multicolumn{2}{c}{Coverage (\%) $\uparrow$} & \multicolumn{2}{c}{AMR (\AA) $\downarrow$} & \multicolumn{2}{c}{Coverage (\%) $\uparrow$} & \multicolumn{2}{c}{AMR (\AA) $\downarrow$} \\
& & mean & median & mean & median & mean & median & mean & median \\
\midrule
\multirow{9}{*}{\tabincell{c}{GEOM-QM9 \\ ($0.5$ \AA{} RMSD \\ for coverage)}}
& CGCF & 69.47 & 96.15 & 0.425 & 0.374 & 38.20 & 33.33 & 0.711 & 0.695 \\
& GeoDiff & 76.50 & \textbf{100.00} & 0.297 & 0.229 & 50.00 & 33.50 & 1.524 & 0.510 \\
& GeoMol & 91.50 & \textbf{100.00} & 0.225 & 0.193 & 87.60 & \textbf{100.00} & 0.270 & 0.241 \\
& Torsional Diff. & 92.80 & \textbf{100.00} & 0.178 & 0.147 & 92.70 & \textbf{100.00} & 0.221 & 0.195 \\
& MCF & 95.0 & \textbf{100.00} & 0.103 & 0.044 & 93.7 & \textbf{100.00} & 0.119 & 0.055 \\
\cmidrule{2-10}
& ET-Flow($\SO(3)$) & 95.98 & \textbf{100.00} & 0.076 & 0.030 & 92.10 & \textbf{100.00} & 0.110 & 0.047 \\
& + GeoDiff alignment & 95.71 & \textbf{100.00} & 0.085 & 0.040 & \textbf{95.20}&\textbf{100.00} & 0.098 & 0.050\\
& + AF3 alignment & 92.67& \textbf{100.00}&0.131 &0.070 &84.38 & \textbf{100.00}&0.205 &0.146 \\
& \textbf{+ Quotient-space diffusion} & \textbf{96.40} & \textbf{100.00} & \textbf{0.069} & \textbf{0.024} & 93.30 & \textbf{100.00} & \textbf{0.096} & \textbf{0.036} \\
\midrule
\multirow{14}{*}{\tabincell{c}{GEOM-DRUGS \\ ($0.75$ \AA{} RMSD \\ for coverage)}}
& GeoDiff & 42.10 & 37.80 & 0.835 & 0.809 & 24.90 & 14.50 & 1.136 & 1.090 \\
& GeoMol & 44.60 & 41.40 & 0.875 & 0.834 & 43.00 & 36.40 & 0.928 & 0.841 \\
& Torsional Diff. & 72.70 & 80.00 & 0.582 & 0.565 & 55.20 & 56.90 & 0.778 & 0.729 \\
& MCF - S (13M) & 79.4 & 87.5 & 0.512 & 0.492 & 57.4 & 57.6 & 0.761 & 0.715 \\
& MCF - B (62M) & 84.0 & 91.5 & 0.427 & 0.402 & 64.0 & 66.2 & 0.667 & 0.605 \\
& MCF - L (242M) & \textbf{84.7} & \textbf{92.2} & \textbf{0.390} & \textbf{0.247} & 66.8 & 71.3 & 0.618 & 0.530 \\
\cmidrule{2-10}
& ET-Flow($\rmO(3)$) (8.3M) & 79.53 & 84.57 & 0.452 & 0.419 & \textbf{74.38} & \textbf{81.04} & \textbf{0.541} & \textbf{0.470} \\
& + reproduction& 78.94 & 84.24  & 0.489 & 0.472 & 66.24 & 70.42 & 0.651 & 0.595 \\
& \textbf{+ Quotient-space diffusion} & \underline{79.86} & \underline{85.71} &\underline{0.459} &\underline{0.433} & 72.70 & 79.63 & 0.565 & 0.501 \\
\cmidrule{2-10}
& ET-Flow($\SO(3)$) (9.1M) & 78.18 & 83.33 & 0.480 & 0.459 & 67.27 & 71.15 & 0.637 & 0.567 \\
& + reproduction & 74.91 & 80.90 &	0.541&	0.515 	& 60.33	& 62.71	&0.724	& 0.665 \\
& + GeoDiff alignment & 75.11& 80.74& 0.545&0.526 & 59.58&60.48 & 0.734&0.678 \\
& + AF3 alignment & 71.66 & 76.09 & 0.572& 0.570& 52.21 & 50.00&0.828 &0.793 \\
& \textbf{+ Quotient-space diffusion} & \underline{78.50} & \underline{84.20}&\underline{0.477} & \underline{0.455}& \underline{67.35} & \underline{71.42}&\underline{0.635} & \underline{0.563}\\
\bottomrule
\end{tabular}%
}
\vspace{-10pt}
\end{table}

\textbf{Settings.} We follow the settings of \citet{hassan2024flow}, adopting the equivariant graph Transformer architecture ET-Flow($\SO(3)$) with Gaussian prior distribution on GEOM-QM9, and the ET-Flow($\rmO(3)$) and ET-Flow($\SO(3)$) architectures with the harmonic prior on GEOM-DRUGS \citep{volk2023alphaflow}.
Following existing convention, 
we report results in metrics based on RMSD, 
including Coverage (the ratio of test samples lying within a certain RMSD from a reference sample) and Average (over test samples) Minimum (over reference samples) RMSD (AMR), where ``recall/precision'' takes test samples from dataset/generated and reference samples from generated/dataset. 

\textbf{Results.} As shown in \tabref{structure_results_all}, our quotient-space diffusion framework consistently outperforms prior methods and alignment-based treatments in terms of generation quality,\footnote{We reproduce the results using the released configurations: \url{https://github.com/shenoynikhil/ETFlow}. Due to changes in the data processing pipeline, our reproduced results do not exactly match those reported in the original paper.}
due to its merits to reduce learning difficulty by leveraging the symmetry while guaranteeing a correct sampler.
On both datasets, our framework significantly improves over the vanilla ET-Flow and outperforms alignment-based methods, and even surpasses strong baselines such as MCF \citep{wang2023swallowing} on GEOM-QM9 and achieves competitive precision results with the much larger MCF-L (242M) model \citep{wang2023swallowing} on GEOM-DRUGS.
The alignment-based methods often even degrade the performance, due to the incompatibility between their new learning targets and the samplers.
\appxref{expm-acceleration} verifies the faster training by quotient-space diffusion, and the advantages under other sampling settings.

\subsection{Protein Structure Generation}

\textbf{Settings.} To demonstrate the advantage of our quotient-space diffusion framework for larger and more relevant molecules,
we evaluate it on the protein structure generation task and compare it with the state-of-the-art Prote\'ina model \citep{geffner2025proteina}.
As a basis for protein design, the task is to learn the aggregated, unconditional (not even on an amino acid sequence) distribution of backbone structures (\ie, coordinates of amino-acid residues) of all valid proteins.
For the quotient-space and alignment-based models, we select the most efficient Prote\'ina architecture version $\clM_{\text{FS}}^{\text{small}}$, a 60M-parameter Transformer, and train it from scratch on the FoldSeek AFDB clusters dataset $\clD_{\text{FS}}$.
For evaluation, all of our trained models and the officially released Prote\'ina checkpoints generate samples using 400 steps with self-conditioning.
We adopt both the structure-level metric of designability, 
and distributional metrics of FPSD, fS and fJSD measuring the mismatch between generated and reference structure distributions (\citet{geffner2025proteina}; \appxref{protein_evaluation}).\footnote{
Due to a known bug in a previous version of FoldSeek \citep[Appx.~B]{daras2025ambient}, 
we only focus on these metrics in the main text. See \tabref{protein_backbone_full} for results in more comprehensive metrics.}
Following the original work, we evaluate the models under a range of designability-diversity trade-offs controlled by the noise scale $\gamma$
(global scale of $\eta_t$ in \eqnref{quotient-space-SDE-sampler}) in SDE sampling, as well as using ODE sampling.

\begin{table}[t]
\vspace{-8pt}
\centering
\footnotesize
\caption{
The effect of the quotient-space diffusion framework for protein structure generation using the Prote\'ina model.
Best results are in \textbf{bold}.
Best results under the same settings are \underline{underlined}.} 
\label{tab:protein_backbone}
\vspace{-0.3cm}
\resizebox{0.93\textwidth}{!}{
\begin{tabular}{cl@{\hspace{-2pt}}c@{\hspace{4pt}}cccccc}
\toprule
\multirow{2}{*}{Settings} & \multirow{2}{*}{Methods} & \multirow{2}{*}{Designability (\%) $\uparrow$} & \multicolumn{2}{c}{FPSD $\downarrow$ vs.} & fS $\uparrow$ in & \multicolumn{2}{c}{fJSD $\downarrow$ vs.} \\
\cmidrule(lr){4-5} \cmidrule(lr){6-6} \cmidrule(lr){7-8}
& & & PDB & AFDB & C/A/T & PDB & AFDB \\
\midrule 
\multirow{10}{*}{\tabincell{c}{Representative \\ References}}
& FrameDiff & 65.4 & 194.2 & 258.1 & 2.46/5.78/23.35 & 1.04 & 1.42 \\
& FoldFlow (base) & 96.6 & 601.5 & 566.2 & 1.06/1.79/9.72 & 3.18 & 3.10 \\
& FoldFlow (stoc.) & 97.0 & 543.6 & 520.4 & 1.21/2.09/11.59 & 3.69 & 2.71 \\
& FoldFlow (OT) & 97.2 & 431.4 & 414.1 & 1.35/3.10/13.62 & 2.90 & 2.32 \\
& FrameFlow & 88.6 & 129.9 & 159.9 & 2.52/5.88/27.00 & 0.68 & 0.91 \\
& ESM3 & 22.0 & 933.9 & 855.4 & 3.19/6.71/17.73 & 1.53 & 0.98 \\
& Chroma & 74.8 & 189.0 & 184.1 & 2.34/4.95/18.15 & 1.00 & 1.08 \\
& RFDiffusion & 94.4 & 253.7 & 252.4 & 2.25/5.06/19.83 & 1.21 & 1.13 \\
& Proteus & 94.2 & 225.7 & 226.2 & 2.26/5.46/16.22 & 1.41 & 1.37 \\
& Genie2 & 95.2 & 350.0 & 313.8 & 1.55/3.66/11.65 & 2.21 & 1.70 \\
\midrule
% \multicolumn{7}{@{}l}{\textbf{SDE Sampling}} \\
\multirow{6}{*}{\tabincell{c}{SDE \\ Sampling}}
& Prote\'ina $\clM_{\text{FS}}^{\text{small}}, \gamma=0.35$& 96.0 & 386.5 & 378.2 & 1.77/4.97/17.78 & 2.17 & 1.73 \\
% \rowcolor{gray!25}
& \textbf{+ Quotient-space diffusion} & \textbf{97.6} & \underline{274.7} & \underline{277.1} & \underline{2.24}/\underline{6.69}/\underline{20.99} & \underline{1.68} & \underline{1.55} \\
& Prote\'ina $\clM_{\text{FS}}^{\text{small}}, \gamma=0.45$ & 92.2 & 332.9 & 320.4 & 1.83/5.01/20.22 & 1.93 & 1.49 \\
% \rowcolor{gray!25}
& \textbf{+ Quotient-space diffusion} & \underline{92.6} & \underline{244.5} & \underline{246.3} & \underline{2.24}/\underline{6.68}/\underline{23.47} & \underline{1.43} & \underline{1.28} \\
& Prote\'ina $\clM_{\text{FS}}^{\text{small}}, \gamma=0.50$ & 89.2 & 306.2 & 290.8 & 1.86/4.92/21.15 & 1.81 & 1.36 \\
% \rowcolor{gray!25}
& \textbf{+ Quotient-space diffusion} & \underline{90.2} & \underline{228.0} & \underline{228.7} & \underline{2.25}/\underline{6.59}/\underline{25.24} & \underline{1.32} & \underline{1.17} \\
\midrule
% \multicolumn{7}{@{}l}{\textbf{ODE Sampling}} \\
\multirow{4}{*}{\tabincell{c}{ODE \\ Sampling}}
& Prote\'ina $\clM_{\text{FS}}$ & 19.6 & 85.4 & 21.4 & 2.51/5.65/27.35 & 0.59 & \textbf{0.09} \\
& Prote\'ina $\clM_{\text{FS}}^{\text{small}}$ & 13.8 & 83.2 & 21.9 & 2.45/5.63/31.76 & 0.58 & 0.12 \\
& + AF3 alignment & 3.8 & 229.0 & 82.4 & 2.18/4.30/14.28 & 1.35 & 0.36 \\
% \rowcolor{gray!25}
& \textbf{+ Quotient-space diffusion} & \underline{15.6} & \textbf{69.9} & \textbf{17.6} & \textbf{2.57/6.40/32.14} & \textbf{0.41} & \underline{0.11} \\
\bottomrule
\end{tabular}%
}
\vspace{-10pt}
\end{table}

\textbf{Results.}
As shown in \tabref{protein_backbone}, our quotient-space diffusion models outperform the vanilla Prote\'ina model under \emph{all} the settings and metrics, highlighting the practical advantages in handling the more complicated, higher-dimensional distribution of biomolecule structures. 
In contrast, the AF3 alignment treatment even degrades the performance significantly in this distributional setting, due to the fundamental incompatibility between its learning target and the sampler.
As a practical benefit of reducing learning difficulty, the quotient-space diffusion framework even allows the 60M-parameter model $\clM_{\text{FS}}^{\text{small}}$ to surpass the performance of the much larger 200M-parameter model $\clM_{\text{FS}}$ on most metrics.
This provides a compelling evidence that the quotient-space diffusion model, ensuring both sampling fidelity and learning efficiency, is the key to advancing generative protein models.

\vspace{-2pt}
\section{Conclusion}
\vspace{-2pt}

In this work, we formally construct a framework for building diffusion models on the quotient space over a group, as a principled approach to handling symmetry for generative tasks.
This is done by first revealing the essential generation process by projecting the conventional equivariant diffusion process onto the quotient space, then pulling it back to the original space so that practical implementation is as easy as the original process.
The resulting process updates samples without any intra-equivalence-class movement by an explicit projection operator, making simulation easier, and with the derived additional vector field, the process guarantees producing the correct target distribution.
Moreover, the projection allows the model to be arbitrary in the subspace of intra-equivalence-class movement, hence reduces learning difficulty by leveraging the symmetry.
On this point, we formalize the desire that has been pursued by remarkable prior works including GeoDiff and AlphaFold, whose heuristic alignment-based treatments would distort the generated distribution. 
On the representative molecular structure generation tasks on $\bbR^{3N}$ with $\SE(3)$ symmetry, our quotient-space diffusion framework improves performance universally over conventional diffusion models as well as the alignment-based treatments,
demonstrating the practical value of this principled approach for handling and leveraging symmetry.

\subsubsection*{Acknowledgments}
This work is supported by Zhongguancun Academy (Grant No. C20250506). DH is supported by National Science Foundation of China (NSFC62376007), National Science Foundation of China (under Key Project No. 92570203), Beijing Natural Science Foundation (Z250001) and Beijing Major Science and Technology Project under Contract no. Z251100008425004.

\section{Ethics Statement}
This work adheres to the ICLR Code of Ethics.
Our study does not involve human subjects, personal data, or sensitive demographic information. 
All experiments are conducted on publicly available benchmark datasets, which are widely used in the machine learning community. 
No new data collection or human/animal experimentation was performed.

\section{Reproducibility Statement}
To facilitate the reproducibility of our research, we provide comprehensive details throughout the paper and its supplementary materials. We begin by establishing the necessary foundational knowledge in \secref{euclidean-diffusion} and \appxref{background}. For all theoretical claims and proofs presented in the main text, we offer detailed step-by-step derivations in \appxref{proofs}. Our experiments are thoroughly documented; the datasets, training procedures, and evaluation protocols are carefully described in \secref{expm} and \appxref{experiments}. Upon acceptance of this paper, we commit to making our full codebase and all model checkpoints publicly available to ensure that the community can fully reproduce our results.

\section{The use of Large Language models (LLMs)}
In the preparation of this manuscript, LLMs were employed as a writing assistant to refine the language and improve the grammar. Furthermore, we utilized LLMs to assist in verifying our mathematical formulas for notational consistency. Following this process, all textual and mathematical content was meticulously reviewed, revised, and validated by the authors, who assume full responsibility for the final work presented.

\bibliography{main}
\bibliographystyle{iclr2026_conference}
\newpage
\appendix
\numberwithin{equation}{subsection}
\numberwithin{theorem}{subsection}
\numberwithin{figure}{subsection}
\numberwithin{table}{subsection}

\section*{Appendix}
The organization of the appendix are as follows.
In \appxref{related}, we briefly discuss the related work relevant to our research.
In \appxref{background}, we review some background knowledge of Riemannian geometry and stochastic calculus on the manifold. In \appxref{rigorous-construction}, we give the details of the Riemannian structures of the quotient space. In \appxref{proofs}, we give all the proofs of the theorems in the main text. In \appxref{general-method}, we extend our methods to more general cases. \appxref{experiments} details experimental settings, and finally \appxref{add-exp} provides additional experimental results and methodological discussions.

\section{Related Work}
\label{appx:related}
\paragraph{Diffusion models on Riemannian manifolds.} As the quotient has the Riemannian manifold structure, several previous works construct the diffusion model on the Riemannian manifolds. \citet{de2022riemannian} constructs diffusion models using different overlapping local coordinate systems of the manifold and requires geodesic random walk to simulate the forward process. \citet{huang2022riemannian,chen2023flow} construct diffusion models in an embedding space which allows a global representation but requires explicit geodesic formula of the manifold. 
\citet{zhu2024trivialized} constructs the reverse of kinetic Langevin dynamics on a Lie group to perform generative modeling. 
Such an approach is not designed for and not readily applicable to the quotient space, which has a different geometric structure from the Lie group.
In our quotient space case, the specialty with a quotient structure enables us to construct diffusion models using the coordinate systems of the total space without relying on an embedding of the quotient in the total space (unnecessarily an embedding space), which is more practical to implement yet still general.

\paragraph{Geometric diffusion models.} 
To ensure physical symmetry in the generation process, a mainstream strategy integrates fundamental physical constraints, such as SE(3) equivariance, directly into the diffusion-model architecture. This approach, pioneered by models like EDM \citep{hoogeboom2022equivariant}, typically employs an EGNN to operate directly on atomic coordinates, using techniques like zero center of mass adjustments to guarantee translational invariance. This foundational concept was subsequently extended in several directions. For instance, the approach was adapted for Diffusion Bridges in models like EDM-Bridge \citep{wu2022diffusion} and for diffusion in a latent space in models like GeoLDM \citep{xu2023geometric}. These equivariant diffusion techniques have been successfully applied across a range of molecular tasks. For structure generation, models like GeoDiff \citep{xu2022geodiff} predict 3D structures from molecular graphs. In molecular optimization, methods such as DiffHopp \citep{torge2023diffhopp} refine existing molecules to enhance desired properties. For de novo design, a key advancement has been to combine discrete diffusion models (D3PM) \citep{austin2021structured} for 2D topology with continuous equivariant diffusion for 3D geometry, enabling joint generation as seen in models like DiffSBDD \citep{schneuing2024structure} and MUDiff \citep{hua2024mudiff}. % \textcolor{blue}{
A similar problem has also been considered in crystalline structure generation, where the intrinsic periodic translation invariance is an intrinsic symmetry. \citet{lin2024equivariant} highlighted the intrinsic periodic translation symmetry that has been omitted for a long time in the field of periodic crystalline structure generation. The work designed a modified diffusion process that induces a transition kernel that is invariant under periodic translation, 
leading to a learning target for the score model that is invariant under periodic translation. \citet{cornet2025kinetic} proposes a novel method that generalizes the Trivialized Diffusion Model framework for fractional coordinates to model the intrinsic periodic translation symmetry using flat coordinates. The proposed method considers the process with the velocity restricted to the CoM-free linear subspace.
They have achieved the removal of variance on equivalent DoFs, but still asks the neural network model to learn to predict a specific target in the equivalent DoFs.

\paragraph{Learning with alignment}
To reduce learning difficulty, some heuristic treatments (learning with alignment) have been proposed in hope to reduce the DoFs corresponding to the symmetry group action. The alignment strategy used in GeoDiff \citep{xu2022geodiff} aligns the target structure to the noisy input by finding an optimal rigid transformation that minimizes the distance between them.

Another approach, proposed in AlphaFold~3 (AF3) \citep{abramson2024accurate}, aligns the target samples to the model output structure:
$\bbE \lrVert{\bfD_\theta(\bfxx_t, t) - \clA_{\bfD_{\thetab}(\bfxx_t,t)}(\bfxx_1)}^2$,
where $\thetab$ is treated constant in optimization.
This loss function allows the model output to differ by an arbitrary group action (\eg, rotation).
Indeed, for an arbitrary group action $g_{\bfxx_t, t}$, a new denoising model $g_{\bfxx_t, t} \cdot \bfD_\theta(\bfxx_t, t)$ achieves the same loss since
% \begin{align}
  $\Vert g_{\bfxx_t, t} \cdot \bfD_\theta(\bfxx_t, t) - \clA_{g_{\bfxx_t, t} \cdot \bfD_{\thetab}(\bfxx_t,t)}(\bfxx_1) \Vert^2
  = \Vert g_{\bfxx_t, t} \cdot \bfD_\theta(\bfxx_t, t) - g_{\bfxx_t, t} \cdot \clA_{\bfD_{\thetab}(\bfxx_t,t)}(\bfxx_1) \Vert^2
  = \Vert \bfD_\theta(\bfxx_t, t) - \clA_{\bfD_{\thetab}(\bfxx_t,t)}(\bfxx_1) \Vert^2$,
% \end{align}
where the last equality holds since the group preserves metric.
As discussed in the main text, these two alignment-based training frameworks lack a definite guarantee for recovering the correct target distribution, and is incompatible with the sampling process.

Boltz-1 \citep{wohlwend2025boltz}, an open-source replication of AF3, noticed this issue and proposed a modification in sampling to align the denoised structure to the structure in the current generation step before updating. Nevertheless, as discussed in \secref{comparison}, this, together with the training protocol of AF3, amounts to the operation of GeoDiff, still questioning the sampling process.

\section{Background in Riemannian Geometry and Stochastic Calculus} \label{appx:background}
\subsection{Riemannian Geometry}\label{appx:rg}
In this section, we review some background on differential geometry and Riemannian geometry. For a systematic treatment of the subject, please refer to standard textbooks ~\citet{lee2003smooth,lee2018introduction}. 

First, we give the formal definition of the smooth manifold. A manifold is a general topological space that locally has a Euclidean structure.
\begin{definition}
    An \textbf{$M$-dimensional topological manifold} is a topological space $\clM$ such that:
    \begin{itemize}
        \item $\clM$ is locally Euclidean, \ie locally homeomorphic to $\bbR^M$. Formally, $\forall x\in\clM$,\footnote{On an abstract manifold, a point is an abstract object and may not be a vector by itself, so we do not use a boldface notation. A vector representation as the coordinates is available after choosing a (local) coordinate system.} there exists an open neighborhood $x\in \clU\subset\clM$ that is homeomorphic to some open set $\clV\subset\clM$. We call the homeomorphism $\phi:\clU\to \clV\subset\bbR^M$ a \textbf{coordinate system} or a chart.
        \item $\clM$ is a Hausdorff topological space.
        \item $\clM$ has a countable basis for its topology.
    \end{itemize}
\end{definition}
A smooth manifold is a topological manifold with an additional smooth structure, which is defined as follows. 
\begin{definition}\label{def:smooth-manifold}
 A smooth structure on a $M$-dimensional topological space $\clM$ is a collection of coordinate systems $\scC=\{(\clU^{(\alpha)},\phi^{(\alpha)}):\alpha\in \clA\}$ which satisfies the following properties:
 \begin{itemize}
     \item The collection $\scC$ covers $\clM$: $\bigcup_{\alpha \in \clA}\clU^{(\alpha)}=\clM$;
     \item For any $\alpha,\beta \in \clA$, the transition function $\phi^{(\alpha)}\circ{\phi^{(\beta)}}^{-1}$ is a smooth map;
     \item $\scC$ is a maximal collection, \ie if $(\clU,\phi)$ is a coordinate system such that for all $\alpha\in \clA$ that the maps $\phi\circ{\phi^{(\alpha)}}^{-1}$ and $\phi^{(\alpha)}\circ\phi^{-1}$ are smooth, then $(\clU,\phi)\in\clC$.
 \end{itemize}
 The pair $(\clM,\scC)$ is called a \textbf{smooth manifold} of dimension $M$.
\end{definition}
If there is a coordinate system $(\clU, \phi)$ around a point $x \in \clM$, then in this neighborhood of $x$, the manifold admits a coordinate chart $x^i(x) := \phi^i(x)$ and a manifold point in the neighborhood can be expressed as a vector $\bfxx(x) = (x^1(x), \cdots, x^M(x))\trs$.

With the smooth structure, we can define a smooth function on the manifold and a smooth mapping between smooth manifolds.
\begin{definition}
    Let $\clM,\clN$ be smooth manifolds with dimensions $M,N$ respectively.
    \begin{itemize}
        \item A function $f:\clM\to\bbR$ is called a \textbf{smooth function} if its vectorized form $f\circ\phi^{-1}:\phi^{-1}(\clU)\to\bbR$ is smooth on $\phi^{-1}(\clU)\subset\bbR^M$ for all smooth coordinate systems $(\clU,\phi)$ of $\clM$. Denote all the smooth functions on $\clM$ as $C^\infty(\clM)$.
        \item A map $F:\clM\to\clN$ is called a \textbf{smooth map} if its vectorized form $\psi\circ F\circ\phi^{-1}:\phi^{-1}(\clU)\to \psi(\clV)$ is smooth for all smooth coordinate systems $(\clU,\phi)$ of $\clM$ and $(\clV,\psi)$.
    \end{itemize}
    A smooth map $F:\clM\to\clN$ which is invertible and whose inverse is smooth is called a diffeomorphism. In this case we say that $\clM$ and $\clN$ are diffeomorphic manifolds.
\end{definition}
To define movement on a smooth manifold $\clM$, we need to define tangent vectors on the manifold.
\begin{definition}
    Let $\clM$ be a smooth manifold, and $x\in\clM$ is a point, and $\clU$ is a neighborhood of it. A linear map $\bfvv: C^\infty(\clU)\to\bbR$ is called a derivative at $x$ if it satisfies
    \begin{align}
        \bfvv(fg) = f(x) \bfvv(g) + g(x) \bfvv (f), \quad \forall f,g \in C^\infty(\clU).
    \end{align}
    The set of all the derivatives of $C^\infty(\clU)$ in $x$, denoted by $T_x \clM$, is a vector space called the \textbf{tangent space} to $\clM$ at $x$. An element of $T_x\clM$ is called a \textbf{tangent vector} at $x$.
\end{definition}

\begin{definition}\label{def:tangent-map}
    Let $\clM,\clN$ be smooth manifolds and $F:\clM\to\clN$ be a smooth map. Let $x \in \clM$ and $\clV \subseteq \clN$ be a neighborhood of $F(x)$. Then $F$ induces a \textbf{push-forward map} over the tangent spaces, $F_{*x}: T_x\clM\to T_{F(x)}\clN$, is defined as:
    \begin{align}
        F_{*x}(\bfvv) (f) := \bfvv(f \circ F), \quad \forall f \in C^\infty(\clV), \bfvv\in T_x\clM.
    \end{align}
\end{definition}
When a coordinate system $(\clU,\phi)$ around $x$ and $(\clV,\psi)$ around $F(x)$ are chosen, the coordinate expression for $F_{*x}$ is just the Jacobian matrix of its vectorized form $\psi \circ F \circ \phi^{-1}$, \ie, $\nabla (\psi \circ F \circ \phi^{-1})(x)$. So $F_*$ is also called the differential of $F$ and also admits the notation $\dd F$.

The \textbf{tangent bundle $T\clM$} of a smooth manifold $\clM$ is the union of the tangent spaces of each points, \ie $T\clM:=\bigsqcup_{x\in \clM}T_x\clM$.
Similar to the total derivative of the smooth map in Euclidean space, the differential of a smooth map between smooth manifolds is a linear map between tangent spaces.

A \textbf{vector field} $\bfvv$ on a smooth manifold $\clM$ is a correspondence that associates to each point $x\in\clM$ a vector $\bfvv_x\in T_x\clM$. The vector field is smooth if the mapping $\bfvv:\clM\to T\clM$ is smooth. Denote all the smooth vector fields on $\clM$ by $\scX(\clM)$. With the definition of a vector field, we can define the solution of ordinary differential equation (ODE) on the manifold. The idea is similar to the definition in Euclidean space, the solution of the ODE is a curve whose velocity at each point is the same as the vector field.
\begin{definition}
    Let $\bfvv$ be a smooth vector field on the smooth manifold $\clM$. An \textbf{integral curve of $\bfvv$} is a differentiable curve $\gamma:[0,T]\to\clM$ whose velocity at each point is equal to the value of $\bfvv$ at that point:
    \begin{align}
        \gammad_t = \bfvv_{\gamma_t}, \quad\forall t\in [0,T].
    \end{align}
\end{definition}

\begin{definition}\label{def:1-form}
    A \textbf{1-form} $\Theta$ on smooth manifold $\clM$ is a correspondence that associates to each point $x\in\clM$ a covector $\Theta_x\in T^*_x\clM$. The 1-form is smooth if the mapping $\Theta:\clM\to T^*\clM$ is smooth.
    The \textbf{cotangent space} $T^*_x\clM$ of $\clM$ at $x$ is defined as the dual space of $T_x\clM$. The \textbf{cotangent bundle $T^*\clM$} is the union of the cotangent space of each points, \ie, $T^*\clM:=\bigsqcup_{p\in \clM} T^*_x\clM$.
\end{definition}

A smooth manifold only has a topological structure. If we want to define concepts like the ``length of the velocity'' and distance between two points on the manifold, a metric on the tangent space is required. Such a metric endows the metric with an additional geometry structure. The formal definitions are as follows.
\begin{definition}
    A \textbf{Riemannian metric} on a smooth manifold is a correspondence which associates to each point $p$ of $\clM$ an inner product $\lrangle{\cdot,\cdot}_x$ that varies smoothly on $\clM$. In other words, for any two smooth vector fields $\bfuu,\bfvv$, $\lrangle{\bfuu,\bfvv}$ is a smooth function on $\clM$. A smooth manifold with a given Riemannian metric is called a \textbf{Riemannian manifold}.
\end{definition}
To define the "difference" between tangent space at different points, we need to introduce a concept called affine connection.
\begin{definition}
    An \textbf{affine connection} $\nabla$ on a Riemannian manifold is a mapping 
    \begin{align}
\nabla:\scX(\clM)\times\scX(\clM)\to\scX(\clM)
    \end{align}
    which is denoted by $(\bfuu,\bfvv)\to\nabla_\bfuu\bfvv$ which satisfies the following properties:
    \begin{itemize}
        \item $\nabla_\bfuu\bfvv$ is linear over $C^\infty(\clM)$ in $\bfuu$: $\forall f^{(1)},f^{(2)}\in C^\infty(\clM)$ and $\bfuu^{(1)},\bfuu^{(2)}\in\scX(\clM)$, 
        \begin{align}
            \nabla_{f^{(1)}\bfuu^{(1)}+f^{(2)}\bfuu^{(2)}}\bfvv=f^{(1)}\nabla_{\bfuu^{(1)}}\bfvv+f^{(2)}\nabla_{\bfuu^{(2)}}\bfvv;
        \end{align}
        \item $\nabla_\bfuu\bfvv$ is linear over $\bbR$ in $\bfvv$: $\forall a^{(1)},a^{(2)}\in\bbR$ and $\bfvv^{(1)},\bfvv^{(2)}\in\scX(\clM)$,
        \begin{align}
            \nabla_{\bfuu^{(1)}}(a^{(1)}\bfvv^{(1)}+a^{(2)}\bfvv^{(2)})=a^{(1)}\nabla_{\bfuu}\bfvv^{(1)}+a^{(2)}\nabla_{\bfuu}\bfvv^{(2)};
        \end{align}
        \item $\nabla$ satisfies the following product rule: $\forall f\in C^\infty(\clM)$,
        \begin{align}
            \nabla_\bfuu(f \bfvv)=f\nabla_\bfuu\bfvv + (\bfuu f)\bfvv.
        \end{align}
    \end{itemize}
    A connection is called the \textbf{Levi-Civita connection} if satisfies the following additional properties:
    \begin{itemize}
        \item $\nabla$ is compatible with metric: $\nabla_\bfuu\lrangle{\bfvv^{(1)},\bfvv^{(2)}}=\lrangle{\nabla_\bfuu\bfvv^{(1)},\bfvv^{(2)}}+\lrangle{\bfvv^{(1)},\nabla_\bfuu\bfvv^{(2)}}$;
        \item $\nabla$ is torsion-free: $\nabla_\bfuu\bfvv-\nabla_\bfvv\bfuu=\bfuu(\bfvv(\cdot)) - \bfvv(\bfuu(\cdot))$.
    \end{itemize}
\end{definition}
The Levi-Civita connection is the connection with nice properties. Its existence and uniqueness is a fundamental result of Riemannian geometry.
\begin{theorem}
    (Fundamental Theorem of Riemannian Geometry \citep[Thm.~5.10]{lee2018introduction}) Assume $(\clM,\lrangle{\cdot,\cdot})$ is a Riemannian manifold. Then there exists a unique Levi-Civita connection.
\end{theorem}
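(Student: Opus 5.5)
The plan is the classical \emph{Koszul formula} argument: derive a formula that any Levi-Civita connection must satisfy (giving uniqueness), then use that formula to \emph{define} a connection and check the axioms (giving existence). Throughout, write $[\bfuu,\bfvv]$ for the vector field $\bfuu(\bfvv(\cdot))-\bfvv(\bfuu(\cdot))$ that appears in the torsion-free condition.

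\textbf{Uniqueness.} Suppose $\nabla$ is Levi-Civita, and take $\bfuu,\bfvv,\bfww\in\scX(\clM)$. Write metric compatibility three times, cyclically:
\begin{align*}
\bfuu\lrangle{\bfvv,\bfww} &= \lrangle{\nabla_\bfuu\bfvv,\bfww}+\lrangle{\bfvv,\nabla_\bfuu\bfww},\\
\bfvv\lrangle{\bfww,\bfuu} &= \lrangle{\nabla_\bfvv\bfww,\bfuu}+\lrangle{\bfww,\nabla_\bfvv\bfuu},\\
\bfww\lrangle{\bfuu,\bfvv} &= \lrangle{\nabla_\bfww\bfuu,\bfvv}+\lrangle{\bfuu,\nabla_\bfww\bfvv}.
\end{align*}
Add the first two identities and subtract the third. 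Then use torsion-freeness, $\nabla_\bfaa\bfbb-\nabla_\bfbb\bfaa=[\bfaa,\bfbb]$, to eliminate every derivative except $\nabla_\bfuu\bfvv$. This yields the Koszul formula
\begin{align*}
2\lrangle{\nabla_\bfuu\bfvv,\bfww} ={}& \bfuu\lrangle{\bfvv,\bfww}+\bfvv\lrangle{\bfww,\bfuu}-\bfww\lrangle{\bfuu,\bfvv}\\
&+\lrangle{[\bfuu,\bfvv],\bfww}-\lrangle{[\bfvv,\bfww],\bfuu}+\lrangle{[\bfww,\bfuu],\bfvv}.
\end{align*}
The right-hand side involves only the metric and brackets, so it does not depend on $\nabla$. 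Since the metric is nondegenerate at every point and $\bfww$ is arbitrary, $\nabla_\bfuu\bfvv$ is determined. Hence any two Levi-Civita connections coincide.

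\textbf{Existence.} Denote the right-hand side by $K(\bfuu,\bfvv,\bfww)$ and first check that it is $C^\infty(\clM)$-linear in $\bfww$. Replacing $\bfww$ by $f\bfww$ produces derivative terms $(\bfvv f)\lrangle{\bfww,\bfuu}$ and $(\bfuu f)\lrangle{\bfvv,\bfww}$ from the first two entries. These cancel against the corresponding terms coming from $[\bfvv,f\bfww]=f[\bfvv,\bfww]+(\bfvv f)\bfww$ and $[f\bfww,\bfuu]=f[\bfww,\bfuu]-(\bfuu f)\bfww$. Because of this tensoriality, $\bfww\mapsto\frac12K(\bfuu,\bfvv,\bfww)$ is a smooth 1-form (\defref{1-form}), and the Riesz isomorphism induced pointwise by the metric gives a unique smooth vector field, which we call $\nabla_\bfuu\bfvv$.

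It remains to verify the axioms, each by the same kind of bookkeeping. $C^\infty$-linearity in $\bfuu$: the derivative terms from $\bfvv\lrangle{\bfww,f\bfuu}$ and $-\bfww\lrangle{f\bfuu,\bfvv}$ cancel against the bracket terms. $\bbR$-linearity in $\bfvv$ is immediate. The product rule $\nabla_\bfuu(f\bfvv)=f\nabla_\bfuu\bfvv+(\bfuu f)\bfvv$ holds because the surviving extra term is $2(\bfuu f)\lrangle{\bfvv,\bfww}$. Metric compatibility follows by adding $K(\bfuu,\bfvv,\bfww)+K(\bfuu,\bfww,\bfvv)$: everything cancels except $2\bfuu\lrangle{\bfvv,\bfww}$. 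Torsion-freeness follows by computing $K(\bfuu,\bfvv,\bfww)-K(\bfvv,\bfuu,\bfww)=2\lrangle{[\bfuu,\bfvv],\bfww}$.

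I expect the main obstacle to be the tensoriality and product-rule checks. The sign bookkeeping there is where errors creep in, and it has to be done carefully, term by term, using only the bracket identity $[\bfaa,f\bfbb]=f[\bfaa,\bfbb]+(\bfaa f)\bfbb$. A secondary technical point is that the connection is defined on global vector fields while we also want to treat it as a local object. For that I would use a bump-function argument: if $\bfvv$ vanishes on a neighborhood of $x$, then $(\nabla_\bfuu\bfvv)_x=0$. Together with tensoriality in $\bfuu$, this makes $\nabla$ a local operator and justifies the pointwise Riesz step.
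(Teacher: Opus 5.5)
The paper does not prove this result; it defers to Lee (Thm.~5.10). Your uniqueness half is the argument given there: three cyclic compatibility identities, torsion-freeness, and the Koszul formula. Your signs in that formula are correct.

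Your existence half takes a different route from Lee. Lee works chart by chart. He applies the Koszul formula to coordinate vector fields, whose brackets vanish, to get $\Gamma^k_{ij}=\frac{1}{2}g^{kl}(\partial_i g_{jl}+\partial_j g_{il}-\partial_l g_{ij})$. He defines $\nabla$ on each chart by these Christoffel symbols, then glues the chart connections by restricting to overlaps and invoking uniqueness there. In that route, $C^\infty$-linearity in the first slot and the product rule come for free from the coordinate form of a connection, and symmetry is read off from $\Gamma^k_{ij}=\Gamma^k_{ji}$. Only metric compatibility needs a computation; the price is the localization and gluing step.

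Your construction is global and chart-free: you define $\nabla_\bfuu\bfvv$ as the metric dual of $\bfww\mapsto\frac{1}{2}K(\bfuu,\bfvv,\bfww)$, so no gluing is needed. In exchange, you must use the tensor characterization lemma to turn a $C^\infty(\clM)$-linear functional into a smooth 1-form, and then verify every axiom by bracket bookkeeping. Those checks are all correct:
\begin{itemize}
\item the derivative terms cancel under $\bfww\mapsto f\bfww$ and under $\bfuu\mapsto f\bfuu$;
\item $\bfvv\mapsto f\bfvv$ leaves exactly $2(\bfuu f)\lrangle{\bfvv,\bfww}$;
\item $K(\bfuu,\bfvv,\bfww)+K(\bfuu,\bfww,\bfvv)=2\,\bfuu\lrangle{\bfvv,\bfww}$;
\item $K(\bfuu,\bfvv,\bfww)-K(\bfvv,\bfuu,\bfww)=2\lrangle{[\bfuu,\bfvv],\bfww}$.
\end{itemize}

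One simplification: in your route the pointwise Riesz step is already justified by tensoriality in $\bfww$ alone. Since the paper defines a connection on global vector fields, your closing locality-in-$\bfvv$ argument is not needed. That kind of localization is what the chart-and-glue route requires, not yours.
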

As the end of this subsection, we introduce the Laplace-Beltrami operator on the manifold, which is used to define the Wiener process on the manifold.
\begin{definition}\label{def:laplace}
    Let $\nabla$ be the Levi-Civita connection on the Riemannian manifold $(\clM,\lrangle{\cdot,\cdot})$. The \textbf{Hessian} of $f\in C^\infty(\clM)$ is then defined by:
    \begin{align}
        \text{Hess}(f)(\bfuu,\bfvv):=\bfvv(\bfuu(f))-(\nabla_\bfvv \bfuu)(f),\quad\forall\bfuu,\bfvv\in\scX(\clM).
    \end{align}
    The \textbf{Laplace-Beltrami operator} $\Delta$ is defined as the trace of Hessian. In other words, $\Delta f:=\sum_{i=1}^M\mathrm{Hess}(\bfee_i,\bfee_i)$ where $\{\bfee_1,...,\bfee_M\}$ is an orthonormal basis for $T_x\clM$.
\end{definition}

\subsection{Stochastic Calculus on a Manifold} \label{appx:manifold-stoc}
With the Riemannian structure defined in the previous section, we can consider the definition of stochastic differential equations (SDE) and diffusion processes on the manifold. For a systematic treatment of the subject, please refer to standard textbooks \citet{hsu2002stochastic,thalmaier2023stochastic}. First, we recall the definition of SDE and diffusion process in Euclidean space.

\begin{definition}\label{def:generator}
The \textbf{infinitesimal generator} of a stochastic process $(\bfxx_t)_t$ for a function $\phi(\bfxx)$ is
    \begin{equation}
        \clL_t \phi(\bfxx) = \lim_{s\to0^+}\frac{\bbE [\phi(\bfxx_{t+s})|\bfxx_{t}=\bfxx]-\phi(\bfxx)}{s},
    \end{equation}
     where $\phi$ is a suitably regular function. For an Itô process defined as the solution to the SDE $\dd \bfxx_t=\bfff(\bfxx_t,t) \ud t + \bfSigma(\bfxx_t,t) \ud \bfww_t$, the generator is
     \begin{equation}
         \clL_t = \sum_{i=1}^D \bfff^i(\bfxx,t) \partial_i + \frac{1}{2} \sum_{i,j=1}^D \left( \bfSigma(\bfxx, t) \bfSigma(\bfxx, t)\trs \right)^{ij} \partial_i \partial_j.
     \end{equation}
\end{definition}
On the other hand, the diffusion process can also be defined by its generator.
\begin{definition}
     A $D$-dimensional stochastic process $\bfxx_t$ with continuous sample path defined on a probability space $(\Omega,\scF,\bbP)$ is called a diffusion process generated by a smooth second-order elliptic operator $\clL_t$ if the following hold: $\forall f \in C^\infty(\bbR^D)$, the process
    \begin{align}
        M^f_t=f(\bfxx_t)-f(\bfxx_0)-\int_0^t \clL_s f(\bfxx_s) \ud s
    \end{align}
    is a $\scF_t$-martingale.
\end{definition}
To generalize the definition of SDE to a Riemannian manifold $\clM$, we need to define the second-order differential operator on the manifold. Let $\clM$ be an $M$-dimensional Riemannian manifold. A second-order partial differential operator on $\clM$ is of the form
\begin{align}
    \clL=\bfvv^{(0)}+\sum_{k=1}^R {\bfvv^{(k)}}^2, \quad\text{where}\; \bfvv^{(k)}\in\scX(\clM),
\end{align}
for some $R \in \bbN^+$. The square of a vector field is understood as the decomposition of derivatives:
\begin{align}
    {\bfvv^{(k)}}^2 (f) := \bfvv^{(k)}(\bfvv^{(k)}(f)),\quad\forall f\in C^\infty(\clM).
\end{align}
The vector fields can be generalized to the time-dependent case. Now we can extend the definition of a diffusion process on a Riemannian manifold.
\begin{definition}
    \citep[Def.~1.1.3]{thalmaier2023stochastic} Let $(\Omega,\scF,\bbP;(\scF)_{t\ge0})$ be a probability space equipped with increasing sequence of sub-$\sigma$-algebra $\scF_t\subseteq\scF$. An adapted continuous process $\bfxx_t$ taking values in $\clM$, is called $\clL_t$-diffusion if for all test functions $f\in C^\infty_c(\clM)$, the process
    \begin{align}
        N_t^f:=f(\bfxx_t)-f(\bfxx_0)-\int_0^t(\clL_s f)(\bfxx_s) \ud s, \quad t\ge0,
    \end{align}
    is a martingale, \ie $\bbE[N_t^f-N_s^f\mid \scF_s]=0,\quad\forall s\le t$.
\end{definition}

For a special case, we can define the Wiener process on the Riemannian manifold $\clM$.
\begin{definition}\label{def:BM-on-manifold}
    The Wiener process $\bfww_t$ on $\clM$ is a diffusion process with generator $\frac{1}{2}\Delta$, where $\Delta$ is the Laplace-Beltrami operator on the Riemannian manifold $(\clM,\lrangle{\cdot,\cdot})$ (\defref{laplace}); \ie $\bfww_t$ is a continuous stochastic process on $\clM$ such that for any $f\in C^\infty(\clM)$,
    \begin{align}
        f(\bfxx_t)-\frac{1}{2} \int_0^t \Delta  f(\bfww_s)\ud s, % 0\le t <e,
    \end{align}
    is a local martingale up to a valid time period. % , where $e$ is the lifetime of $\bfww_t$ on $\clM$.
\end{definition}

For stochastic differential geometry, the Stratonovitch integral is more convenient than the It\^{o} Integral. 
The Stratonovitch differential effectively subsumes the deterministic second-order effect of the Wiener process from the quadratic variation into the drift term, so that it satisfies the ordinary chain rule of calculus.
This property enables a clear correspondence between the diffusion process under a diffeomorphism between two Riemannian manifolds. Next, we give the definition of the Stratonovitch integral on the Euclidean space and its generalization to Riemannian manifolds.
\begin{definition}
    For continuous real-valued semimartingales $\bfxx$ and $\bfyy$, let $\bfxx\circ \dd \bfyy := \bfxx \ud\bfyy + \frac{1}{2} \dd [\bfxx,\bfyy]$ be the Stratonovitch differential. Here $\bfxx \ud \bfyy$ is the usual It\^{o} differential, and $\dd [\bfxx,\bfyy] := \dd\bfxx \dd\bfyy$ is the quadratic co-variation of $\bfxx$ and $\bfyy$. The integral
    \begin{align}
        \int_0^t \bfxx \circ \dd \bfyy = \int_0^t \lrparen{ \bfxx \dd \bfyy + \frac{1}{2} \dd [\bfxx,\bfyy]_t }
    \end{align}
    is called the Stratonovitch integral of $\bfxx$ with respect to $\bfyy$.
\end{definition}

\begin{proposition}
    (It\^{o}-Stratonovitch formula \citep[Prop.~1.2.10]{thalmaier2023stochastic}). Let $\bfxx$ be a continuous $\bbR^D$-valued semimartingale and $f\in C^\infty(\bbR^D)$. Then $\dd f(\bfxx)=\lrangle{\nabla f(\bfxx),\circ\dd\bfxx}$.
\end{proposition}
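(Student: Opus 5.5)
The plan is to reduce the Stratonovich chain rule to the classical It\^{o} formula, using the definition $\bfxx \circ \dd \bfyy = \bfxx \ud \bfyy + \frac{1}{2}\dd[\bfxx,\bfyy]$. Write $\bfxx = (x^1,\dots,x^D)$ and interpret $\lrangle{\nabla f(\bfxx), \circ \dd \bfxx}$ componentwise as $\sum_{i=1}^D \partial_i f(\bfxx) \circ \dd x^i$. The claim is then the identity
\begin{align}
  f(\bfxx_t) - f(\bfxx_0) = \sum_{i=1}^D \int_0^t \partial_i f(\bfxx_s) \ud x^i_s + \frac{1}{2}\sum_{i=1}^D [\partial_i f(\bfxx), x^i]_t ,
\end{align}
and we will show that both sides agree with the It\^{o} expansion of $f(\bfxx_t)$.

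\textbf{Step 1 (It\^{o} expansion of $f$).} Since $f \in C^\infty(\bbR^D) \subset C^2$, the multidimensional It\^{o} formula gives
\begin{align}
  \dd f(\bfxx) = \sum_i \partial_i f(\bfxx) \ud x^i + \frac{1}{2}\sum_{i,j} \partial_i\partial_j f(\bfxx) \ud [x^i,x^j].
\end{align}

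\textbf{Step 2 (It\^{o} expansion of the partial derivatives).} Apply It\^{o}'s formula again, now to the $C^2$ function $\partial_i f$, for each $i$:
\begin{align}
  \dd\, \partial_i f(\bfxx) = \sum_j \partial_j\partial_i f(\bfxx) \ud x^j + \frac{1}{2}\sum_{j,k}\partial_j\partial_k\partial_i f(\bfxx) \ud[x^j,x^k].
\end{align}
This shows that $\partial_i f(\bfxx)$ is a continuous semimartingale, so the Stratonovich integral $\partial_i f(\bfxx)\circ \dd x^i$ is well defined.

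\textbf{Step 3 (the covariation).} Compute $[\partial_i f(\bfxx), x^i]$ using bilinearity of the quadratic covariation. The second sum in Step 2 is of finite variation, so its covariation with the continuous semimartingale $x^i$ vanishes. The stochastic-integral term contributes
\begin{align}
  \dd[\partial_i f(\bfxx), x^i] = \sum_j \partial_j\partial_i f(\bfxx) \ud[x^j,x^i],
\end{align}
by the rule $[\int H \ud X, Y] = \int H \ud [X,Y]$.

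\textbf{Step 4 (conclusion).} Sum $\partial_i f(\bfxx) \ud x^i + \frac{1}{2}\dd[\partial_i f(\bfxx),x^i]$ over $i$. By Step 3 this equals the right-hand side of Step 1, since $\partial_i\partial_j f$ is symmetric. This proves $\dd f(\bfxx) = \lrangle{\nabla f(\bfxx), \circ \dd \bfxx}$.

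The main obstacle is technical rather than algebraic: the derivatives of $f$ may be unbounded, so the stochastic integrals are a priori only local. I would handle this by localization. Take stopping times $\tau_n = \inf\{t : \Vert\bfxx_t\Vert \ge n\}$. On $[0,\tau_n]$ all derivatives of $f$ up to third order are bounded, so Steps 1--3 hold for the stopped processes. Letting $n \to \infty$, using continuity of paths so that $\tau_n \uparrow \infty$, recovers the identity for all $t$. The identities $[\text{finite variation}, x^i]=0$ and $[\int H \ud X, Y]=\int H \ud[X,Y]$ are standard properties of continuous semimartingales and will be cited rather than reproved.
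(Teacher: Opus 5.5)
Your argument is correct and is the standard proof: apply It\^{o}'s formula to $f$ and to each $\partial_i f$, then use $[\int H\,\dd X, Y] = \int H\,\dd[X,Y]$ and the vanishing covariation of finite-variation terms to get $\frac12\dd[\partial_i f(\bfxx), x^i] = \frac12\sum_j \partial_j\partial_i f(\bfxx)\,\dd[x^j,x^i]$, which summed over $i$ is exactly the It\^{o} correction term. The paper gives no proof of its own and only cites \citet[Prop.~1.2.10]{thalmaier2023stochastic}, whose proof is this same reduction; your localization step is harmless but unnecessary, since It\^{o}'s formula holds for any $C^2$ function of a continuous semimartingale and continuous integrands are locally bounded.
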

The It\^{o}-Stratonovitch formula shows the advantage of the Stratonovich differential: it satisfies the usual chain rule of classical calculus. So at least formally, classical differential calculus can be applied in calculations involving Stratonovich differentials.
\begin{proposition}
    \citep[Prop.~1.2.11]{thalmaier2023stochastic} Solutions to the Stratonovitch SDE
    \begin{align}
        \dd\bfxx_t=\bfff(\bfxx_t,t)\ud t+\bfSigma(\bfxx_t,t)\circ\dd\bfww_t \label{eqn:strat-sde}
    \end{align}
    define $\clL$-diffusions for the operator
    \begin{align}
        \clL=\bfvv^{(0)}+\frac{1}{2}\sum_{k=1}^D{\bfvv^{(k)}}^2, \quad 
        \text{where} \; \bfvv^{(0)}=\bfff, \; \bfvv^{(k)}=\sum_{i=1}^D \bfSigma^{i}_{k} \partial_i.
    \end{align}
\end{proposition}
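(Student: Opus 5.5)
The plan is to compute $\dd f(\bfxx_t)$ for a test function $f \in C^\infty(\bbR^D)$ (compactly supported, or with a localization argument otherwise) using the It\^{o}--Stratonovitch formula, which says that Stratonovitch differentials obey the ordinary chain rule. I would then convert the result back into It\^{o} form. The drift of the It\^{o} form should be exactly $\clL f$, and the remainder should be a stochastic integral against $\dd\bfww_t$, hence a (local) martingale.

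\textbf{Step 1 (chain rule).} Applying the It\^{o}--Stratonovitch formula to \eqnref{strat-sde} gives
\[
\dd f(\bfxx_t) = \sum_i \partial_i f(\bfxx_t) \circ \dd \bfxx_t^i = (\bfvv^{(0)} f)(\bfxx_t)\ud t + \sum_{k=1}^D (\bfvv^{(k)} f)(\bfxx_t,t) \circ \dd \bfww^k_t .
\]
This uses $\sum_i \bfSigma^i_k \partial_i f = \bfvv^{(k)} f$.

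\textbf{Step 2 (Stratonovitch to It\^{o}).} By the definition of the Stratonovitch differential,
\[
(\bfvv^{(k)} f)(\bfxx_t) \circ \dd\bfww^k_t = (\bfvv^{(k)} f)(\bfxx_t)\ud \bfww^k_t + \tfrac12 \dd\big[(\bfvv^{(k)} f)(\bfxx_\cdot), \bfww^k\big]_t .
\]
To evaluate the covariation, I would apply Step 1 again to the smooth function $g_k := \bfvv^{(k)} f$. Its differential is a finite-variation $\dd t$ term plus $\sum_j (\bfvv^{(j)} g_k)(\bfxx_t)\ud\bfww^j_t$, with the Stratonovitch correction also being of finite variation. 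Only the martingale part contributes to the bracket, and $\dd[\bfww^j,\bfww^k]_t=\delta_{jk}\dd t$, so
\[
\dd\big[g_k(\bfxx_\cdot),\bfww^k\big]_t = (\bfvv^{(k)}\bfvv^{(k)} f)(\bfxx_t)\ud t .
\]
Summing over $k$ gives
\[
\dd f(\bfxx_t) = (\clL f)(\bfxx_t)\ud t + \sum_k (\bfvv^{(k)} f)(\bfxx_t)\ud\bfww^k_t .
\]
When $\bfSigma$ depends on $t$, the time dependence adds only finite-variation terms to $g_k(\bfxx_t,t)$, so it does not affect the bracket.

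\textbf{Step 3 (martingale property).} Integrating from $0$ to $t$ shows that $M^f_t = \int_0^t \sum_k (\bfvv^{(k)} f)(\bfxx_s)\ud\bfww^k_s$, which is an It\^{o} integral and therefore a local martingale. For $f\in C_c^\infty$ with smooth coefficients, the integrand $\bfvv^{(k)} f$ is continuous and compactly supported, hence bounded. The integral is then square-integrable, so $M^f$ is a true martingale, which is exactly the defining property of an $\clL$-diffusion. For general smooth $f$, I would localize with the exit times of $\bfxx_t$ from large balls.

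\textbf{Main obstacle.} I expect the only delicate point to be the rigorous justification in Step 2: that the covariation of $g_k(\bfxx_t)$ with $\bfww^k$ picks up only the martingale part of $\dd g_k(\bfxx_t)$. Relatedly, one must ensure enough regularity for the Stratonovitch integral of $g_k(\bfxx_t)$ to be defined; this requires $g_k(\bfxx_t)$ to be a semimartingale, which follows from It\^{o}'s formula since $g_k$ is $C^2$. I would handle this by first converting \eqnref{strat-sde} into its It\^{o} form. That form has drift $\bfff + \tfrac12\sum_k (\bfvv^{(k)}\bfSigma_k)$, and with it the It\^{o} formula can be applied directly as a cross-check of the same generator.
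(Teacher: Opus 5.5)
Your argument is correct and is the standard proof of this result; the paper gives no proof of its own and only cites Thalmaier's Prop.~1.2.11, and your It\^{o}-drift cross-check $\bfff+\frac12\sum_k \bfvv^{(k)}_*(\bfvv^{(k)})$ reproduces the It\^{o} form the paper records right after the statement. The one point to state carefully is Step~3: for general $f\in C^\infty(\bbR^D)$, localization gives only a local martingale, which is the right conclusion. The paper's Euclidean definition, which asks for a true martingale for every smooth $f$, cannot hold without growth conditions, whereas your bounded-integrand argument for $f\in C^\infty_c$ establishes exactly the test-function version the paper adopts in the manifold setting.
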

From this result, we can see that \eqnref{strat-sde} describes the same diffusion process as the following It\^{o} SDE:
\begin{align}
    \dd \bfxx_t = \Big( \bfff(\bfxx_t, t) + \frac12 \sum_{k=1}^D \bfvv^{(k)}_* (\bfvv^{(k)}) \Big) \ud t + \bfSigma(\bfxx_t, t) \ud \bfww_t,
\end{align}
where $\bfvv^{(k)}_*(\bfvv^{(k)}) := \sum_{i,j=1}^D {\bfvv^{(k)}}^j (\partial_j {\bfvv^{(k)}}^i) \partial_i$.

Now we can generalize the definition of SDE to the Riemannian manifold case. An SDE on manifold $\clM$ can be defined by vector fields $\bfvv^{(0)},\bfvv^{(1)},...,\bfvv^{(M)}$ on $\clM$. Let $\bfww$ be the $\bbR^M$-valued Wiener process and $\bfxx_0$ be an $\clM$-valued random variable serving as the initial value of the solution. The equation is symbolically written as
\begin{align}\label{eqn:sde-manifold}
    \dd\bfxx_t=\bfvv^{(0)}(\bfxx_t,t) \ud t+\sum_{k=1}^D \bfvv^{(k)}(\bfxx_t,t) \circ \dd \bfww_t^k.
\end{align}
\begin{definition}\label{def:sde-RM}
    An $\clM$-valued semimartingale $\bfxx_t$ defined up to a proper stopping time $\tau$ is a solution to the SDE \eqnref{sde-manifold} up to $\tau$ if for all $f\in C^\infty(\clM)$,
    \begin{align}
        f(\bfxx_t)=f(\bfxx_0)+\int_0^t\left(\bfvv^{(0)}(f)(\bfxx_s,s)\ud s + \sum_{k=1}^D\bfvv^{(k)}(f)(\bfxx_s,s) \circ \dd \bfww_t^k \right), \quad 0\le t<\tau.
    \end{align}
\end{definition}
\begin{proposition}\label{prop:sde-generator-RM}
    \citep[Cor.~1.2.19]{thalmaier2023stochastic} Let $\clL=\bfvv^{(0)}+\frac{1}{2}\sum_{k=1}^D {\bfvv^{(k)}}^2$ and $\bfxx_t$ be the solution to the SDE \eqnref{sde-manifold}. Then for all $f\in C^\infty(\clM)$,
    \begin{align}
        N_t^f:=f(\bfxx_t)-f(\bfxx_0)-\int_0^t(\clL_sf)(\bfxx_s)\dd s, \quad t\ge0,
    \end{align}
    is a martingale. In other words, the  solution of SDE \eqnref{sde-manifold} is a $\clL$ diffusion to the operator $\clL=\bfvv^{(0)}+\frac{1}{2} \sum_{k=1}^D {\bfvv^{(k)}}^2$.
\end{proposition}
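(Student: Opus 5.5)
The plan is to start from the defining identity of a solution in \defref{sde-RM} and convert each Stratonovich integral into an It\^{o} integral plus a drift correction. The correction should be exactly $\frac12 {\bfvv^{(k)}}^2 f$, so that $N^f_t$ reduces to a sum of It\^{o} integrals against the Euclidean Wiener process $\bfww_t$.

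Fix a test function $f$, taken in $C^\infty_c(\clM)$ first; the general case is handled by localization at the end. By \defref{sde-RM},
\begin{align*}
f(\bfxx_t)=f(\bfxx_0)+\int_0^t \bfvv^{(0)}(f)(\bfxx_s,s)\ud s+\sum_{k}\int_0^t \bfvv^{(k)}(f)(\bfxx_s,s)\circ\dd\bfww^k_s .
\end{align*}
By the definition of the Stratonovich differential, each term satisfies
\begin{align*}
\int_0^t \bfvv^{(k)}(f)(\bfxx_s,s)\circ\dd\bfww^k_s=\int_0^t \bfvv^{(k)}(f)(\bfxx_s,s)\ud\bfww^k_s+\tfrac12\big[\bfvv^{(k)}(f)(\bfxx_\cdot,\cdot),\bfww^k\big]_t .
\end{align*}
So the key step is to compute this quadratic covariation.

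To compute it, I apply \defref{sde-RM} once more, now to the smooth function $g_k:=\bfvv^{(k)}(f)$. This expresses $g_k(\bfxx_t)$ as a finite-variation part plus $\sum_j\int \bfvv^{(j)}(g_k)\circ\dd\bfww^j$. If the vector fields depend on time, $\partial_s g_k$ adds only another $\dd s$ term, which also has finite variation. Finite-variation terms and Stratonovich corrections do not contribute to the covariation with $\bfww^k$, and $[\bfww^j,\bfww^k]_t=\delta_{jk}t$. Hence
\begin{align*}
\dd\big[g_k(\bfxx),\bfww^k\big]_s=\bfvv^{(k)}\big(\bfvv^{(k)}(f)\big)(\bfxx_s,s)\ud s={\bfvv^{(k)}}^2(f)(\bfxx_s,s)\ud s .
\end{align*}
Substituting this back and collecting the drift terms gives $f(\bfxx_t)-f(\bfxx_0)-\int_0^t(\clL_sf)(\bfxx_s)\ud s=\sum_k\int_0^t \bfvv^{(k)}(f)(\bfxx_s,s)\ud\bfww^k_s$. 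Therefore $N^f_t$ is a sum of It\^{o} integrals with respect to the Wiener process, which makes it a continuous local martingale.

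The step I expect to be the main obstacle is upgrading this local martingale to a genuine martingale, together with the caveat that the solution is only defined up to a stopping time $\tau$. For $f\in C^\infty_c(\clM)$ the integrands $\bfvv^{(k)}(f)$ are bounded, since they are smooth with compact support; locally bounded dependence on $t$ suffices here. The It\^{o} isometry then shows each stopped integral is square-integrable, so $N^f_{t\wedge\tau_n}$ is a true martingale for an exhausting sequence of stopping times $\tau_n\uparrow\tau$, for instance exit times of compact sets. For general $f\in C^\infty(\clM)$ the statement holds in this localized sense up to $\tau$, which matches the sense used in \defref{BM-on-manifold}. One remaining technical point is that applying \defref{sde-RM} to $g_k$ requires $g_k$ to be an admissible test function. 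This holds because $\bfvv^{(k)}(f)$ is again smooth, and compactly supported when $f$ is, so the argument closes.
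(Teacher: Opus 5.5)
Your proof is correct and is the standard argument behind the cited corollary; the paper gives no proof of its own, only the reference to \citet{thalmaier2023stochastic}. Applying the solution identity to $\bfvv^{(k)}(f)$ yields $\dd[\bfvv^{(k)}(f)(\bfxx_\cdot),\bfww^k]_s={\bfvv^{(k)}}^2(f)(\bfxx_s)\ud s$, hence $N^f_t=\sum_k\int_0^t\bfvv^{(k)}(f)(\bfxx_s)\ud\bfww^k_s$, and restricting the genuine martingale claim to $f\in C^\infty_c(\clM)$ (only a local martingale for general $f\in C^\infty(\clM)$) is the right reading, since the literal statement can fail, e.g.\ $f(\bfxx)=\Vert\bfxx\Vert^{-1}$ for Brownian motion on $\bbR^3\setminus\{\bfzro\}$ gives a strict local martingale.
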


\section{Lie Group and Quotient Space} \label{appx:rigorous-construction}

In this section, we formally describe the Lie group and rigorously construct the quotient space as a manifold. Please refer to the standard textbooks \citet{lee2018introduction} for details in the systematic treatment.

\subsection{Lie Group and Its Action on a Manifold} \label{appx:lie-group-action}
With the definition of a smooth manifold, we first define the concept of Lie group as a continuous group with good properties.

\begin{definition}\label{def:lie-group}
    A \textbf{Lie group} is a smooth manifold $\clG$ that is also a group with the property that the multiplication map $\clG\times\clG\to\clG, (g,h)\mapsto g\cdot h$ and the inversion map $\clG\to\clG, g\mapsto g^{-1}$ are both smooth.
\end{definition}
Define the left multiplication mapping $L_g(h) = g h$, which is introduced to differentiate $g$ as a Lie-group element and as an action on a group element. A vector field $\bfvv$ on $\clG$ is said to be left-invariant if it is invariant under all left multiplications, \ie $(L_g)_{*g'}(\bfvv_{g'}) = \bfvv_{gg'}$.

Certain vector fields on a Lie group have an algebraic structure called the Lie algebra. We first give the general axiomatic definition.
\begin{definition}\label{def:lie-algebra}
    A \textbf{Lie algebra} is a vector space $\frgg$ endowed with a map called the Lie bracket $[\cdot,\cdot]: \frgg\times\frgg\to\frgg$ that satisfies the following properties for all $X,Y,Z\in \frgg$:
    \begin{itemize}
        \item Bilinearity: $\forall a,b\in\bbR$, $$[aX+bY,Z]=a[X,Z]+b[Y,Z], [Z,aX+bY]=a[Z,X]+b[Z,Y];$$
        \item Antisymmetry: $[X,Y]=-[Y,X]$;
        \item Jacobi Identity: $[X,[Y,Z]]+[Y,[Z,X]]+[Z,[X,Y]]=0$.
    \end{itemize}
\end{definition}
All the smooth left-invariant vector fields on a Lie group $\clG$ form a Lie algebra $\frgg$, called the \textbf{Lie algebra of $\clG$}. It has the same dimension as $\clG$. %Intuitively, the Lie algebra $\frgg$ is the infinitesimal move
Importantly, left-invariant vector fields are isomorphic to the tangent space at the identity, $T_e\clG$, which is also identified as the Lie algebra $\frgg$ of the Lie group~$\clG$.

\begin{example}\label{example:so3}
    The Lie algebra of the group $\SO(3)$, denoted by $\so(3)$, is given by all the 3-dimensional antisymmetric matrices $\so(3)=\{\bfA\in\bbR^{3\times3} \mid \bfA+\bfA\trs=\bfzro\}$.
\end{example}

A Lie group typically represents a set of transformations on a state space $\clM$ of interest. We formally define this as the group action. In the following, assume that the total space $\clM$ is a Riemannian manifold and $\clG$ is a compact Lie group.
\begin{definition}
    Let $\clG$ be a group and $\clM$ is a Riemannian manifold. A (left) \textbf{group action} of $\clG$ on $\clM$ is a map $\clG\times\clM\to\clM$, $(g,\bfxx)\mapsto g\cdot \bfxx$, satisfying $g_1\cdot(g_2\cdot\bfxx)=(g_1g_2)\cdot\bfxx$ and $e \cdot\bfxx=\bfxx, \forall g_1,g_2\in \clG, \bfxx\in\clM$. An action is smooth if its defining map $\clG\times\clM\to\clM$ is smooth.
    We also reload the notation $L_g(\bfxx) := g \cdot \bfxx$ for distinguishing $g$ as an element from as a transformation on the manifold.
\end{definition}

In the case where the Lie group acts on a Riemannian manifold, to draw meaningful conclusions, we would expect some compatibility between group action and the Riemannian metric, which is the concept of an isometric action.
Moreover, to ensure the topological structure of the quotient space so as to define useful constructions on the quotient space, concepts of a free action and proper action are introduced.
\begin{definition}\label{def:action-free-proper}
    \itemone A smooth action is said to be an \textbf{isometric} action if the map $L_g:\clM\to\clM,\bfxx\mapsto g\cdot\bfxx$ is an isometry for any $g\in\clG$, \ie,
    \begin{align}
        \lrangle{\bfuu,\bfvv}_\bfxx=\lrangle{(L_g)_{*\bfxx}(\bfuu), (L_g)_{*\bfxx} (\bfvv)}_{g\cdot\bfxx}. \label{eqn:isom-action}
    \end{align}
    \itemtwo A smooth action is said to be \textbf{free} if for any $\bfxx \in \clM$, $g\cdot \bfxx=\bfxx$ indicates $g=e$.
    \itemthr A smooth action is said to be \textbf{proper} if the map $\clG\times\clM\to\clM\times\clM, (g,\bfxx)\mapsto(g\cdot\bfxx,\bfxx)$ is a proper map, meaning that the preimage of every compact set is compact.
\end{definition}
For the properness, there is a convenient characterization.
\begin{proposition}\label{prop:proper-condition}
\citep[Prop.~C.15]{lee2018introduction} Assume $\clG$ is a Lie group acting smoothly on the smooth manifold $\clM$. The action is proper if and only if the following condition is satisfied: if $\{p^{(n)}\}_n$ is a sequence in $\clM$ and $\{g^{(n)}\}_n$ is a sequence in $\clG$ such that both $\{p^{(n)}\}_n$ and $\{g^{(n)}\cdot p^{(n)}\}_n$ converge, then a subsequence of $\{g^{(n)}\}_n$ converges. Particularly, every smooth action by a compact Lie group on a smooth manifold is proper.
\end{proposition}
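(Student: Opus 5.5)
The plan is to work directly with the map $\Theta:\clG\times\clM\to\clM\times\clM$, $(g,p)\mapsto(g\cdot p,p)$. The one topological fact used throughout is that smooth manifolds (and hence $\clG\times\clM$ and $\clM\times\clM$) are Hausdorff, second countable and locally Euclidean. They are therefore metrizable, so compactness and sequential compactness coincide on them. Both implications then reduce to statements about sequences. The final claim about compact groups will fall out of the sequence criterion.

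\textbf{Proper $\Rightarrow$ sequence condition.} Suppose $\Theta$ is proper, and let $p^{(n)}\to p$ and $g^{(n)}\cdot p^{(n)}\to q$. Take the set $K:=\{(g^{(n)}\cdot p^{(n)},p^{(n)})\}_n\cup\{(q,p)\}$. It is compact, being a convergent sequence together with its limit in a Hausdorff space. By properness, $\Theta^{-1}(K)$ is compact, and it contains every $(g^{(n)},p^{(n)})$. Sequential compactness of $\Theta^{-1}(K)$ then yields a convergent subsequence of $(g^{(n)},p^{(n)})$. Projecting to the first factor gives a convergent subsequence of $\{g^{(n)}\}_n$.

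\textbf{Sequence condition $\Rightarrow$ proper.} Let $K\subset\clM\times\clM$ be compact and let $(g^{(n)},p^{(n)})$ be any sequence in $\Theta^{-1}(K)$.
\begin{itemize}
\item Since $K$ is sequentially compact, we may pass to a subsequence along which both $p^{(n)}$ and $g^{(n)}\cdot p^{(n)}$ converge.
\item The hypothesis then gives a further subsequence with $g^{(n)}\to g$, so $(g^{(n)},p^{(n)})\to(g,p)$.
\item Because $\Theta$ is continuous and $K$ is closed (compact in a Hausdorff space), $\Theta^{-1}(K)$ is closed. Hence the limit $(g,p)$ lies in $\Theta^{-1}(K)$.
\end{itemize}
Thus $\Theta^{-1}(K)$ is sequentially compact, and so compact by metrizability of $\clG\times\clM$.

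\textbf{Compact $\clG$.} Every sequence in a compact manifold has a convergent subsequence. So the sequence condition holds trivially, and the action is proper by the equivalence just shown.

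The only step needing care is the passage between compactness and sequential compactness. I would justify it explicitly by the metrizability of manifolds, via Urysohn's metrization theorem applied to second countable, Hausdorff, locally compact (hence regular) spaces. Everything else is routine extraction of subsequences.
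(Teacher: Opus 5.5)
Your proof is correct. The paper gives no proof of its own: it only cites Lee, where this is the sequence characterization of proper actions together with its compact-group corollary. Your argument essentially reproduces that standard textbook proof. You study $\Theta(g,p)=(g\cdot p,p)$, use that compactness and sequential compactness coincide on manifolds, and get the compact-group case directly from the sequence criterion.

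The remaining differences are cosmetic. One is your choice of the compact set $\{(g^{(n)}\cdot p^{(n)},p^{(n)})\}_n\cup\{(q,p)\}$ in the forward direction; a product of precompact neighborhoods of the two limits works equally well. The other is your appeal to full metrizability, where first and second countability already suffice.
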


\subsection{Construction of the Quotient Space} \label{appx:quotient-space}

The group action typically represents a symmetry in the sense that points that can be transformed to each other by a group action are regarded as symmetric, \ie, they are equivalent.
Therefore, we can define an equivalence relation $\sim$ on $\clM$ as $\bfxx \sim \bfxx'$ if $\exists g\in \clG, \bfxx' = g \cdot \bfxx$. The equivalence class with representative $\bfxx$ is defined as the set of all points that are equivalent to $\bfxx$.
The quotient space $\clQ:=\clM/\clG$ (as a set) is defined under this equivalence relation, which consists of equivalence classes under the relation $\sim$.
The original space $\clM$ is referred to as the total space.
There is a natural mapping called projection that connects the total space and the quotient space, which maps any $\bfxx \in \clM$ to the equivalence class it represents. In this case where the equivalence is defined by a Lie group, the projection mapping can be written as:
\begin{align}
    \pi: \clM \to \clQ, \; \pi(\bfxx) := \{ g \cdot \bfxx \mid g \in \clG \}.
\end{align}
Due to this expression, the equivalence class in such a case is the orbit of the Lie group $\clG$ at $\bfxx$. Therefore, it can be understood that an equivalence class is a ``representation'' (literal meaning; not the mathematical concept) of the Lie group, hence can also adopt manifold structures of $\clG$ under the mentioned ``good'' conditions.
Also, the $(\clM, \clQ, \pi)$ structure forms a fiber bundle, in which context the equivalence class is also called a fiber at $\pi(\bfxx)$, and this special fiber bundle induced from a Lie group action is called a principal $\clG$-bundle.

Moreover, under certain conditions, the quotient space inherits the Riemannian structure of the total space $\clM$ through the projection mapping.

\begin{theorem}\label{thm:rie-submersion}
    \citep[Cor.~2.29]{lee2018introduction} Let $\clM$ be a Riemannian manifold, and $\clG$ be a Lie group acting smoothly, freely, properly, and isometrically on $\clM$. Then the quotient space $\clQ := \clM/\clG$ has a unique smooth manifold structure and Riemannian metric such that $\pi$ is a Riemannian submersion.
\end{theorem}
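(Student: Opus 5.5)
The statement cited last is Theorem~\ref{thm:rie-submersion}, the standard result that a smooth, free, proper, isometric action yields a manifold quotient on which $\pi$ is a Riemannian submersion. We prove it in two stages: first the smooth structure on $\clQ$, then the Riemannian metric.

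\textbf{Smooth structure.} We use the quotient manifold theorem for free proper actions. Properness (\propref{proper-condition}) makes $\clQ$ Hausdorff, since the orbit relation $\{(\bfxx, g\cdot\bfxx)\}$ is then closed in $\clM\times\clM$. Second countability is inherited because $\pi$ is an open map. For charts, fix $\bfxx$. The orbit map $g\mapsto g\cdot\bfxx$ is an injective immersion: injectivity follows from freeness, and constant rank follows from equivariance. Properness upgrades it to an embedding, so the orbit $\clG\cdot\bfxx$ is an embedded submanifold. We then choose a slice, i.e.\ a small submanifold $S$ through $\bfxx$ transverse to the orbit, for instance the image under $\exp_\bfxx$ of a small ball in the normal space of the orbit. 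We must show that $\clG\times S\to\clM$ is a diffeomorphism onto a $\clG$-invariant open set. Its differential at $(e,\bfxx)$ is invertible by transversality. Injectivity after shrinking $S$ needs properness: if it failed, we would get sequences $g^{(n)}\cdot s^{(n)}=s'^{(n)}$ with $s^{(n)},s'^{(n)}\to\bfxx$. A subsequence of $g^{(n)}$ then converges to some $g$ with $g\cdot\bfxx=\bfxx$, so $g=e$ by freeness, which contradicts local injectivity near $(e,\bfxx)$. Hence $S$ maps bijectively onto its image in $\clQ$, and these slice maps serve as charts. Transition maps are smooth because they are compositions of smooth maps through $\clG\times S$. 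With these charts $\pi$ is locally the projection $\clG\times S\to S$, so it is a smooth submersion. Uniqueness of the smooth structure follows from the universal property of surjective submersions: a map out of $\clQ$ is smooth iff its composition with $\pi$ is smooth.

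\textbf{Metric.} Split $T_\bfxx\clM=\clV_\bfxx\oplus\clH_\bfxx$ with $\clV_\bfxx=\Ker\pi_{*\bfxx}$. Then $\pi_{*\bfxx}|_{\clH_\bfxx}\colon\clH_\bfxx\to T_{\pi(\bfxx)}\clQ$ is a linear isomorphism, and we define $\lrangle{\bfuu,\bfvv}_{\pi(\bfxx)}$ by pulling $\bfuu,\bfvv$ back to horizontal vectors at $\bfxx$ and taking their inner product there.

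\begin{itemize}
  \item \emph{Independence of the representative} is the key step. Because $\pi\circ L_g=\pi$, the map $(L_g)_*$ sends $\clV_\bfxx$ to $\clV_{g\cdot\bfxx}$. Because $L_g$ is an isometry (\eqnref{isom-action}), it also sends $\clH_\bfxx$ to $\clH_{g\cdot\bfxx}$. It carries horizontal lifts at $\bfxx$ to horizontal lifts at $g\cdot\bfxx$ and preserves their inner products, so the two definitions agree.
  \item \emph{Smoothness} is checked locally. Take a smooth local section $\sigma$ of $\pi$, given by the slice. Horizontal lifts along $\sigma$ are obtained by applying the smooth orthogonal projection onto $\clH$ to smooth local vector fields, so the resulting metric coefficients are smooth.
  \item \emph{Riemannian submersion and uniqueness.} By construction $\pi_*$ is an isometry on $\clH$, so $\pi$ is a Riemannian submersion. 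This property forces the metric on $T_{\pi(\bfxx)}\clQ$ to be exactly the one defined above, which gives uniqueness.
\end{itemize}

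The main obstacle is the slice construction in the first stage, specifically the injectivity argument that uses properness. The metric stage is routine once the isometry of $(L_g)_*$ on horizontal spaces is established.
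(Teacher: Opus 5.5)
Your proof is correct and follows the same route as the result the paper cites from Lee. That route is the quotient manifold theorem for the smooth structure, then the metric defined through horizontal lifts, whose well-definedness the paper itself checks in Appendix~\ref{appx:quotient-space} exactly as you do (isometries carry $\clH_\bfxx$ to $\clH_{g\cdot\bfxx}$ and horizontal lifts to horizontal lifts); the only difference is that you prove the quotient manifold theorem via slices instead of citing it. One detail to tighten in that slice step: you must propagate invertibility of the differential of $F(g,s)=g\cdot s$ at $(e,\bfxx)$ to all of $\clG\times S$ (after shrinking $S$) using $F(hg,s)=h\cdot F(g,s)$, since injectivity plus invertibility at a single point does not yet make $F$ a diffeomorphism onto an open set.
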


We will assume the conditions, \ie, $\clG$ is a Lie group acting smoothly, freely, properly, and isometrically on $\clM$, in the following development.
Given that $\clQ$ is a smooth manifold, the projection mapping induces a linear mapping $\pi_*$ between the tangent spaces of the two manifolds. It introduces more structures in the total space $\clM$.
In each tangent space $T_x \clM$, we can define a subspace of it, called the \textbf{vertical space}, by the kernel of $\pi_*$:
\begin{align}
    \clV_\bfxx:=\Ker\,\pi_{*\bfxx}.
\end{align}
By this definition, tangent vectors in the vertical space can be understood that it does not move $\bfxx$ in a way that alters the projection onto $\clQ$ by $\pi$, so the movement stays within the equivalence class. The vertical space can then be understood as the tangent space of the equivalence class.
As mentioned above, in this case where the quotient space is induced from the Lie group $\clG$, the equivalence class is a ``representation'' of the Lie group, hence the vertical space is a ``mirror'' of the tangent space of the Lie group, which is in turn isomorphic to the Lie algebra $\frgg$ of the Lie group $\clG$.

To complete the whole tangent space, a concept of horizontal space $\clH_\bfxx$ is expected. In general, the horizontal space $\clH_\bfxx$ is a linear subspace of $T_\bfxx \clM$ that makes up $T_\bfxx \clM$ by direct sum with $\clV_\bfxx$:
\begin{align}
    T_\bfxx \clM = \clV_\bfxx \oplus \clH_\bfxx. \label{eqn:h-v-direct-sum}
\end{align}
Under this direct-sum construction, any tangent vector $\bfvv \in T_\bfxx \clM$ can then be \emph{uniquely} decomposed into the vertical and horizontal components, $\bfvv = \bfvv^\clV + \bfvv^\clH$.
Correspondingly, a vector field on $\clM$ is called a vertical/horizontal vector field if it takes a vertical/horizontal tangent vector at every point.
Every smooth vector field $\bfvv$ on $\clM$ can be expressed uniquely in the form $\bfvv=\bfvv^\clV + \bfvv^\clH$, where % $\bfvv^\clV$ is vertical and $\bfvv^\clH$ is horizontal and
both the vertical and horizontal vector fields are smooth~\citep[Prop.~2.25]{lee2018introduction}.
For future reference, we assign a convenient notation to the horizontal projection within $T_\bfxx \clM$ itself:
\begin{align}
    P_\bfxx(\bfvv) := \bfvv^\clH, \quad \forall \bfvv \in T_\bfxx \clM.
\end{align}

Nevertheless, the horizontal space $\clH_\bfxx$ as a subspace that makes up the tangent space $T_\bfxx \clM$ by the direct sum with $\clV_\bfxx$ is not unique.
Therefore, a smooth correspondence from $\bfxx$ to such an $\clH_\bfxx$ is an independent structure, referred to as the ``connection'' in the fiber-bundle context.
In the current specific case where $\clM$ endows a Riemannian structure, we can uniquely define the horizontal space as the orthogonal complement under the inner product in the tangent space:
\begin{align}
    \clH_\bfxx := \clV_\bfxx^{\perp_{(T_\bfxx \clM, \lrangle{\cdot,\cdot}_\bfxx)}},
    \label{eqn:riem-horizontal-space}
\end{align}
which gives a canonical ``connection''.

As would be expected, in contrast to vertical tangent vectors, a horizontal tangent vector represents a movement through different equivalence classes, corresponding to a movement on the quotient space $\clQ$. Therefore, we can construct the concept of horizontal lift which establishes a correspondence from a vector field on $\clQ$ to a horizontal vector field on $\clM$.
\begin{definition} \label{def:horizontal-lift}
    Given a vector field $\bfuu$ on $\clQ$, a vector field $\bfuut$ on $\clM$ is called a \textbf{horizontal lift} of $\bfuu$, if $\bfuut$ is a horizontal vector field, \ie, $\bfuut_\bfxx \in \clH_{\bfxx}$ for all $\bfxx \in \clM$, and $\bfuut$ is $\pi$-related to $\bfuu$ by $\pi_{*\bfxx}(\bfuut_\bfxx) = \bfuu_{\pi(\bfxx)}$.
\end{definition}

\begin{proposition}\label{prop:horizontal-vf}
    \citep[Prop.~2.25]{lee2018introduction} Given a smooth connection $\bfxx \mapsto \clH_\bfxx$ and assuming $\pi:\clM\to\clQ$ is a smooth submersion,
        every smooth vector field on $\clQ$ always has a unique smooth horizontal lift to $\clM$.
\end{proposition}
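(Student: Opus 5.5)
The statement is Proposition~\ref{prop:horizontal-vf}: under a smooth connection $\bfxx\mapsto\clH_\bfxx$ and a smooth submersion $\pi$, every smooth vector field $\bfuu$ on $\clQ$ has a unique smooth horizontal lift $\bfuut$. I would handle this pointwise first, then globally, and only then address smoothness.

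\textbf{Pointwise existence and uniqueness.} Fix $\bfxx\in\clM$ and $\bfyy=\pi(\bfxx)$. Since $\pi$ is a submersion, $\pi_{*\bfxx}:T_\bfxx\clM\to T_\bfyy\clQ$ is surjective with kernel $\clV_\bfxx$. By the direct sum \eqnref{h-v-direct-sum}, the restriction $\pi_{*\bfxx}|_{\clH_\bfxx}$ is injective, because $\clH_\bfxx\cap\clV_\bfxx=\{\bfzro\}$. It is also surjective: if $\bfvv$ has $\pi_{*\bfxx}(\bfvv)=\bfuu$, then $\pi_{*\bfxx}(\bfvv^\clH)=\bfuu$ as well. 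Hence $\pi_{*\bfxx}|_{\clH_\bfxx}$ is a linear isomorphism. Any horizontal lift must therefore satisfy
\[
\bfuut_\bfxx=(\pi_{*\bfxx}|_{\clH_\bfxx})^{-1}(\bfuu_{\pi(\bfxx)}).
\]
This gives uniqueness, and this formula is taken as the definition of $\bfuut$, which gives existence as a (possibly rough) vector field.

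\textbf{Smoothness.} This is the only real step, and it is local. By the local submersion theorem, around any $\bfxx$ there exist a neighborhood $\clU$ and a smooth local section $\sigma:\clW\to\clM$ of $\pi$ with $\bfxx\in\sigma(\clW)$. Alternatively, one can use submersion coordinates in which $\pi$ is the projection $(a,b)\mapsto a$. In those coordinates the vector field $\bar\bfuu=\sum_i u^i(a)\partial_{a^i}$ is smooth on $\clU$ and $\pi$-related to $\bfuu$. Then $\bfuut=P(\bar\bfuu)$, where $P$ is the horizontal projection. Because the connection is smooth, $\clH$ and $\clV$ are smooth subbundles, so $P$ is a smooth bundle endomorphism: locally it is built from smooth frames of $\clH$ and $\clV$ by inverting a smooth invertible matrix. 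Applying $P$ to the smooth $\bar\bfuu$ gives a smooth field.

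It remains to check that $P(\bar\bfuu)$ agrees with the pointwise definition. It is horizontal, and $\pi_*P(\bar\bfuu)=\pi_*\bar\bfuu=\bfuu$, so by pointwise uniqueness it equals $\bfuut$ on $\clU$. Since these local pieces all coincide with the globally defined $\bfuut$, $\bfuut$ is smooth everywhere.

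The main obstacle is justifying that $P$ depends smoothly on $\bfxx$, that is, that a smooth distribution $\clH$ complementary to the smooth kernel distribution $\clV$ yields smooth projections. I would argue this by choosing local smooth frames $\{E_i\}$ of $\clV$ and $\{F_j\}$ of $\clH$. Together they form a local frame of $T\clM$, so the coefficients of any smooth field in this frame are smooth. Dropping the $E_i$ components gives $P$ with smooth coefficients.
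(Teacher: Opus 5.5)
Your argument is correct and is essentially the standard proof in the cited source \citep[Prop.~2.25]{lee2018introduction}, which the paper invokes without reproducing. There, $\pi_{*\bfxx}|_{\clH_\bfxx}$ is shown to be a linear isomorphism, which gives uniqueness and a rough lift, and smoothness follows by taking submersion coordinates, writing down a smooth local field $\pi$-related to $\bfuu$, and projecting it horizontally. The one difference is that Lee gets smoothness of the projection from $\clH_\bfxx=\clV_\bfxx^\perp$ under a Riemannian metric, whereas your local-frame argument covers an arbitrary smooth connection, as in the paper's statement. The local-section remark is superfluous: the submersion chart is what you actually use, and it also supplies the smooth local frame $\{\partial_{b^j}\}$ of $\clV$ that you take for granted.
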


If the connection is induced from the Riemannian structure of $\clM$ by \eqnref{riem-horizontal-space} and if the group action is isometric, then a nice compatibility can be derived.
For a quotient-space tangent vector $\bfuu \in T_\bfyy \clQ$ at some $\bfyy \in \clQ$, consider two ways to construct a tangent vector at some point $\bfxx \in \pi^{-1}(\bfyy)$ in the equivalence class.
The first way is directly by the horizontal lift, which gives $\bfuut_\bfxx$, which is the unique horizontal tangent vector such that $\pi_{*\bfxx}(\bfuut_\bfxx) = \bfuu$.
The other way is to first horizontal-lift $\bfuu$ to another point $\bfxx' \in \pi^{-1}(\bfyy)$ in the equivalence class, then push it forward to the tangent space at $\bfxx$ by a transformation that maps $\bfxx'$ to $\bfxx$.
Since both points lie in the same equivalence class and the Lie group acts on the manifold freely, there exists a unique group action $g \in \clG$ such that $\bfxx = g \cdot \bfxx' = L_g(\bfxx')$, so the resulting tangent vector is $(L_g)_{*\bfxx'}(\bfuut_{\bfxx'})$.
Noting that $(L_g)_{*\bfxx'}$ preserves the metric between $T_{\bfxx'} \clM$ and $T_\bfxx \clM$ (see \eqnref{isom-action}), we know that it also preserves the horizontal spaces, \ie, $(L_g)_{*\bfxx'}(\clH_{\bfxx'}) = \clH_\bfxx$, so $(L_g)_{*\bfxx'}(\bfuut_{\bfxx'}) \in \clH_\bfxx$. Moreover, $\pi_{*\bfxx}\big( (L_g)_{*\bfxx'}(\bfuut_{\bfxx'}) \big) = (\pi \circ L_g)_{*\bfxx'}(\bfuut_{\bfxx'}) = \pi_{*\bfxx'}(\bfuut_{\bfxx'}) = \bfuu$ also projects to the quotient-space tangent vector $\bfuu$ (noting that $\pi \circ L_g = \pi$ for any $g \in \clG$, and recalling the definition of horizontal lift), by the uniqueness of the horizontal tangent vector that projects to $\bfuu$, we have $\bfuut_\bfxx = (L_g)_{*\bfxx'}(\bfuut_{\bfxx'})$, or equivalently,
\begin{align}
    \bfuut_{g \cdot \bfxx} = (L_g)_{*\bfxx}(\bfuut_{\bfxx}),
    \quad
    \forall \bfxx \in \pi^{-1}(\bfyy), g \in \clG.
    \label{eqn:lift-pushforward-commute}
\end{align}

The unique existence of the correspondence from $T_{\pi(\bfxx)} \clQ$ back to $T_\bfxx \clM$ by horizontal lift (\propref{horizontal-vf}) allows us to introduce a Riemannian structure on $\clQ$ from that on $\clM$.
For any $\bfyy \in \clQ$ and $\bfuu^{(1)}, \bfuu^{(2)} \in T_\bfyy \clQ$, define:
\begin{align}
  \lrangle*{\bfuu^{(1)}, \bfuu^{(2)}}^\clQ_\bfyy := \lrangle*{\bfuut^{(1)}_\bfxx, \bfuut^{(2)}_\bfxx}_\bfxx^\clM,
  \label{eqn:riem-Q}
\end{align}
for any $\bfxx \in \pi^{-1}(\bfyy)$.
This is well-defined since the right hand side is independent the choice of $\bfxx$ due to the horizontal-lift--push-forward compatibility (\eqnref{lift-pushforward-commute}) and isometry (\eqnref{isom-action}):
$\lrangle*{\bfuut^{(1)}_{g \cdot \bfxx}, \bfuut^{(2)}_{g \cdot \bfxx}}_{g \cdot \bfxx}^\clM = \lrangle*{(L_g)_{*\bfxx}(\bfuut^{(1)}_{\bfxx}), (L_g)_{*\bfxx}(\bfuut^{(2)}_{\bfxx})}_{g \cdot \bfxx}^\clM
= \lrangle*{\bfuut^{(1)}_{\bfxx}, \bfuut^{(2)}_{\bfxx}}_{\bfxx}^\clM$.

Subsequent constructions on the quotient space $\clQ$ can be induced from this Riemannian structure. Due to its compatibility with the original Riemannian manifold $\clM$, these constructions have direct connections to their counterparts on $\clM$. Particularly, the Levi-Civita connections on $\clQ$ and $\clM$ follow the relation below.
\begin{proposition}\label{prop:connection-quotient-space}
    \citep[Exercise.~5.6]{lee2018introduction} Let $\nabla^\clM$ and $\nabla^\clQ$ denote the Levi-Civita connections on $\clM$, $\clQ$, respectively, where $\nabla^\clQ$ is constructed from the Riemannian metric induced from that of $\clM$ by \eqnref{riem-Q}. Then for any vector fields $\bfuu^{(1)},\bfuu^{(2)}$ on $\clQ$, denoting their horizontal lifts to $\clM$ as $\bfuut^{(1)},\bfuut^{(2)}$, we have:
    \begin{align}
        \nabla^\clM_{\bfuut^{(1)}} \bfuut^{(2)} = \widetilde{\nabla^\clQ_{\bfuu^{(1)}} \bfuu^{(2)}} + \frac12 \lrparen{\clL_{\bfuut^{(1)}} \bfuut^{(2)}}^\clV,
    \end{align}
    where $\widetilde{\nabla^\clQ_{\bfuu^{(1)}} \bfuu^{(2)}}$ denotes the horizontal lift of the vector field $\nabla^\clQ_{\bfuu^{(1)}} \bfuu^{(2)}$ on $\clQ$, and $\clL_{\bfuut^{(1)}} \bfuut^{(2)}$ is the Lie derivative (or commutator) of vector field $\bfuut^{(2)}$ under $\bfuut^{(1)}$, defined as $\left.\fracdiff{}{t}\right\vert_{t=0} (\Phi_t)_* \bfuut^{(2)}$ where $\Phi_t$ is the flow of vector field $\bfuut^{(1)}$. The Lie derivative adopts an explicit expression $\clL_{\bfuut^{(1)}} \bfuut^{(2)} (f) = \bfuut^{(1)}(\bfuut^{(2)}(f)) - \bfuut^{(2)}(\bfuut^{(1)}(f))$. Particularly,
    \begin{align}
        \widetilde{\nabla^\clQ_{\bfuu^{(1)}} \bfuu^{(2)}} = (\nabla^\clM_{\bfuut^{(1)}} \bfuut^{(2)})^\clH.
    \end{align}
\end{proposition}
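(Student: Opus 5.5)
The proof relies on the Koszul formula for the Levi-Civita connection together with three facts about $\pi$-related vector fields. For any vector fields $X,Y,Z$ on $\clM$, the Koszul formula reads
\begin{align*}
2\lrangle{\nabla^\clM_X Y, Z} = X\lrangle{Y,Z} + Y\lrangle{Z,X} - Z\lrangle{X,Y} + \lrangle{[X,Y],Z} - \lrangle{[Y,Z],X} + \lrangle{[Z,X],Y}.
\end{align*}
The same formula holds on $\clQ$ for $\nabla^\clQ$. Here $[\cdot,\cdot]$ denotes the commutator $\clL_X Y$. The three facts are:
\begin{itemize}
  \item[(a)] By the definition of the metric on $\clQ$ in \eqnref{riem-Q}, $\lrangle{\bfuut^{(i)},\bfuut^{(j)}}^\clM = \lrangle{\bfuu^{(i)},\bfuu^{(j)}}^\clQ\circ\pi$. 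The right-hand side is constant on fibers, so any vertical field kills it.
  \item[(b)] Since $\bfuut^{(i)}$ is $\pi$-related to $\bfuu^{(i)}$, we have $\bfuut^{(i)}(f\circ\pi)=(\bfuu^{(i)}f)\circ\pi$.
  \item[(c)] Brackets of $\pi$-related fields are $\pi$-related, so $[\bfuut^{(1)},\bfuut^{(2)}]$ is $\pi$-related to $[\bfuu^{(1)},\bfuu^{(2)}]$. Hence its horizontal part is the horizontal lift $\widetilde{[\bfuu^{(1)},\bfuu^{(2)}]}$. Likewise, a vertical field $V$ is $\pi$-related to the zero field, so $[V,\bfuut^{(i)}]$ is $\pi$-related to $[0,\bfuu^{(i)}]=0$; that is, it is vertical.
\end{itemize}

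\textbf{Horizontal part.} Take a third field $\bfuu^{(3)}$ on $\clQ$ with lift $\bfuut^{(3)}$, and put $X=\bfuut^{(1)}$, $Y=\bfuut^{(2)}$, $Z=\bfuut^{(3)}$ in the Koszul formula. The three derivative terms become, by (a) and (b), the corresponding quotient-space terms composed with $\pi$. In the three bracket terms, the inner product with a horizontal field only sees the horizontal part of the bracket, which by (c) is the lift of the quotient bracket; by (a) these terms are the quotient ones composed with $\pi$. Therefore
\begin{align*}
\lrangle{\nabla^\clM_{\bfuut^{(1)}}\bfuut^{(2)}, \bfuut^{(3)}} = \lrangle{\nabla^\clQ_{\bfuu^{(1)}}\bfuu^{(2)}, \bfuu^{(3)}}\circ\pi = \lrangle{\widetilde{\nabla^\clQ_{\bfuu^{(1)}}\bfuu^{(2)}}, \bfuut^{(3)}}.
\end{align*}
Horizontal lifts of vector fields on $\clQ$ exhaust $\clH_\bfxx$ at every point, since lifting is a linear isomorphism $T_{\pi(\bfxx)}\clQ\to\clH_\bfxx$ and every tangent vector extends to a vector field (\propref{horizontal-vf}). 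This gives $(\nabla^\clM_{\bfuut^{(1)}}\bfuut^{(2)})^\clH = \widetilde{\nabla^\clQ_{\bfuu^{(1)}}\bfuu^{(2)}}$, which is the second claim.

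\textbf{Vertical part.} Put $X=\bfuut^{(1)}$, $Y=\bfuut^{(2)}$, $Z=V$ with $V$ an arbitrary vertical field. The terms $X\lrangle{Y,V}$ and $Y\lrangle{V,X}$ vanish because horizontal and vertical fields are orthogonal. The term $V\lrangle{X,Y}$ vanishes by (a). The terms $\lrangle{[Y,V],X}$ and $\lrangle{[V,X],Y}$ vanish because these brackets are vertical by (c). Only $\lrangle{[X,Y],V}$ survives, so
\begin{align*}
\lrangle{\nabla^\clM_{\bfuut^{(1)}}\bfuut^{(2)}, V} = \tfrac12\lrangle{[\bfuut^{(1)},\bfuut^{(2)}]^\clV, V}.
\end{align*}
Every vertical vector extends to a vertical field, so this determines the vertical part: $(\nabla^\clM_{\bfuut^{(1)}}\bfuut^{(2)})^\clV = \tfrac12(\clL_{\bfuut^{(1)}}\bfuut^{(2)})^\clV$. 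Adding the horizontal and vertical parts gives the full identity.

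\textbf{Main obstacle.} The delicate step is (c): that $\pi$-relatedness is preserved by brackets, and in particular that the bracket of a vertical field with a horizontal lift is vertical. I would prove it directly from (b). For any $f\in C^\infty(\clQ)$,
\begin{align*}
[V,\bfuut](f\circ\pi) = V\big((\bfuu f)\circ\pi\big) - \bfuut\big((0\cdot f)\circ\pi\big) = 0,
\end{align*}
because $V$ kills functions pulled back from $\clQ$. Hence $\pi_*[V,\bfuut]=0$, so $[V,\bfuut]$ is vertical.
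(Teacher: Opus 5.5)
Your Koszul-formula argument is correct and complete. Pairing with horizontal lifts gives $(\nabla^{\mathcal M}_{\tilde u^{(1)}}\tilde u^{(2)})^{\mathcal H}=\widetilde{\nabla^{\mathcal Q}_{u^{(1)}}u^{(2)}}$, and pairing with vertical fields gives the vertical part $\frac{1}{2}[\tilde u^{(1)},\tilde u^{(2)}]^{\mathcal V}$, using that $[V,\tilde u]$ is vertical and that $\langle\tilde u^{(1)},\tilde u^{(2)}\rangle$ is constant on fibers. This is the standard proof of O'Neill's formula and the intended solution of the Lee exercise the paper cites in place of a proof of its own.

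One small remark: you use $[X,Y]=XY-YX$, which matches the statement's explicit expression. However, the statement's flow formula $\frac{d}{dt}\vert_{t=0}(\Phi_t)_*\tilde u^{(2)}$, read literally as a push-forward, equals $-[\tilde u^{(1)},\tilde u^{(2)}]$. That is a sign slip in the paper's wording, not a gap in your proof.
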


\subsection{Specifications for the Shape Space $\bbR^{3N}/\SE(3)$} \label{appx:r3N-se3-construction}
For a concrete and practically highly concerned example, we consider the shape space $\bbR^{3N}/\SE(3)$. In this example, each $\bbR^{3N}$ element is structured as:
\begin{align}
    \bfxx = [\xxv^{(1)}, \xxv^{(2)}, \cdots, \xxv^{(N)}] \in \bbR^{3N}, \quad \text{with each } \xxv^{(n)} \in \bbR^3,
\end{align}
which represents the 3-dimensional coordinates of $N$ points in $\bbR^3$ (point cloud).
The $\SE(3)$ group is composed of the 3-dimensional translation group $\bbT(3)$ and the 3-dimensional rotation group $\SO(3)$, altogether representing the set of rigid-body movements of the $N$ points.
Since the translation group $\bbT(3)$ is not compact, there does not exist a probability distribution that is translation invariant.
We (as well as many others \citep{yim2023se,lin2024equivariant}) hence represent the quotient space w.r.t this group by suppressing these equivalent DoFs by choosing a canonical translational position by anchoring the center of mass (CoM) of the point cloud at the origin, and consider the resulting CoM-free subspace $\bbR^{3N}_\CoM := \{\bfxx\in\bbR^{3N} \mid \frac{1}{N}\sum_{n=1}^N \xxv^{(n)}=\zrov\}$ \footnote{Here we choose a simple form of CoM by treating atoms equally weighted to avoid unnecessary notation complexity. In fact, any choice to determine one point in $\bbR^3$ from the $N$ points suffices the reduction of the translation DoFs (as long as proper permutational invariance is guaranteed).}
and consider the $\SO(3)$ action on it.
Since the constraint is linear, this space $\bbR^{3N}_\CoM$ is a linear subspace of $\bbR^{3N}$, and it is naturally a Riemannian manifold with the standard inner product of $\bbR^{3N}$. An element of the $\SO(3)$ group is given by a $3 \times 3$ rotation matrix for which we reload the notation $g$. The natural action of $g$ on $\bfxx$ is defined as $g \cdot \bfxx = \bigl[ g\xxv^{(1)},\; g\xxv^{(2)},\; \cdots,\; g \xxv^{(N)} \bigr]$ ($g$ here represents the rotation matrix), \ie the rotation is applied on each point of the system.

Unfortunately, $\SO(3)$ does not act freely (see \defref{action-free-proper}) on $\bbR^{3N}_\CoM$ in some degenerate cases, \eg all the points lie on a straight line. So we define the subset $\clD\subset\bbR^{3N}_\CoM$ that $\SO(3)$ does not have free action on it; \ie, for any $\bfxx \in \clD$, there exists a nontrivial action $g \ne e \in \SO(3)$ such that $g \cdot \bfxx = \bfxx$, indicating that $\clD$ contains points that have a higher symmetry beyond $\SO(3)$.
For a converging sequence $\{\bfxx^{(n)}\}_n$ in $\clD$ which converges to $\bfxx \in \bbR^{3N}_\CoM$ as $n \to \infty$, there exists a sequence $\{g^{(n)}\}_n$ in $\clG$ such that $g^{(n)} \cdot \bfxx^{(n)} = \bfxx^{(n)}$. Since the group $\SO(3)$ is compact and the group action is continuous, $\{g^{(n)}\}_n$ has a convergent subsequence that converges to $g\in\clG$, which satisfies $g\cdot\bfxx=\bfxx$. Hence $\bfxx\in\clD$, and therefore, $\clD$ is closed.
% Since any $\SO(3)$ action is a linear transformation on $\bbR^{3N}_\CoM$, so $g \cdot \xxv^{(n)} = \xxv^{(n)}$ indicates that the limiting point also satisfies this equation, hence $\clD$ is closed.
Subsequently, $\bbR^{3N}_\CoMcirc := \bbR^{3N}_\CoM\setminus\clD$ is still a smooth manifold.
As $\clD$ is measure-zero in $\bbR^{3N}_\CoM$ (since any $g \in \SO(3)$ is non-singular, the equation $g \cdot \bfxx = \bfxx$ reduces degrees of freedom of $\bfxx$), it is unlikely for a real simulation in $\bbR^{3N}_\CoM$ to hit the set $\clD$, making negligible difference algorithmically.

By removing the degenerate set $\clD$, $\SO(3)$ can now act freely and smoothly on $\bbR^{3N}_\CoMcirc$.
Moreover, since the $\SO(3)$ action is isometric in the Euclidean space and $\bbR^{3N}_\CoMcirc$ inherits the same metric, $\SO(3)$ also acts isometrically on $\bbR^{3N}_\CoMcirc$. Since $\SO(3)$ is a compact group, by \propref{proper-condition}, the action is also proper.
Now that the action is smooth, free, proper, and isometric, by \thmref{rie-submersion}, the quotient space $\clQ := \bbR^{3N}_\CoMcirc/\SO(3)$ is a Riemannian manifold and the projection $\pi:\bbR^{3N}_\CoMcirc\to\clQ$ is a Riemannian submersion.
This $\clQ$ is the concrete construction for the $\bbR^{3N}/\SE(3)$ quotient space.
Since the each element in this quotient space $\clQ$ is an equivalence class containing equivalent point-cloud configurations, we refer to this space $\clQ$ as the ``shape space''.

By the projection mapping $\pi: \bbR^{3N}_\CoMcirc \to \clQ$, the vertical space $\clV_\bfxx := \Ker \pi_{*\bfxx}$ can already be defined, which reflects the infinitesimal movements in $\bbR^{3N}_\CoMcirc$ by group actions, which amounts to movements within the equivalence class $\pi(\bfxx)$ (\appxref{background}).
Since $\bbR^{3N}_\CoMcirc$ is a Riemannian manifold (tangent space inner product is inherited from the standard Euclidean inner product), we can define the horizontal space $\clH_\bfxx := (\clV_\bfxx)^{\perp_{T_\bfxx \bbR^{3N}_\CoMcirc}}$ as the orthogonal complement of $\clV_\bfxx$ in $T_\bfxx \bbR^{3N}_\CoMcirc$. Since $\clV_\bfxx$ and $\clH_\bfxx$ recover $T_\bfxx \bbR^{3N}_\CoMcirc$ by direct sum, any tangent vector $\bfvv \in T_\bfxx \bbR^{3N}_\CoMcirc$ can thus be uniquely decomposed as the addition of a vertical component and horizontal component.

On this concrete example, the vertical and horizontal spaces can be expressed explicitly. Since the vertical space is induced from group action, which acts freely, so this space is isomorphic to the tangent space of the Lie group, \ie, the Lie algebra.
For $\clG=\SO(3)$, the Lie algebra $\so(3)$ is the set of antisymmetric $3\times3$ matrices. So the vertical space is given by:
\begin{align}
    \clV_\bfxx = \{ [\bfA \xxv^{(1)},\; \bfA \xxv^{(2)},\; \cdots,\; \bfA \xxv^{(N)}] \mid \bfA \in \so(3) \}.
\end{align}
Using the 3-dimensional representation $\omegav = [\omegav^1, \omegav^2, \omegav^3]\trs \in \bbR^3$ for $\so(3)$, any antisymmetric $3\times3$ matrix can be represented as: 
\begin{align}
    & \bfA = \begin{bmatrix} 0 & -\omegav^3 & \omegav^2 \\ \omegav^3 & 0 & -\omegav^1 \\ -\omegav^2 & \omegav^1 & 0 \end{bmatrix}
    = \sum_{i=1}^3 \omegav^i \bfJ_i, \label{eqn:so3-expansion} \\
    \text{where} \quad &
    \bfJ_1 := \begin{bmatrix}0&0&0\\0&0&-1\\0&1&0\end{bmatrix},\quad
    \bfJ_2 := \begin{bmatrix}0&0&1\\0&0&0\\-1&0&0\end{bmatrix},\quad
    \bfJ_3 := \begin{bmatrix}0&-1&0\\1&0&0\\0&0&0\end{bmatrix}.
    \label{eqn:J-matrices}
\end{align}
The $\omegav$ vector is the \emph{Euler vector} (or \emph{rotation vector}) representation of $\so(3)$. Its direction represents the axis of rotation, and its length represents the rate of rotation; therefore, the $\omegav$ vector can be thought of as an \emph{angular velocity vector}.
\eqnref{so3-expansion} suggests that $\{\bfJ_1, \bfJ_2, \bfJ_3\}$ forms a basis for $\so(3)$.
Following this representation, we have $\bfA \xxv^{(n)} = \omegav \times \xxv^{(n)}$, where ``$\times$'' denotes the usual cross product on $\bbR^3$, so the vertical space can also be written as:
\begin{align}
    \clV_\bfxx = \{ [\omegav \times \xxv^{(1)},\; \omegav \times \xxv^{(2)},\; \cdots,\; \omegav \times \xxv^{(N)}] \mid \omegav \in \bbR^3 \}.
    \label{eqn:so3-vert}
\end{align}
This can be thought of as the (linear) velocity on each atom induced from the common angular velocity $\omegav$ as if all the atoms are on a rigid body which rotates following the angular velocity $\omegav$. As a rigid-body movement, this (linear) velocity does not deform the shape of the body, hence the ``shape'' of the point cloud is reserved, and the resulting configuration would still be within the equivalence class. This is the intuition behind the concept of the vertical space.

For the horizontal space, which is the orthogonal complement of the vertical space, is given by
\begin{align}
    \clH_\bfxx = \Bigl\{ \bfvv = [\vvv^{(1)}, \cdots, \vvv^{(N)}] \in \bbR^{3N} \; \Big\vert \; \sum_{n=1}^N \vvv^{(n)} =\zrov, \; \sum_{n=1}^N \xxv^{(n)} \times \vvv^{(n)} = \zrov \Bigr\},
    \label{eqn:so3-horiz}
\end{align}
since the $\vvv^{(n)}$ vectors should keep the CoM fixed, and as the orthogonal complement, they should also satisfy $\sum_{n=1}^N \vvv^{(n)} \cdot (\omegav \times \xxv^{(n)}) = \omegav \cdot (\sum_{n=1}^N \xxv^{(n)} \times \vvv^{(n)}) = 0$ for any $\omegav \in \bbR^3$.
Intuitively, the first constraint $\sum_{n=1}^N \vvv^{(n)} = \zrov$ means the total (linear) momentum of the system is zero, meaning that the movement keeps the CoM of the point cloud, \ie, the total/rigid-body translational degree of freedom is fixed.
The second constraint $\sum_{n=1}^N \xxv^{(n)} \times \vvv^{(n)} = \zrov$ means the total angular momentum of the system is zero, meaning that there is no net rotation as a whole, \ie, the total/rigid-body rotational degree of freedom is fixed.
Therefore, vectors in the horizontal space correspond to movements that does not move within an equivalence class, but purely across equivalence classes.

Given the construction of the horizontal space (\ie, a ``connection''), the horizontal lift for a quotient-space tangent vector (and vector field) can be derived. As $\SO(3)$ acts smoothly, freely, properly, and isometrically on $\bbR^{3N}_\CoMcirc$, a Riemannian structure can be induced for $\clQ$ from that of $\bbR^{3N}_\CoMcirc$, which is inherited from the standard Euclidean metric.
Since the horizontal-lifted vector field is horizontal, by the above intuition, it induces a movement purely across equivalence classes, making it a perfect correspondence to a vector field on the quotient space, and avoiding ``unnecessary'' movements that are within an equivalence class.

\section{Proofs}\label{appx:proofs}

\subsection{Proof of Theorem~\ref{thm:projected-diffusion}}\label{appx:proof-thm1}
\begin{theorem*}\label{thm:projected-diffusion-appx}
    Assume $\{\bfxx_t\}_{t\in[0,T]}$ is a diffusion process on $\clM$, specified by the following SDE: 
  \begin{align}
    \ud \bfxx_t = \bfff_t(\bfxx_t) \ud t + \sigma_t \ud \bfww_t, \quad \bfxx_0 \sim p_\prior,
    \tag{\ref{eqn:original-sde}'}
  \end{align}
  where $\bfff_t$ is a $\clG$-equivariant vector field on $\clM$, $\bfww_t$ is the Wiener process on $\clM$,\footnote{The Wiener process $\bfww_t$ is defined under the Riemannian metric (\defref{BM-on-manifold}). The transition probability it induces is $\clG$-invariant when the group $\clG$ acts on the Riemannian manifold $\clM$ isometrically (\defref{action-free-proper}\itemone).}
  and $p_\prior$ is a $\clG$-invariant distribution.
  Then the projected process $\{\bfyy_t:=\pi(\bfxx_t)\}_{t\in[0,T]}$ onto the quotient space $\clQ := \clM/\clG$ is specified by the following SDE:
  \begin{align}
    \dd\bfyy_t = \left( (\pi_*\bfff_t)(\bfyy_t) - \frac{\sigma_t^2}{2} \bfhh(\bfyy_t) \right) \ud t + \sigma_t \ud \bfomega_t, \quad \bfyy_0 \sim \pi_{\#}p_\prior,
    \tag{\ref{eqn:projected-sde}'}
  \end{align}
  where:
  \itemone~$\pi_*\bfff_t$ is the pushed-forward vector field of $\bfff_t$ induced by $\pi$, \ie, $(\pi_* \bfff_t)(\bfyy_t) := \pi_{*\bfxx_t} \bfff_t(\bfxx_t)$, which is the same for any $\bfxx_t \in \pi^{-1}(\bfyy_t)$ due to the $\clG$-equivariance of $\bfff_t$;
  \itemtwo~$\bfhh(\bfyy_t) := \pi_{*\bfxx_t}(\sum_{i=M-G+1}^M \nabla_{\bfee_i} \bfee_i)$ for any $\bfxx_t \in \pi^{-1}(\bfyy_t)$ is the mean curvature vector at $\bfyy_t$, where $\{\bfee_i\}_{i=1}^M$ is an orthonormal basis of $T_{\bfxx_t} \clM$ such that $\clV_{\bfxx_t} = \mathrm{span}\{\bfee_i\}_{i=M-G+1}^M$;
  \itemthr~$\bfomega_t$ is the Wiener process on $\clQ$; and
  \itemfor~$\pi_{\#}p_\prior$ is the pushed-forward distribution of $p_\prior$, \ie, its samples can be produced by $\bfyy_0 = \pi(\bfxx_0)$ where $\bfxx_0 \sim p_\prior$.
\end{theorem*}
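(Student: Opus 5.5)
The proof works at the level of infinitesimal generators. We show that for every test function $\phi \in C^\infty_c(\clQ)$, the composite $f := \phi\circ\pi$ (a $\clG$-invariant smooth function on $\clM$) satisfies $\clL^\clM_t f = (\clL^\clQ_t \phi)\circ\pi$. Here $\clL^\clM_t = \bfff_t + \frac{\sigma_t^2}{2}\Delta_\clM$ is the generator of \eqnref{original-sde}, and $\clL^\clQ_t = \pi_*\bfff_t - \frac{\sigma_t^2}{2}\bfhh + \frac{\sigma_t^2}{2}\Delta_\clQ$ is the generator of \eqnref{projected-sde}, using \defref{BM-on-manifold}. Once this identity holds, $N^f_t$ being a martingale for $\bfxx_t$ (\propref{sde-generator-RM}) translates directly into $\phi(\bfyy_t)-\phi(\bfyy_0)-\int_0^t (\clL^\clQ_s\phi)(\bfyy_s)\ud s$ being a martingale with respect to the filtration of $\bfxx_t$. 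Hence $\bfyy_t$ is an $\clL^\clQ_t$-diffusion solving the martingale problem. The initial law is $\pi_\# p_\prior$ by definition. Invariance of $p_\prior$ is not really needed for the generator identity, only for the interpretation; well-posedness of the martingale problem on $\clQ$ gives uniqueness in law.

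\textbf{Drift term.} This is immediate. We have $\bfff_t(\phi\circ\pi)(\bfxx) = \pi_{*\bfxx}(\bfff_t(\bfxx))(\phi)$ by \defref{tangent-map}. Equivariance together with $\pi\circ L_g=\pi$ gives $\pi_{*g\bfxx}\bfff_t(g\bfxx) = \pi_{*g\bfxx}(L_g)_{*\bfxx}\bfff_t(\bfxx) = \pi_{*\bfxx}\bfff_t(\bfxx)$. So $\pi_*\bfff_t$ is well defined on $\clQ$, and $\bfff_t f = (\pi_*\bfff_t\,\phi)\circ\pi$.

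\textbf{Laplacian term (the main obstacle).} Fix $\bfxx$ and choose the adapted orthonormal frame $\{\bfee_i\}$ of the statement: $\bfee_1,\dots,\bfee_{M-G}$ horizontal and $\bfee_{M-G+1},\dots,\bfee_M$ vertical. Locally, take $\bfee_i$ for $i\le M-G$ to be the horizontal lifts $\bfuut_i$ of a local orthonormal frame $\{\bfuu_i\}$ on $\clQ$; these are orthonormal by \eqnref{riem-Q}. By \defref{laplace},
\[
\Delta_\clM f=\sum_{i=1}^M \bigl(\bfee_i(\bfee_i f) - (\nabla_{\bfee_i}\bfee_i) f\bigr).
\]
We handle this sum in two parts.

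For the vertical indices, $\bfee_i f = 0$ identically because $f$ is constant on fibers and the $\bfee_i$ are vertical fields. So only $-(\nabla_{\bfee_i}\bfee_i)f = -\pi_*(\nabla_{\bfee_i}\bfee_i)\phi$ survives. Summing over the vertical indices produces $-\bfhh(\phi)$ at $\pi(\bfxx)$, with $\bfhh$ as defined in the statement. We must check that $\bfhh$ is independent of the choice of vertical orthonormal basis and of the fiber point. Independence of the basis holds because the expression is the trace of the second fundamental form of the fiber, which is tensorial. Independence of the fiber point follows from isometry of $L_g$, since $L_g$ maps fibers to fibers and preserves $\nabla$.

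For the horizontal indices, $\bfuut_i(\bfuut_i f) = (\bfuu_i(\bfuu_i\phi))\circ\pi$ because $\bfuut_i$ is $\pi$-related to $\bfuu_i$. By \propref{connection-quotient-space}, $\nabla^\clM_{\bfuut_i}\bfuut_i = \widetilde{\nabla^\clQ_{\bfuu_i}\bfuu_i} + \frac12(\clL_{\bfuut_i}\bfuut_i)^\clV$, and the Lie bracket term vanishes since $[\bfuut_i,\bfuut_i]=0$. Its action on $f$ is therefore $(\nabla^\clQ_{\bfuu_i}\bfuu_i)\phi\circ\pi$. Summing over the horizontal indices yields exactly $(\Delta_\clQ\phi)\circ\pi$.

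Combining both parts gives
\[
\tfrac{\sigma_t^2}{2}\Delta_\clM f = \bigl(\tfrac{\sigma_t^2}{2}\Delta_\clQ\phi - \tfrac{\sigma_t^2}{2}\bfhh\phi\bigr)\circ\pi,
\]
which is the required identity. The delicate points I expect are verifying that a smooth local adapted frame exists near $\bfxx$; this follows because vertical and horizontal fields are smooth (\propref{horizontal-vf}) and Gram--Schmidt can be applied to vertical fields generated by a basis of $\frgg$. Matching the sign convention of $\bfhh$ with the statement's definition also needs care.
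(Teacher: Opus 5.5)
Your proposal is correct and follows essentially the same route as the paper. Both arguments split $\Delta^\clM$ along an adapted orthonormal frame, let the vertical second-order terms vanish on $\clG$-invariant functions (the paper phrases this as $\pi_*\bfee_i=\bfzro$), use \propref{connection-quotient-space} to turn the horizontal terms into $\Delta^\clQ$, and identify the leftover $-\pi_*\sum_{\text{vert}}\nabla_{\bfee_i}\bfee_i$ with $-\bfhh$. The paper pushes a Stratonovich representation of the SDE forward through $\pi$, whereas you push the martingale problem forward via the pointwise identity $\clL^\clM_t(\phi\circ\pi)=(\clL^\clQ_t\phi)\circ\pi$; this packaging needs only local frames, and your choice of horizontal-lift frame fields and your well-definedness check for $\bfhh$ make explicit two points the paper uses implicitly.
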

\begin{proof}
    As $\bfxx_t$ is a diffusion process on $\clM$ given by the the SDE $\dd \bfxx_t = \bfff_t(\bfxx_t) \ud t + \sigma_t \ud \bfww_t$, by \propref{sde-generator-RM} and \defref{BM-on-manifold}, $\bfxx_t$ is an $\clL_t$-diffusion and the generator is
    \begin{align}
        \clL_t=\bfff_t + \frac{\sigma_t^2}{2} \Delta^\clM.
    \end{align}
    For any $\bfxx \in \clM$, let $\{\bfee_i\}_{i=1}^M$ be an orthonormal basis of $T_{\bfxx} \clM$ such that $\clH_{\bfxx} = \mathrm{span}\{\bfee_i\}_{i=1}^{M-G}$, and $\clV_{\bfxx} = \mathrm{span}\{\bfee_i\}_{i=M-G+1}^M$. Then by the Riemannian submersion construction of $\pi:\clM\to\clQ$ (see \appxref{rigorous-construction}), $\{\bfeeb_i:=\pi_{*\bfxx} \bfee_i\}_{i=1}^{M-G}$ is an orthonormal basis of $T_{\pi(\bfxx)} \clQ$. Such a basis as a smooth function of $\bfxx$ always exists in a neighborhood, so locally each $\bfee_i$ can be viewed as a vector field.
    Let $\nabla^\clM$ and $\nabla^\clQ$ be the Levi-Civita connections on $\clM,\clQ$, respectively, where $\nabla^\clQ$ is induced from the Riemannian metric inherited from $\clM$. 
    Using the local expression of the Laplace-Beltrami operator (\defref{laplace}), the generator is given by
    \begin{align}
        \clL_t =\bfff_t + \frac{\sigma_t^2}{2} \Delta^\clM =\bfff_t + \frac{\sigma_t^2}{2}\sum_{i=1}^{M}(\bfee_i(\bfee_i(\cdot))-\nabla^\clM_{\bfee_i}\bfee_i)
        =\left(\bfff_t - \frac{\sigma_t^2}{2}\sum_{i=1}^{M}\nabla^\clM_{\bfee_i}\bfee_i\right) +\frac{\sigma_t^2}{2}\sum_{i=1}^M \bfee_i^2.
    \end{align}
    Then the process is the solution to the following Stratonovitch SDE
    \begin{align}
        & \dd\bfxx_t=\bfvv^{(0)}(\bfxx_t,t)\dd t+\sum_{i=1}^{M} \bfvv^{(i)}(\bfxx_t,t)\circ\dd \bfww_t^i, \\
        \text{where}\quad & \bfvv^{(0)}:=\bfff_t - \frac{\sigma_t^2}{2}\sum_{i=1}^{M}\nabla^\clM_{\bfee_i}\bfee_i, \quad
        \text{and } \bfvv^{(i)}:=\sigma_t \bfee_i \text{ for } i=1,\cdots,M.
    \end{align}
    By \defref{sde-RM}, for all $f\in C^\infty(\clM)$,
    \begin{align}
        f(\bfxx_t)=f(\bfxx_0)+\int_0^t\left(\bfvv^{(0)}(f)(\bfxx_s,s)\dd s+\sum_{i=1}^M\bfvv^{(i)}(f)(\bfxx_s,s)\circ\dd \bfww_s^i\right).
    \end{align}
    Let $\ffb\in C^\infty(\clQ)$, then $f := \ffb\circ\pi\in C^\infty(\clM)$, then 
    \begin{align}
        \ffb(\pi(\bfxx_t))
        &=\ffb(\pi(\bfxx_0))+\int_0^t\left(\bfvv^{(0)}(\ffb\circ\pi)(\bfxx_s,s)\dd s+\sum_{i=1}^M \bfvv^{(i)}(\ffb\circ\pi)(\bfxx_s,s)\circ\dd \bfww_s^i\right),\\
        &=\ffb(\pi(\bfxx_0))+\int_0^t\left((\pi_*\bfvv^{(0)})(\ffb)(\pi(\bfxx_s),s)\dd s+\sum_{i=1}^M(\pi_*\bfvv^{(i)})(\ffb)(\pi(\bfxx_s),s)\circ\dd \bfww_s^i\right),
    \end{align}
    by \defref{tangent-map}. Since $\ffb$ is arbitrary, by \defref{sde-RM}, $\bfyy_t:=\pi(\bfxx_t)$ is the solution to:
    \begin{align}
        \dd\bfyy_t=\pi_*\bfvv^{(0)}(\bfyy_t,t)\dd t+\sum_{i=1}^{M}\pi_*\bfvv^{(i)}(\bfyy_t,t)\circ\dd \bfww_t^i.
    \end{align}
    We first need to check that the projected vector field is well defined. In fact, we only need to check that $\pi_*\bfff$ is well defined. Since $\bfff$ is $\clG$-equivariant, then for any $g\in\clG$,
    $(L_g)_*\bfff_t(\bfxx)=\bfff_t(g\cdot\bfxx)$. Then $\pi_*(\bfff_t(g\cdot\bfxx))=\pi_*((L_g)_*\bfff_t(\bfxx))=(\pi\circ L_g)_*(\bfff_t(\bfxx)) = \pi_*(\bfff_t(\bfxx))$, where we have used the chain rule in the second-last step, and the last step holds since $\pi \circ L_g$ and $\pi$ projects to the same equivalence class ($\bfxx$ and $g \cdot \bfxx$ are in the same equivalence class). By a notational equivalence that $\pi_*(\bfff_t(\bfxx)) = \pi_*\bfff_t (\pi(\bfxx))$, we know that $\pi_* \bfff_t (\bfyy)$ is the same on the equivalence class regardless of the choice of $\bfxx$ in $\pi^{-1}(\bfyy)$, which implies that the projected vector field $\pi_* \bfff_t$ is well defined.

    Next, we calculate the expression of the projected vector field. Since $\clH_{\bfxx} = \mathrm{span}\{\bfee_1,\cdots,\bfee_{M-G}\}$, $\clV_{\bfxx} = \mathrm{span}\{\bfee_{M-G+1},\cdots,\bfee_M\}$, we have
    \begin{align}
        \pi_{*\bfxx} \bfee_i=\begin{cases} 
        \bfeeb_i, & \text{if } i \le M-G, \\
        \bfzro, & \text{if } i\ge M-G+1,
        \end{cases}
    \end{align}
    so $\pi_{*\bfxx}(\bfvv^{(i)}) = \sigma_t \bfeeb_i$ for $i=1,\cdots,M-G$ and $\pi_{*\bfxx}(\bfvv^{(i)})=\bfzro$ for $i \ge M-G+1$.
    Moreover, since each $\bfee_i$ for $i \in \{1, \cdots, M-G\}$ lies in the horizontal space and $\pi_* \bfee_i = \bfeeb$ by definition, by the uniqueness of horizontal lift, we know $\tilde{\bfeeb}_i = \bfee_i$.
    By \propref{connection-quotient-space}, which means $\pi_* \lrparen*[\big]{\widetilde{\nabla^\clQ_{\bfuu^{(1)}} \bfuu^{(2)}}} = \nabla^\clQ_{\bfuu^{(1)}} \bfuu^{(2)} = \pi_* \lrparen*[\big]{(\nabla^\clM_{\bfuut^{(1)}} \bfuut^{(2)})^\clH} = \pi_* \lrparen*{\nabla^\clM_{\bfuut^{(1)}} \bfuut^{(2)}}$, we know that $\pi_*(\nabla^\clM_{\bfee_i}\bfee_i) = \nabla^\clQ_{\bfeeb_i}\bfeeb_i$.
    Therefore, for the drift term, we have:
    \begin{align}
        \pi_*\bfvv^{(0)}(\bfyy,t)&=\pi_*\bfff_t(\bfyy)-\frac{\sigma^2_t}{2}\sum_{i=1}^{M}\pi_*(\nabla^\clM_{\bfee_i}\bfee_i)\\
        &=\pi_*\bfff_t(\bfyy)-\frac{\sigma^2_t}{2}\sum_{i=1}^{M-G}\pi_*(\nabla^\clM_{\bfee_i}\bfee_i)-\frac{\sigma^2_t}{2}\sum_{i=M-G+1}^{M}\pi_*(\nabla^\clM_{\bfee_i}\bfee_i)\\
        &=\pi_*\bfff_t(\bfyy)-\frac{\sigma^2_t}{2}\sum_{i=1}^{M-G}\nabla^\clQ_{\bfeeb_i}\bfeeb_i-\frac{\sigma^2_t}{2}\sum_{i=M-G+1}^{M}\pi_*(\nabla^\clM_{\bfee_i}\bfee_i)\\
        &=\pi_*\bfff_t(\bfyy)-\frac{\sigma^2_t}{2}\sum_{i=1}^{M-G}\nabla^\clQ_{\bfeeb_i}\bfeeb_i-\frac{\sigma^2_t}{2}\bfhh(\bfyy).
    \end{align}
    So the generator of the process $\bfyy_t$ is 
    \begin{align}
        \clL_s&=\pi_*\bfff_t-\frac{\sigma^2_t}{2}\sum_{i=1}^{M-G}\nabla^\clQ_{\bfeeb_i}\bfeeb_i-\frac{\sigma^2_t}{2}\bfhh+\frac{\sigma_t^2}{2}\sum_{i=1}^{M-G}\bfeeb_i^2\\
        &=\left(\pi_*\bfff_t-\frac{\sigma^2_t}{2}\bfhh\right)+\frac{\sigma_t^2}{2}\left(\sum_{i=1}^{M-G}\bfeeb_i^2-\sum_{i=1}^{M-G}\nabla^\clQ_{\bfeeb_i}\bfeeb_i\right)\\
        &=\left(\pi_*\bfff_t-\frac{\sigma^2_t}{2}\bfhh\right)+\frac{\sigma_t^2}{2}\Delta^\clQ.
        \label{eqn:L-in-Q}
    \end{align}
    Then we can conclude that the projected process $\bfyy_t:=\pi(\bfxx_t)$ is the solution to the following SDE
      \begin{align}
        \dd\bfyy_t = \left( (\pi_*\bfff_t)(\bfyy_t) - \frac{\sigma_t^2}{2} \bfhh(\bfyy_t) \right) \ud t + \sigma_t \ud \bfomega_t,
      \end{align}
      where $\pi_*\bfff_t$ is the push-forward vector field, $\bfhh(\bfyy_t)$ is the mean curvature vector at $\bfyy_t$ and $\bfomega_t$ is the standard Wiener process on the quotient space $\clQ$.
\end{proof}

\subsection{Proof of Theorem~\ref{thm:horizontal-lifted-sde}}\label{appx:proof-thm2}
In \defref{horizontal-lift}, we define the horizontal lift of a vector field that generates a deterministic flow. In fact, for a stochastic process on $\clQ$, we can define the horizontal lift for it similarly. First, we need to define the stochastic line integral, which is the integration of a one-form along the trajectory of a stochastic process.
\begin{definition}
    \citep[Prop.~2.4.2]{hsu2002stochastic} Let $\Theta$ be a 1-form (\defref{1-form}) on $\clM$ and $\bfxx_t$ the solution to the equation 
    \begin{align}
        \dd\bfxx_t=\bfvv^{(0)}(\bfxx_t,t)\dd t+\sum_{i=1}^D\bfvv^{(i)}(\bfxx_t,t)\circ\dd \bfww_t^i.
    \end{align}
    Then 
    \begin{align}
        % \int_{\bfxx_{[0,t]}} \Theta_{\bfxx_t}
        \int_0^t \Theta_{\bfxx_s} \ud s = \int_0^t \Theta(\bfvv^{(0)})(\bfxx_s) \ud s + \int_0^t\sum_{i=1}^D\Theta(\bfvv^{(i)})(\bfxx_s)\circ\dd \bfww_s^i.
    \end{align}
\end{definition}
\begin{definition}\label{def:sto-horizontal-lift}
    \citep[Def.~3.1.9]{baudoin2024stochastic} A semimartingale $(\bfxx_t)_t$ on $\clM$ is called horizontal if for every 1-form $\Theta$ on $\clM$ whose kernel contains the horizontal space $\clH$, one has $%\int_{\bfxx_{[0,t]}}\Theta
    \int_0^t \Theta_{\bfxx_s} \ud s
    =0$, for all $t\ge0$. Let $(\bfyy_t)_t$ be a semimartingale on $\clQ$ such that $\bfyy_0$ is a point of $\clQ$. Then for a given starting point $\bfxx_0\in\pi^{-1}(\bfyy_0)$, there exists a unique horizontal semimartingale $\bfxx_t$ on $\clM$ such that $\bfxx_t$ starts from $\bfxx_0$ and $\pi(\bfxx_t)=\bfyy_t$ for all $t\ge0$.
    This process $(\bfxx_t)_t$ is called the horizontal lift of $(\bfyy_t)_t$ from $\bfxx_0$.
\end{definition}

\begin{theorem**}
    The horizontal lift of \eqnref{projected-sde} has the following explicit expression:
    \begin{align}
        \dd\bfxxt_t=\left(P_{\bfxxt_t}(\bfff_t(\bfxxt_t))-\frac{\sigma_t^2}{2}\bfhht(\bfxxt_t)\right)\dd t + \sigma_t \ud \bfwwt_t, \quad \bfxxt_0\sim p_\prior,
        \tag{\ref{eqn:lifted-sde}'}
    \end{align}
    where $P_\bfxx(\bfvv) := \bfvv^\clH$ is the horizontal projection in the tangent space of $\clM$, $\bfhht$ is the horizontal lift of the mean curvature vector, and $\bfwwt_t$ is the horizontal lift of the Wiener process of $\clQ$.
\end{theorem**}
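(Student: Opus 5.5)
We work with the Stratonovich form and with generators, as in the proof of \thmref{projected-diffusion}. We want a horizontal semimartingale $\bfxxt_t$ on $\clM$ with $\pi(\bfxxt_t)=\bfyy_t$, where $\bfyy_t$ solves \eqnref{projected-sde}. The candidate is obtained by horizontally lifting every vector field that appears in the Stratonovich form of \eqnref{projected-sde}.

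First rewrite \eqnref{projected-sde} in Stratonovich form. In a local orthonormal frame $\{\bfeeb_i\}_{i=1}^{M-G}$ of $T\clQ$, the Wiener process $\bfomega_t$ is generated by $\frac12\Delta^\clQ = \frac12\sum_i(\bfeeb_i^2 - \nabla^\clQ_{\bfeeb_i}\bfeeb_i)$. Hence
\[
\dd\bfyy_t = \Bigl(\pi_*\bfff_t - \tfrac{\sigma_t^2}{2}\bfhh - \tfrac{\sigma_t^2}{2}\textstyle\sum_i \nabla^\clQ_{\bfeeb_i}\bfeeb_i\Bigr)\dd t + \textstyle\sum_i \sigma_t\bfeeb_i\circ\dd\bfww^i_t .
\]
Take the horizontal lifts $\bfee_i:=\tilde{\bfeeb}_i$ given by \propref{horizontal-vf}. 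By the Riemannian submersion structure, these form an orthonormal basis of $\clH_\bfxx$. Then define on $\clM$ the Stratonovich SDE with the same Wiener increments:
\[
\dd\bfxxt_t = \Bigl(P(\bfff_t) - \tfrac{\sigma_t^2}{2}\bfhht - \tfrac{\sigma_t^2}{2}\textstyle\sum_i \widetilde{\nabla^\clQ_{\bfeeb_i}\bfeeb_i}\Bigr)\dd t + \textstyle\sum_i \sigma_t\bfee_i\circ\dd\bfww^i_t .
\]
The lift of $\pi_*\bfff_t$ is $P(\bfff_t)$. This is because $P(\bfff_t)$ is horizontal and $\pi_*P(\bfff_t)=\pi_*\bfff_t$, since the vertical part lies in $\Ker\pi_*$; uniqueness of the horizontal lift then gives the identification.

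Next, verify the two defining properties of \defref{sto-horizontal-lift}.

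\emph{Horizontality.} Every coefficient vector field of the Stratonovich SDE is horizontal. So for any 1-form $\Theta$ annihilating $\clH$, each integrand in the stochastic line integral vanishes, and the integral is zero.

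\emph{Projection.} Apply the argument of the proof of \thmref{projected-diffusion}: for $\ffb\in C^\infty(\clQ)$, set $f=\ffb\circ\pi$ and use $(\pi_*\bfuut)(\ffb)=\bfuut(\ffb\circ\pi)$. This shows that $\pi(\bfxxt_t)$ solves exactly the Stratonovich SDE of $\bfyy_t$ above. It starts from $\pi(\bfxxt_0)$, with $\bfxxt_0\sim p_\prior$, so $\bfyy_0\sim\pi_\#p_\prior$. Pathwise uniqueness of the SDE on $\clQ$ then gives $\pi(\bfxxt_t)=\bfyy_t$. Uniqueness in \defref{sto-horizontal-lift} identifies $\bfxxt_t$ as \emph{the} horizontal lift.

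Finally, recast the noise part in the form stated in the theorem. The process $\bfwwt_t$, the horizontal lift of $\bfomega_t$, is by the same argument with $\bfff=\bfzro$, $\bfhh=\bfzro$ the solution of
\[
\dd\bfwwt_t = -\tfrac12\textstyle\sum_i\widetilde{\nabla^\clQ_{\bfeeb_i}\bfeeb_i}\,\dd t + \textstyle\sum_i\bfee_i\circ\dd\bfww^i_t .
\]
So the correction terms together with the Stratonovich noise in $\dd\bfxxt_t$ are precisely $\sigma_t\,\dd\bfwwt_t$, which yields \eqnref{lifted-sde}. The time change by $\sigma_t$ is handled by the time-dependent coefficients.

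The main obstacle is this last bookkeeping step. One must make "$\sigma_t\,\dd\bfwwt_t$" precise as a lifted-generator statement: the generator of $\bfxxt_t$ should be $P(\bfff_t)-\frac{\sigma_t^2}{2}\bfhht + \frac{\sigma_t^2}{2}\sum_i(\bfee_i^2-\widetilde{\nabla^\clQ_{\bfeeb_i}\bfeeb_i})$, i.e.\ the horizontal Laplacian. One must also check that this is independent of the local frame chosen, so that the local constructions patch together. For frame-independence, I would use that a change of orthonormal frame by a rotation leaves $\sum_i(\bfeeb_i^2-\nabla^\clQ_{\bfeeb_i}\bfeeb_i)=\Delta^\clQ$ invariant, and that lifting commutes with $C^\infty(\clQ)$-linear combinations. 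For patching, I would localize with stopping times.
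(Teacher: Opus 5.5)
Your proposal is correct and follows essentially the paper's route. You lift every coefficient field of the Stratonovich form; your $\widetilde{\nabla^\clQ_{\bfeeb_i}\bfeeb_i}$ is the paper's $(\nabla^\clM_{\bfee_i}\bfee_i)^\clH$ by \propref{connection-quotient-space}. You get horizontality because all coefficient fields lie in $\clH$, and you read $\sigma_t\,\dd\bfwwt_t$ through the lifted generator $\frac{\sigma_t^2}{2}\sum_i(\bfee_i^2-(\nabla^\clM_{\bfee_i}\bfee_i)^\clH)$. The one difference is in the projection step: the paper matches $\pi_*\tilde{\clL}_t$ with the generator of \eqnref{projected-sde}, which gives only equality in law, whereas your synchronous coupling plus pathwise uniqueness gives the pathwise identity $\pi(\bfxxt_t)=\bfyy_t$ that \defref{sto-horizontal-lift} actually requires.
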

\begin{proof}
    We only need to check the definition of the horizontal lift (\defref{sto-horizontal-lift}).
    Again, for any $\bfxx \in \clM$, let $\{\bfee_i\}_{i=1}^M$ be an orthonormal basis of $T_{\bfxx} \clM$ such that $\clH_{\bfxx} = \mathrm{span}\{\bfee_i\}_{i=1}^{M-G}$, and $\clV_{\bfxx} = \mathrm{span}\{\bfee_i\}_{i=M-G+1}^M$. Then by the Riemannian submersion construction of $\pi:\clM\to\clQ$ (see \appxref{rigorous-construction}), $\{\bfeeb_i:=\pi_{*\bfxx} \bfee_i\}_{i=1}^{M-G}$ is an orthonormal basis of $T_{\pi(\bfxx)} \clQ$. Such a basis as a smooth function of $\bfxx$ always exists in a neighborhood, so each $\bfee_i$ can be viewed as a vector field in each neighborhood.
    Let $\nabla^\clM$ and $\nabla^\clQ$ be the Levi-Civita connections on $\clM,\clQ$, respectively, where $\nabla^\clQ$ is induced from the Riemannian metric inherited from $\clM$.
    
    Now we calculate the generator of the SDE in Eq.~(\ref{eqn:lifted-sde}'). 
    From \eqnref{L-in-Q}, the Wiener process $\bfomega_t$ on the quotient space $\clQ$ has generator $\frac12 \Delta^\clQ = \frac12 \sum_{i=1}^{M-G} \lrparen{\bfeeb_i^2 - \nabla^\clQ_{\bfeeb_i} \bfeeb_i}$. Since each $\bfee_i$ for $i \in \{1, \cdots, M-G\}$ lies in the horizontal space and $\pi_* \bfee_i = \bfeeb$ by definition, by the uniqueness of horizontal lift, we know $\tilde{\bfeeb}_i = \bfee_i$. By \propref{connection-quotient-space}, we know that the generator of the horizontal-lifted Wiener process $\bfwwt_t$ has generator $\frac12 \sum_{i=1}^{M-G} \lrparen{ \bfee_i^2 - (\nabla^\clM_{\bfee_i}\bfee_i)^\clH }$. So the generator of the horizontal lifted process in Eq.~(\ref{eqn:lifted-sde}') is:
    \begin{align}\label{eqn:generator-lifted}
        \tilde{\clL}_t %&= \left(P\bfff_t-\frac{\sigma_t^2}{2}\bfhht\right) + \frac{\sigma_t^2}{2} \sum_{i=1}^M \Big( P(\bfee_i)^2-P\nabla^\clM_{\bfee_i}\bfee_i \Big) \\
        = \left(\bfff_t^\clH - \frac{\sigma_t^2}{2}\bfhht\right) + \frac{\sigma_t^2}{2} \sum_{i=1}^{M-G} \Big( \bfee_i^2-(\nabla^\clM_{\bfee_i}\bfee_i)^\clH \Big).
    \end{align}
    Its projection under $\pi_*$ is given by:
    \begin{align}
        \pi_* \tilde{\clL}_t = \left(\pi_*\bfff_t-\frac{\sigma_t^2}{2}\bfhh\right) + \frac{\sigma_t^2}2 \sum_{i=1}^{M-G} \Big( \bfeeb_i^2 - \nabla^\clQ_{\bfeeb_i}\bfeeb_i \Big),
    \end{align}
    which coincides with the generator of \eqnref{projected-sde} (see \eqnref{L-in-Q}). So the process $\pi(\bfxxt_t)$ is the same diffusion process as $\bfyy_t$, where $\bfyy_t$ is defined in \eqnref{projected-sde}.

    Let $\Theta$ be a 1-form on $\clM$ whose kernel contains the horizontal space $\clH$ everywhere. From \eqnref{generator-lifted}, $\bfxxt_t$ is the following SDE
    \begin{align}
        &\dd\bfxx_t=\bfvv^{(0)}(\bfxx_t,t)\dd t+\sum_{i=1}^{M}\bfvv^{(i)}(\bfxx_t,t)\circ\dd \bfww_t^i, \\ \quad\text{where} ~\bfvv^{(0)}&=\left(\bfff_t^\clH-\frac{\sigma_t^2}{2}\bfhht\right)-\frac{\sigma_t^2}{2}\sum_{i=1}^{M-G}(\nabla^\clM_{\bfee_i}\bfee_i)^\clH, \quad\bfvv^{(i)}=\sigma_t \bfee_i.
    \end{align}
    Then the line integral
    \begin{align}
        % \int_{\bfxxt_{[0,t]}}\Theta
        \int_0^t \Theta_{\bfxx_s} \ud s = \int_0^t\sum_{i=0}^{M} \Theta(\bfvv^{(i)})(\bfxxt_s)\circ\dd \bfww_s^i=0,
    \end{align}
    since $\bfvv^{(i)}\in\clH$ and $\Theta(\bfvv^{(i)})=0$. So we can conclude that $\bfxxt_t$ is the horizontal lift of $\bfyy_t$.
\end{proof}

\begin{theorem***}
    \itemone~The resulting random variable $\bfxxt_1$ by the lifted diffusion process \eqnref{lifted-sde} and the resulting $\bfxx_1$ by the original diffusion process \eqnref{original-sde} follow the same distribution in the total space: $p_{\bfxxt_1} = p_{\bfxx_1} = p_\target$. 
    \itemtwo~When $\sigma_t \equiv 0$ and starting from the same $\bfxx_0 \in \clM$, \eqnref{lifted-sde} leads to a shorter trajectory than \eqnref{original-sde} does:
    \begin{align}
        \int_0^1\lrangle{P_{\bfxxt_t}(\bfff_t(\bfxxt_t)),P_{\bfxxt_t}(\bfff_t(\bfxxt_t))} \ud t\le\int_0^1\lrangle{\bfff_t(\bfxx_t),\bfff_t(\bfxx_t)} \ud t.
    \end{align}
\end{theorem***}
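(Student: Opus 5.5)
For part (i), the plan is to push everything down to the quotient space and then use $\clG$-invariance to pull the law back up. By \thmref{projected-diffusion}, $\pi(\bfxx_t)$ solves \eqnref{projected-sde} started from $\pi_\#p_\prior$. By \thmref{horizontal-lifted-sde} (more precisely, by the generator identity $\pi_*\tilde\clL_t = \clL^\clQ_t$ in its proof), $\pi(\bfxxt_t)$ solves the same SDE from the same initial law $\pi_\#p_\prior$. Uniqueness in law of the martingale problem for this generator then gives $\pi_\#p_{\bfxxt_1} = \pi_\#p_{\bfxx_1}$.

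The remaining step is to lift this equality of quotient laws to an equality of laws on $\clM$. For that I need both $p_{\bfxx_1}$ and $p_{\bfxxt_1}$ to be $\clG$-invariant, because a $\clG$-invariant law on $\clM$ is determined by its pushforward under $\pi$. To see this, disintegrate along fibres: $\clG$ is compact and acts freely, so each fibre carries a unique $\clG$-invariant probability measure (the pushforward of Haar measure), and any invariant law has exactly these conditionals.

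\begin{itemize}
\item Invariance of $p_{\bfxx_1}$ is the standard fact already cited in \secref{framework}: an equivariant drift, an isometric Wiener process, and an invariant prior give invariance at every time.
\item For $\bfxxt_t$, the same argument applies. The drift $P(\bfff_t)-\frac{\sigma_t^2}{2}\bfhht$ is equivariant: $P$ commutes with $(L_g)_*$ because the action is isometric, and $\bfhht$ is a horizontal lift, so it satisfies \eqnref{lift-pushforward-commute}. The diffusion part $\frac12\sum_{i\le M-G}(\bfee_i^2-(\nabla^\clM_{\bfee_i}\bfee_i)^\clH)$ is independent of the choice of orthonormal horizontal frame and is carried to itself by isometries. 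Hence $\tilde\clL_t$ commutes with $f\mapsto f\circ L_g$. Combined with invariance of $p_\prior$ and uniqueness in law, $g\cdot\bfxxt_t \overset{d}{=} \bfxxt_t$.
\end{itemize}

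Putting these together yields $p_{\bfxxt_1}=p_{\bfxx_1}=p_\target$. I expect the main obstacle to be this lifting step: making the fibrewise disintegration argument rigorous, and checking that $\tilde\clL_t$ is genuinely $\clG$-equivariant, including for the second-order part.

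For part (ii), with $\sigma_t\equiv 0$ both equations are ODEs, and I will compare the two speeds pointwise in time. Fix $\bfxx_0$ and let $\bfxx_t$ and $\bfxxt_t$ be the two flows. By part (i)'s quotient argument, which is deterministic here, $\pi(\bfxxt_t)=\pi(\bfxx_t)$, so there is $g_t\in\clG$ with $\bfxx_t = g_t\cdot\bfxxt_t$. Equivariance of $\bfff_t$ and isometry then give $\lrVert{\bfff_t(\bfxx_t)} = \lrVert{\bfff_t(\bfxxt_t)}$. Since $P_\bfxx$ is an orthogonal projection, $\lrVert{P_{\bfxxt_t}\bfff_t(\bfxxt_t)}^2 = \lrVert{\bfff_t(\bfxxt_t)}^2 - \lrVert{\bfff_t(\bfxxt_t)^\clV}^2 \le \lrVert{\bfff_t(\bfxx_t)}^2$. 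Integrating over $t\in[0,1]$ gives the stated energy inequality, and the same pointwise bound on speeds gives the inequality for lengths.
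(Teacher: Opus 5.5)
Your proposal is correct and follows the paper's proof essentially step for step. For (i), both arguments use equality of the projected laws on $\clQ$ together with $\clG$-invariance of $p_{\bfxx_1}$ and $p_{\bfxxt_1}$; for (ii), both write $\bfxx_t = g_t\cdot\bfxxt_t$ and then use equivariance of $\bfff_t$, isometry of the action, and orthogonality of the horizontal projection. Your version is in fact more careful than the paper's at exactly the points it passes over quickly: the $\clG$-equivariance of the lifted generator, including its second-order part, which the paper justifies only by saying the lifted process ``does not apply any group action''; the fact that a $\clG$-invariant law is determined by its pushforward under $\pi$; and ODE uniqueness on $\clQ$ to produce $g_t$.
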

\begin{proof}
    \itemone~Since $\bfxx_0$ and $\bfxxt_0$ follow the same $\clG$-invariant distribution $p_\prior$, and by \thmref{projected-diffusion} and that the lifted process projects to the same quotient-space process before the lift, we know that $\pi(\bfxx_t)$ follows the same diffusion process as $\pi(\bfxx_t)$ on $\clQ$.
    Since $\bfff_t$ is $\clG$-equivariant and the Wiener process is $\clG$-invariant, we know that the transition kernel $p(\bfxx_1 \mid \bfxx_0)$ is $\clG$-equivariant. Together with that $p_\prior$ is $\clG$-invariant, we know that $p(\bfxx_1)$ is $\clG$-invariant \citep[Prop.~1]{xu2022geodiff}.
    On the side of the lifted quotient-space diffusion process, since the lifted process does not have a vertical movement, \ie, the process effectively does not apply any group action, we know that $p(\bfxxt_1 \mid \bfxxt_0)$ is also $\clG$-equivariant, and together with the $\clG$-invariance of $p_\prior$, we know that $p(\bfxxt_1)$ is $\clG$-invariant as well.
    Since $p(\bfxx_1)$ and $p(\bfxxt_1)$ project to the same distribution on $\clQ$ and they are $\clG$-invariant, we know that they are the same distribution on the total space $\clM$.
    
    \itemtwo~When $\sigma_t \equiv 0$, the process becomes deterministic and solves the ODEs $\fracdiff{\bfxx_t}{t} = \bfff_t(\bfxx_t)$ and $\fracdiff{\bfxxt_t}{t} = P_{\bfxxt_t}(\bfff_t(\bfxxt_t))$.
    Since they start from the same point $\bfxx_0 = \bfxxt_0$, and the vector fields $\bfff_t$ and $P_{\bfxxt_t}(\bfff_t(\bfxxt_t))$ differ only in a vertical movement, \ie, movement by a group action, we know that there exists $g_t\in\clG$ such that $\bfxx_t=g_t\cdot\bfxxt_t$.
    By leveraging the $\clG$-equivariance of $\bfff_t$ and the isometry of group action, the lengths of the two processes can then be related by:
    \begin{align}
        &\int_0^1 \lrangle{\bfff_t(\bfxx_t),\bfff_t(\bfxx_t))}_{\bfxx_t} \dd t
        = \int_0^1 \lrangle{\bfff_t(g_t\cdot\bfxxt_t),\bfff_t(g_t\cdot\bfxxt_t))}_{\bfxx_t} \dd t \\
        ={} & \int_0^1\lrangle{(L_{g_t})_{*\bfxxt_t}\bfff_t(\bfxxt_t),(L_{g_t})_{*\bfxxt_t}\bfff_t(\bfxxt_t))}_{g_t \cdot \bfxxt_t} \dd t \\
        ={} & \int_0^1 \lrangle{\bfff_t(\bfxxt_t),\bfff_t(\bfxxt_t))}_{\bfxxt_t} \dd t
        = \int_0^1 \left(\lrangle{\bfff_t(\bfxxt_t)^\clH,\bfff_t(\bfxxt_t)^\clH)}_{\bfxxt_t} + \lrangle{\bfff_t(\bfxxt_t)^\clV,\bfff_t(\bfxxt_t)^\clV)}_{\bfxxt_t} \right) \dd t \\
        \ge{} & \int_0^1 \lrangle{\bfff_t(\bfxxt_t)^\clH,\bfff_t(\bfxxt_t)^\clH)}_{\bfxxt_t} \dd t
        = \int_0^1 \lrangle{P_{\bfxxt_t}(\bfff_t(\bfxxt_t)), P_{\bfxxt_t}(\bfff_t(\bfxxt_t))}_{\bfxxt_t} \dd t.
    \end{align}
\end{proof}

\subsection{Proof of Theorem~\ref{thm:so3-diffusion}}\label{appx:proof-thm3}

\begin{theorem****}\label{thm:so3-diffusion-appx}
    Assume $\bfxx_t$ is a diffusion process in a manifold $\clM$ that can be embedded into a Euclidean space, specified by the SDE:
        $\dd \bfxx_t = \bfff_t(\bfxx_t) \ud t + \sigma_t \ud \bfww_t$,
    where $\bfff_t(\bfxx_t)$ is $\clG$-equivariant and $\bfxx_0 \sim p_\prior$ is $\clG$-invariant.
    Then the lifted quotient-space diffusion process on $\clQ := \clM/\clG$ onto the total space $\clM$ is given by:
    \begin{align}\tag{\ref{eqn:so3-lifted-sde}'}
        \dd\bfxxt_t=\left(P_{\bfxxt_t}(\bfff_t(\bfxxt_t))-\frac{\sigma_t^2}{2}\bfhht(\bfxxt_t)\right)\dd t +\sigma_t  P_{\bfxxt_t} \ud \bfww_t,
        \quad \bfxxt_0 \sim p_\prior,
    \end{align}
    where $P_{\bfxxt_t} \ud \bfww_t$ can be simulated by projecting the infinitesimal Gaussian sample using $P_{\bfxxt_t}$ in each step.
    Particularly, for the shape space $\clQ := \bbR^{3N}/\SE(3) = \bbR^{3N}_\CoMcirc/\SO(3)$, the horizontal projection $P$ and the $\bfhht$ vector field have the following explicit expressions:
    for any $\bfxx = [\xxv^{(n)}]_n \in \bbR^{3N}_\CoMcirc$, which is CoM-free, and any $\bfvv = [\vvv^{(n)}]_n \in T_\bfxx \bbR^{3N}_\CoMcirc$, which is momentum-free,
    {\belowdisplayskip=-0pt
    \begin{align}
        & P_\bfxx(\bfvv) = \lrbrack*[\Big]{ \vvv^{(n)} - \bfK(\bfxx)^{-1} \lrparen*[\Big]{ \sum\nolimits_{n'=1}^{N}  \xxv^{(n')} \times \vvv^{(n')} } \times \xxv^{(n)} }_\uppern, \text{ and} \tag{\ref{eqn:so3-projection}'} \\
        & \bfhht(\bfxx) = \lrbrack{ \bfK(\bfxx)^{-1} \xxv^{(n)} \!-\! \tr(\bfK(\bfxx)^{-1}) \xxv^{(n)} }_\uppern, \label{eqn:mean-curve-vec-SO3} \\
        \text{where } & \bfK(\bfxx) \!:=\! \sum_{n=1}^{N} \! \Vert\xxv^{(n)}\Vert^2 \bfI -\! \sum_{n=1}^{N} \xxv^{(n)} {\xxv^{(n)}}\trs \!\!\! \in\bbR^{3\times3}, \label{eqn:inertia-tensor}
    \end{align}}%
    and ``$\times$'' denotes the cross product in $\bbR^3$.
\end{theorem****}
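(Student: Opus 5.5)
The proof has three parts. First, I show that in an embedded, flat total space the lifted Wiener process $\bfwwt_t$ of \thmref{horizontal-lifted-sde} can be written as the Itô integral $P_{\bfxxt_t}\ud\bfww_t$. Second, I compute $P_\bfxx$ by an explicit orthogonal projection. Third, I compute $\bfhht$ from the vertical frame.

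\textbf{General form.} By the proof of \thmref{horizontal-lifted-sde}, $\bfwwt_t$ has generator $\frac12\sum_{i=1}^{M-G}\big(\bfee_i^2-(\nabla^\clM_{\bfee_i}\bfee_i)^\clH\big)$, where $\{\bfee_i\}_{i\le M-G}$ is a local orthonormal horizontal frame. For $\clM=\bbR^{3N}_\CoMcirc$, an open subset of a linear subspace, the Levi-Civita connection is the ordinary directional derivative. The Itô SDE $\ud\bfxx=P_\bfxx\ud\bfww$ then has generator $\frac12\tr(P_\bfxx\nabla^2 f\,P_\bfxx)=\frac12\sum_{i\le M-G}\big(\bfee_i^2 f-(\nabla_{\bfee_i}\bfee_i)f\big)$.

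The two generators differ only by $\frac12\sum_i(\nabla_{\bfee_i}\bfee_i)^\clV$. I will show this vanishes using \propref{connection-quotient-space}. Its formula gives $(\nabla_{\bfuut}\bfuut)^\clV=\frac12(\clL_{\bfuut}\bfuut)^\clV=0$ for a horizontal lift $\bfuut$. A local horizontal orthonormal frame can be chosen as lifts $\bfee_i=\tilde{\bfeeb}_i$, as is already done in the earlier proofs. Therefore $\sum_i(\nabla_{\bfee_i}\bfee_i)^\clV=0$, the two generators coincide, and \eqnref{so3-lifted-sde} follows from \thmref{horizontal-lifted-sde}.

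\textbf{Projection.} By \eqnref{so3-vert}, $P_\bfxx(\bfvv)=\bfvv-[\omegav\times\xxv^{(n)}]_n$, where $\omegav$ minimizes $\sum_n\|\vvv^{(n)}-\omegav\times\xxv^{(n)}\|^2$. The normal equation is the horizontality condition \eqnref{so3-horiz}: $\sum_n\xxv^{(n)}\times(\vvv^{(n)}-\omegav\times\xxv^{(n)})=\zrov$. The identity $\xxv\times(\omegav\times\xxv)=(\|\xxv\|^2\bfI-\xxv\xxv\trs)\omegav$ turns this into $\bfK(\bfxx)\omegav=\sum_n\xxv^{(n)}\times\vvv^{(n)}$. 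The matrix $\bfK$ is invertible off the degenerate set $\clD$, because $\omegav\trs\bfK\omegav=\sum_n\|\omegav\times\xxv^{(n)}\|^2>0$ whenever the points are not collinear. Solving for $\omegav$ gives \eqnref{so3-projection}. The linear-momentum constraint is preserved because $\sum_n\omegav\times\xxv^{(n)}=\zrov$ for CoM-free $\bfxx$.

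\textbf{Mean curvature.} This is the main computation. Take the vertical fields $\bfVV_a(\bfxx)=[\bfJ_a\xxv^{(n)}]_n$ for $a=1,2,3$. Their Gram matrix is $\lrangle{\bfVV_a,\bfVV_b}=\sum_n(\bfJ_a\xxv^{(n)})\cdot(\bfJ_b\xxv^{(n)})=\bfK_{ab}$, so an orthonormal vertical frame is $\bfee_{M-3+a}=\sum_b(\bfK^{-1/2})_{ab}\bfVV_b$. Expanding $\sum_a\nabla_{\bfee_{M-3+a}}\bfee_{M-3+a}$ produces two kinds of terms. Terms in which the derivative falls on the coefficient functions are multiples of the $\bfVV_b$, hence vertical, and they drop out after applying $\pi_*$. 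What remains is $\sum_{a,b}(\bfK^{-1})_{ab}\nabla_{\bfVV_a}\bfVV_b$. Because $\bfVV_b$ is linear in $\bfxx$, $\nabla_{\bfVV_a}\bfVV_b=[\bfJ_b\bfJ_a\xxv^{(n)}]_n$. Using $\bfJ_b\bfJ_a=\bfe_a\bfe_b\trs-\delta_{ab}\bfI$, the sum becomes $[(\bfK^{-1}-\tr(\bfK^{-1})\bfI)\xxv^{(n)}]_n$.

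It remains to check that this vector is already horizontal, so that it equals its own horizontal part and hence is $\bfhht$ with the right sign convention.
\begin{itemize}
\item It is CoM-free because the map is linear in each $\xxv^{(n)}$ and $\bfxx$ is CoM-free.
\item Its angular momentum is $\sum_n\xxv^{(n)}\times\bfK^{-1}\xxv^{(n)}$. Writing $\bfC=\sum_n\xxv^{(n)}{\xxv^{(n)}}\trs$, we have $\bfK=\tr(\bfC)\bfI-\bfC$, so $\bfK^{-1}$ commutes with $\bfC$ and $\bfK^{-1}\bfC$ is symmetric. The quantity $\sum_n\xxv^{(n)}\times\bfK^{-1}\xxv^{(n)}$ is the axial vector of the antisymmetric part of $\bfK^{-1}\bfC$, which is therefore zero.
\end{itemize}
The main obstacle I expect is bookkeeping the coefficient-derivative terms and the sign and normalization conventions linking $\bfhh$ to $\bfhht$; the symmetry argument above is what resolves them cleanly.
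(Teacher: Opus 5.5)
Your proposal is correct. The first two parts follow the paper's proof closely. The paper also compares the generator of the lifted Wiener process with that of $\sigma_t P_{\bfxx}\,\mathrm{d}\mathbf{w}_t$ in the flat total space. It eliminates $(\nabla_{\bfee_i}\bfee_i)^\clV$ through \propref{connection-quotient-space}, because $\clL_{\bfee_i}\bfee_i=0$ for lifted fields. It finds $P_{\bfxx}$ by writing $\bfvv^\clV=[\omegav\times\xxv^{(n)}]_n$ and solving $\bfK(\bfxx)\omegav=\sum_n\xxv^{(n)}\times\vvv^{(n)}$. Your least-squares phrasing and the positivity argument for $\bfK$ off $\clD$ are useful additions the paper omits.

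The genuine difference is in the mean-curvature term. The paper never writes down a vertical frame. It uses the first variation of the orbit volume and shows via left-invariance that the induced fibre metric $\bfG^{\bfxx}=\frac12\bfK(\bfxx)$ is constant along the orbit. It then trades the time derivative of the variation for a spatial derivative to obtain the general identity $\bfhht(\bfxx)=-\frac12\nabla_{\bfxx}\log\det\bfG^{\bfxx}$, and only then differentiates $\log\det\bfK$ by Jacobi's formula. Horizontality is verified afterwards; it is in fact automatic, since $\det\bfK$ is $\SO(3)$-invariant and so its gradient is orthogonal to the orbits.

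You instead evaluate the defining trace $\bfhh=\pi_*\big(\sum_{i>M-G}\nabla_{\bfee_i}\bfee_i\big)$ directly, using the orthonormalized Killing frame built from $\mathbf{V}_a:=[\bfJ_a\xxv^{(n)}]_n$ and $\bfK^{-1/2}$. You discard the coefficient derivatives as vertical and reduce the remainder to $\sum_{a,b}(\bfK^{-1})_{ab}\bfJ_b\bfJ_a=\bfK^{-1}-\tr(\bfK^{-1})\bfI$. That vector is not a priori horizontal, so your commutation argument is a needed step: $\bfK^{-1}\mathbf{C}$ is symmetric, with $\mathbf{C}=\sum_n\xxv^{(n)}{\xxv^{(n)}}\trs$. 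That argument is correct.

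Your route is shorter and more elementary. It works straight from the definition of $\bfhh$ used in \thmref{projected-diffusion} and \thmref{horizontal-lifted-sde}, and avoids the variational formula, the integration over the group, and the exchange of time and space derivatives. The paper's route costs more machinery but yields a group-agnostic orbit-volume formula that carries the physical interpretation. Your setup can recover that formula too, in one line. Because the $\mathbf{V}_a$ are Killing fields, $\langle\nabla_{\mathbf{V}_a}\mathbf{V}_b,X\rangle=-\langle\nabla_X\mathbf{V}_b,\mathbf{V}_a\rangle$ for horizontal $X$. Contracting with $(\bfK^{-1})_{ab}$ then gives $\langle\bfhht,X\rangle=-\frac12 X(\log\det\bfK)$.
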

\begin{proof}
    We unroll the proof by first deriving the general expression Eq.~(\ref{eqn:so3-lifted-sde}') in the case of Euclidean total space, then deriving the expression for the projection operator $P_\bfxx(\bfvv)$ for $\clG = \SO(3)$, and the horizontal-lifted mean curvature vector field $\bfhht(\bfxx)$ for $\clG = \SO(3)$. Expressions for the $\SO(3)$ case are expressed in the total space $\clM$ defined in \appxref{r3N-se3-construction}, which can be embedded in the Euclidean space $\bbR^{3N}$.

    \paragraph{The general expression Eq.~(\ref{eqn:so3-lifted-sde}').}
    Again, for any $\bfxx \in \clM$, let $\{\bfee_i\}_{i=1}^M$ be an orthonormal basis of $T_{\bfxx} \clM$ such that $\clH_{\bfxx} = \mathrm{span}\{\bfee_i\}_{i=1}^{M-G}$, and $\clV_{\bfxx} = \mathrm{span}\{\bfee_i\}_{i=M-G+1}^M$. Then by the Riemannian submersion construction of $\pi:\clM\to\clQ$ (see \appxref{rigorous-construction}), $\{\bfeeb_i:=\pi_{*\bfxx} \bfee_i\}_{i=1}^{M-G}$ is an orthonormal basis of $T_{\pi(\bfxx)} \clQ$. Such a basis as a smooth function of $\bfxx$ always exists in a neighborhood, so each $\bfee_i$ can be viewed as a vector field in each neighborhood.
    Let $\nabla^\clM$ and $\nabla^\clQ$ be the Levi-Civita connection on $\clM,\clQ$, respectively, where $\nabla^\clQ$ is induced from the Riemannian metric inherited from $\clM$.    
    
    As shown in \appxref{proof-thm2}, \eqnref{generator-lifted}, the horizontal lift of \eqnref{lifted-sde} has the generator
    \begin{align}
        \clL_t=\left(\bfff_t^\clH-\frac{\sigma_t^2}{2}\bfhht\right)+\frac{\sigma_t^2}{2}\sum_{i=1}^{M-G} \lrparen{\bfee_i^2-(\nabla^\clM_{\bfee_i}\bfee_i)^\clH}.
    \end{align}
    Since $\clL_{\bfee_i} \bfee_i = \bfee_i(\bfee_i(\cdot)) - \bfee_i(\bfee_i(\cdot)) =0$, by \propref{connection-quotient-space}, we know that $(\nabla^\clM_{\bfee_i} \bfee_i)^\clV=\bfzro$ for $i=\{1,\cdots,M-G\}$, so
    \begin{align}
        \clL_t = \left(\bfff_t^\clH - \frac{\sigma_t^2}{2} \bfhht \right)+\frac{\sigma_t^2}{2} \sum_{i=1}^{M-G} \lrparen{ \bfee_i^2-(\nabla^\clM_{\bfee_i} \bfee_i) }.
    \end{align}
    Since $\clM$ is embedded in a Euclidean space, we have $\nabla^\clM_{\bfee_i}\bfee_i=\sum_{j=1}^M \bfee_i(\bfee_i^j) \partial_j$, where $\bfee_i^j$ is the $j$-th component of $\bfee_i$ and $\partial_j := \partial/\partial x_j$ is the derivative w.r.t a standard coordinate system of the Euclidean space (\ie, these $\{\partial_j\}_{j=1}^M$ are a standard orthonormal basis frame for tangent spaces of the Euclidean space, which are isometrically isomorphic to the Euclidean space itself).
    Since $\bfff^\clH_t(\bfxx)=P_\bfxx\bfff_t(\bfxx)$, then the generator becomes
    \begin{align}
        \clL_t&=\left(\bfff_t^\clH(\bfxx)-\frac{\sigma_t^2}{2}\bfhht(\bfxx)\right)+\frac{\sigma_t^2}{2}\sum_{i=1}^{M-G}\bfee_i^2-(\nabla^\clM_{\bfee_i}\bfee_i)\\
        &=\left(P_\bfxx\bfff_t(\bfxx)-\frac{\sigma_t^2}{2}\bfhht(\bfxx)\right)+\frac{\sigma_t^2}{2}\sum_{i=1}^{M-G}\sum_{j,k=1}^M\bfee_i^j(\partial_j\bfee_i^k)\partial_k+\bfee_i^j\bfee_i^k\partial_j\partial_k-\bfee_i^j(\partial_j\bfee_i^k)\partial_k\\
        &=\left(P_\bfxx\bfff_t(\bfxx)-\frac{\sigma_t^2}{2}\bfhht(\bfxx)\right)+\frac{\sigma_t^2}{2}\sum_{i=1}^{M-G}\sum_{j,k=1}^M \bfee_i^j\bfee_i^k\partial_j\partial_k\\
        &=\left(P_\bfxx\bfff_t(\bfxx)-\frac{\sigma_t^2}{2}\bfhht(\bfxx)\right)+\frac{\sigma_t^2}{2}\sum_{j,k=1}^M (P_\bfxx)^{jk}\partial_j\partial_k\\
        &=\left(P_\bfxx\bfff_t(\bfxx)-\frac{\sigma_t^2}{2}\bfhht(\bfxx)\right)+\frac{\sigma_t^2}{2}\sum_{j,k=1}^M (P_\bfxx{P_\bfxx}\trs)^{jk}\partial_j\partial_k,
    \end{align}
    where we have defined $P_\bfxx:=\sum_{i=1}^{M-G}\bfee_i \bfee_i\trs$ as the projection operator. Then $\clL_t$ is the generator of
    \begin{align}
        \dd\bfxxt_t=\left(P_{\bfxxt_t}(\bfff_t(\bfxxt_t))-\frac{\sigma_t^2}{2}\bfhht(\bfxxt_t)\right)\dd t +\sigma_t  P_{\bfxxt_t}\dd \bfww_t.
    \end{align}
    
    Now we deduce the explicit expressions for the case when $\clG=\SO(3)$.

    \paragraph{The expression for the projection operator $P_\bfxx(\bfvv)$ for $\clG = \SO(3)$.}
    Recall from \appxref{r3N-se3-construction}, the tangent space $T_\bfxx\clM$ of $\clM$ at $\bfxx$ can be explicitly expressed as (\eqnsref{so3-vert,so3-horiz}):
     \begin{itemize}
         \item The vertical tangent space $\clV_\bfxx$:
            \begin{align}
                \clV_\bfxx = \{ [\omegav \times \xxv^{(1)},\; \omegav \times \xxv^{(2)},\; \cdots,\; \omegav \times \xxv^{(N)}] \mid \omegav \in \bbR^3 \}.
            \end{align}
        \item The horizontal space $\clH_\bfxx$, which is the orthogonal complement of the vertical space:
            \begin{align}
                \clH_\bfxx = \Bigl\{ \bfvv = [\vvv^{(1)}, \cdots, \vvv^{(N)}] \in \bbR^{3N} \; \Big\vert \; \sum_{n=1}^N \vvv^{(n)} = \zrov, \; \sum_{n=1}^N \xxv^{(n)} \times \vvv^{(n)} = \zrov \Bigr\}.
            \end{align}
     \end{itemize}
     The horizontal projection mapping is defined by $P_\bfxx(\bfvv) = \bfvv^\clH = \bfvv - \bfvv^\clV$, $\forall\bfvv\in T_\bfxx\clM$, and we can find an explicit expression of it. By definition, $\sum_{n=1}^N \xxv^{(n)} \times {\vvv^{(n)}}^\clH = \zrov$, then
     \begin{align}
         \sum_{n=1}^N \xxv^{(n)} \times \vvv^{(n)} = \sum_{n=1}^N \xxv^{(n)} \times {\vvv^{(n)}}^\clV.
     \end{align}
    Assume $\bfvv^\clV=[\omegav\times \xxv^{(1)},\; \omegav\times  \xxv^{(2)},\; \cdots,\; \omegav\times \xxv^{(N)}]$, then 
    \begin{align}
        \sum_{n=1}^N \xxv^{(n)} \times \vvv^{(n)} &= \sum_{n=1}^N \xxv^{(n)} \times {\vvv^{(n)}}^\clV
        =\sum_{n=1}^N \xxv^{(n)} \times( \omegav\times \xxv^{(n)})\\
        &= \sum_{n=1}^N \lrangle{\xxv^{(n)},\xxv^{(n)}}\omegav-\lrangle{\xxv^{(n)},\omegav}\xxv^{(n)}
        =\left(\sum_{n=1}^{N} \Vert\xxv^{(n)}\Vert^2\bfI-\sum_{n=1}^{N}\xxv^{(n)} {\xxv^{(n)}}\trs \right)\omegav,
    \end{align}
    where we have used the identity $\xxv^{(n)}\times(\omegav\times \xxv^{(n)})=\lrangle{\xxv^{(n)},\xxv^{(n)}}\omegav-\lrangle{\xxv^{(n)},\omegav}\xxv^{(n)}$. By denoting
    \begin{align}
        \bfK(\bfxx) := \sum_{n=1}^{N} \Vert\xxv^{(n)}\Vert^2\bfI-\sum_{n=1}^{N}\xxv^{(n)} {\xxv^{(n)}}\trs,
        \label{eqn:def-K}
    \end{align}
    we have $\omegav=\bfK(\bfxx)^{-1}(\sum_{n=1}^N \xxv^{(n)} \times \vvv^{(n)})$, and 
    \begin{align}
        \bfvv^\clV&=[\omegav\times \xxv^{(n)}]_n
        = \lrbrack{ \bfK(\bfxx)^{-1} \left( \sum_{n'=1}^{N}  \xxv^{(n')} \times \vvv^{(n')} \right) \times \xxv^{(n)} }_\uppern,
    \end{align}
    where the last cross-product is applied on each 3-dimensional coordinate vector $\xxv^{(n)}$. Henceforth, we have:
    \begin{align}
        P_\bfxx(\bfvv) &= \bfvv - \bfvv^\clV = \lrbrack{ \vvv^{(n)} - \bfK(\bfxx)^{-1} \left(\sum_{n'=1}^{N}  \xxv^{(n')}\times\vvv^{(n')}\right) \times \xxv^{(n)} }_\uppern, \quad \forall\bfvv\in T_\bfxx\clM.
    \end{align}
    
    \paragraph{The expression for the horizontal-lifted mean curvature vector field $\bfhht(\bfxx)$ for $\clG = \SO(3)$.}
    \itema~We start with a characterization of the horizontal-lifted mean curvature vector field $\bfhht(\bfxx)$ in the general case.
    As we have mentioned, $\bfhht(\bfxx)$ reflects the effect of the change of the volume of the equivalence class, which helps us derive an explicit expression of it. The volume can be defined by a Riemannian structure.
    As each equivalence class can be seen to be induced by the Lie group, we first establish a relation between the two spaces.
    When the Lie group $\clG$ acts smoothly, freely, properly, and isometrically on $\clM$, we can define a mapping $\Phi^\bfxx: \clG \to \clM, g \mapsto g \cdot \bfxx$ which identifies the Lie group to the equivalence class $\pi(\bfxx)$ that $\bfxx$ is in. This $\Phi^\bfxx$ is a bijection since freeness indicates that if $g \cdot \bfxx = g' \cdot \bfxx$, then $g = g'$.

    Using this bijection, we can induce structures on $\clG$ from those on $\clM$.
    Firstly, a Riemannian structure can be defined. Let $\{g^i\}_{i=1}^G$ be a coordinate chart of $\clG$, and $\{\bfgg_i\}_{i=1}^G$ be the induced frame on $\clG$, \ie, $\{\bfgg_i\}_{i=1}^G$ is a basis set of $T_g \clG$. Then a Riemannian metric can be defined by:
    \begin{align}
        \bfG_{ij}^\bfxx(g)
        := \lrangle{\bfgg_i,\bfgg_j}^{\clG,\bfxx}_g := \lrangle*{\Phi^\bfxx_{*g} \bfgg_i, \Phi^\bfxx_{*g} \bfgg_j}^\clM_{\Phi^\bfxx(g)}.
        \label{eqn:metric-G-from-x}
    \end{align}
    The Riemannian structure defines a measure on $\clG$ in the form of the volume form (\ie, top-ranked (rank-$G$) exterior form on $\clG$), 
    which can be explicitly expressed by:
    \begin{align}
        \Vol^\bfxx(g) := \sqrt{\det(\bfG^\bfxx(g))} \ud g^1\wedge\cdots\wedge \dd g^G,
    \end{align}
    where $\{\dd g^i\}_{i=1}^G$ is a basis set of $T_g^* \clG$ and $\dd g^i(\bfgg_j) = \bfgg_j(g^i) = \delta^i_j$.
    Through the embedding mapping $\Phi^\bfxx$, the volume form can also be seen as a measure on the equivalence class $\pi(\bfxx)$ through $\bfxx$.

    The relation between the horizontal-lifted mean curvature vector field and the volume of equivalence class lies in the effect under time variation. For this, we first define how a spatial function can be extended to vary in time (unnecessarily uniquely).
    \begin{definition}
        Let $\Phi:\clG\to\clM$ be an immersion. A \textbf{smooth variation} of $\Phi$ is a smooth mapping $\Phi_\cdot(\cdot): (-\epsilon,\epsilon) \times \clG \to \clM$ satisfying:
        \begin{itemize}
            \item For any $t\in(-\epsilon,\epsilon)$, $\Phi_t(\cdot)$ is an immersion;
            \item $\Phi_0(\cdot)=\Phi(\cdot)$.
        \end{itemize}
    \end{definition}

    The relation is explicitly given by the following conclusion:
    \begin{proposition}\label{prop:first-variation}
        (First variation of volume \citep[Exercise.~III.14]{chavel1995riemannian})
        Let $\Phi^\bfxx_t$ be a smooth variation of $\Phi^\bfxx$.
        Define the corresponding time-dependent Riemannian metric tensor $\bfG_{t,ij}^\bfxx(g) := \lrangle*{(\Phi^\bfxx_t)_{*g} \bfgg_i, (\Phi^\bfxx_t)_{*g} \bfgg_j}^\clM_{\Phi^\bfxx_t(g)}$, and $\Vol_t^\bfxx(g) := \sqrt{\det(\bfG_t^\bfxx(g))} \ud g^1\wedge\cdots\wedge \dd g^G$ the corresponding volume form.
        Then the mean curvature vector $\bfhht(\bfxx)$ satisfies the following formula:
        \begin{align}
            \left.\frac{\dd}{\dd t}\right\vert_{t=0} \int_\clG \Vol_t^\bfxx(g) = -\int_\clG \lrangle*{\bfhht(\Phi^\bfxx_0(g)), \bfvv_{\Phi^\bfxx_0(g)}} \Vol_0^\bfxx(g),
              \label{eqn:bfhht-as-volume-variation}
        \end{align}
        % where $\bfvv=F_*(\frac{\partial}{\partial t})$.
        where $\bfvv_{\Phi^\bfxx_0(g)} := \gammad_0$ for which $(\gamma_t)_t$ is the curve $\gamma_t = \Phi^\bfxx_t(g)$.
    \end{proposition}
    This proposition formalizes the intuition that the mean curvature vector represents the effect of volume change of equivalence class. The left hand side represents the change rate of $\int_\clG \Vol_t^\bfxx(g)$, \ie, the volume of the equivalence class through $\bfxx$, and the equality on the right hand side indicates that this change is given by the projection of the equivalence-class movement $\bfvv$ in the direction of the mean curvature vector $\bfhht(\bfxx)$. Particularly, as the mean curvature vector $\bfhht(\bfxx)$ is horizontal everywhere, if the movement $\bfvv$ is vertical (almost) everywhere, then the volume does not change, which matches the intuition.

    \itemb~Before diving into the $\clG = \SO(3)$ case, we first derive a general explicit expression for $\bfhht(\bfxx)$ under a common construction that common Lie groups have, including $\SO(3)$.
    Let the coordinate system of $\clG$ be defined in a left-invariant way, \ie,
    \begin{align}
      \bfgg_i|_g = (L_g)_{*e} \bfgg_i|_e.
    \end{align}
    Then consider the mapping $\psi^\bfxx_g(g') := (g g') \cdot \bfxx$, which leads to two equivalent expressions from its differential map. The first leverages $\psi^\bfxx_g(g') = \Phi^\bfxx(L_g(g'))$ and gives $(\psi^\bfxx_g)_{*g'} \bfgg = \Phi^\bfxx_{* g g'} (L_g)_{*g'} \bfgg$ for any $\bfgg \in T_{g'} \clG$. The second leverages $\psi^\bfxx_g(g') = g \cdot (g' \cdot \bfxx) = L_g(\Phi^\bfxx(g'))$ and gives $(\psi^\bfxx_g)_{*g'} \bfgg = (L_g)_{*(g' \cdot \bfxx)} \Phi^\bfxx_{*g'} \bfgg$, which indicate that:
    \begin{align}
      \Phi^\bfxx_* (L_g)_* = (L_g)_* \Phi^\bfxx_*.
    \end{align}
    By further noting the isometry of the group action on $\clM$, the metric defined in \eqnref{metric-G-from-x} can be simplified as:
    \begin{align}
        \bfG_{ij}^\bfxx(g)
        =& \lrangle*{\Phi^\bfxx_{*g} \bfgg_i, \Phi^\bfxx_{*g} \bfgg_j}^\clM_{\Phi^\bfxx(g)}
        = \lrangle{\Phi^\bfxx_{*g} (L_g)_{*e} \bfgg_i|_e, \Phi^\bfxx_{*g} (L_g)_{*e} \bfgg_j|_e}^\clM_{g \cdot \bfxx} \\
        =& \lrangle{(L_g)_{*\bfxx} \Phi^\bfxx_{*e} \bfgg_i|_e, (L_g)_{*\bfxx} \Phi^\bfxx_{*e} \bfgg_j|_e}^\clM_{g \cdot \bfxx}
        = \lrangle{\Phi^\bfxx_{*e} \bfgg_i|_e, \Phi^\bfxx_{*e} \bfgg_j|_e}^\clM_{\bfxx}
        = \bfG_{ij}^\bfxx(e),
        \label{eqn:G-const-g}
    \end{align}
    which turns out to be constant over $\clG$.
    This turns the left-hand-side of \eqnref{bfhht-as-volume-variation} simplified to:
    \begin{align}
      & \int_\clG \partial_{t=0} \sqrt{\det(\bfG^\bfxx_t)} \ud g^1\wedge\cdots\wedge \dd g^G
      = \int_\clG \partial_{t=0} \log \sqrt{\det(\bfG^\bfxx_t)} \, \sqrt{\det(\bfG^\bfxx_0)} \ud g^1\wedge\cdots\wedge \dd g^G \\
      =& \partial_{t=0} \log \sqrt{\det(\bfG^\bfxx_t)} \int_\clG \Vol^\bfxx_0(g)
      = V^\bfxx_0 \partial_{t=0} \log \sqrt{\det(\bfG^\bfxx_t)},
    \end{align}
    where $V^\bfxx_t := \int_\clG \Vol^\bfxx_t(g)$ is the volume of the equivalence class $\pi(\bfxx)$ induced from $\Phi^\bfxx_t$.

    For the time-dependent embedding $\Phi^\bfxx_t(g)$, we consider a special type that is induced from an left-invariant vector field $\bfvv$ on the total space $\clM$:
    \begin{align}
      \Phi^\bfxx_t(g) := \Exp_{g \cdot \bfxx}(t \bfvv_{g \cdot \bfxx}),
    \end{align}
    for $t$ around zero.
    This definition indicates that $\gammad_0$ in \eqnref{bfhht-as-volume-variation} is:
    \begin{align}
      \partial_{t=0} \Phi^\bfxx_t(g) = \bfvv_{\Phi^\bfxx_0(g)} = \bfvv_{g \cdot \bfxx} = (L_g)_{*\bfxx} \bfvv_\bfxx,
      \label{eqn:left-invar-Phi}
    \end{align}
    which is left-invariant by definition.
    Moreover, as a horizontal-lifted vector field, $\bfhht(\bfxx)$ is naturally left-invariant (see \eqnref{lift-pushforward-commute}).
    Together with the isometry of the group action on $\clM$, this indicates that the right-hand-side of \eqnref{bfhht-as-volume-variation} can be simplified to:
    \begin{align}
      & -\int_\clG \lrangle*{\bfhht(\Phi^\bfxx_0(g)), \bfvv_{\Phi^\bfxx_0(g)}}_{\Phi^\bfxx_0(g)} \Vol_0^\bfxx(g) \\
      =& -\int_\clG \lrangle*{(L_g)_{*\bfxx} \bfhht(\bfxx), (L_g)_{*\bfxx} \bfvv_{\bfxx}}_{g \cdot \bfxx} \Vol_0^\bfxx(g)
      = -\int_\clG \lrangle*{\bfhht(\bfxx), \bfvv_{\bfxx}}_{\bfxx} \Vol_0^\bfxx(g) \\
      =& -\lrangle*{\bfhht(\bfxx), \bfvv_{\bfxx}}_{\bfxx} \int_\clG \Vol_0^\bfxx(g)
      = -V^\bfxx_0 \lrangle*{\bfhht(\bfxx), \bfvv_{\bfxx}}_{\bfxx}.
    \end{align}
    In this case, \eqnref{bfhht-as-volume-variation} can be simplified to $V^\bfxx_0 \partial_{t=0} \log \sqrt{\det(\bfG^\bfxx_t)} = -V^\bfxx_0 \lrangle*{\bfhht(\bfxx), \bfvv_{\bfxx}}_{\bfxx}$, which indicates:
    \begin{align}
      \lrangle*{\bfhht(\bfxx), \bfvv_{\bfxx}}_{\bfxx} = -\frac12 \partial_{t=0} \log \det(\bfG^\bfxx_t), \quad \forall \bfxx \in \clM.
    \end{align}

    To further simplify the expression, next we will prove the following conversion from the time derivative to spatial derivative, \ie, $\partial_{t=0} \log \det(\bfG^\bfxx_t) = \lrangle*{\nabla_\bfxx \log \det(\bfG^\bfxx_0), \bfvv_\bfxx}_\bfxx$, henceforth $\bfhht(\bfxx) = -\frac12 \nabla_\bfxx \log \det(\bfG^\bfxx)$.
    Firstly, using Jacobi's formula in matrix calculus, we have
    $\partial_{t=0} \log \det(\bfG^\bfxx_t) = \tr\lrparen*[\big]{{\bfG^\bfxx_t}^{-1} \partial_{t=0} \bfG^\bfxx_t}$, and by \eqnref{G-const-g}, we have:
    \begin{align}
        \partial_{t=0} (\bfG^\bfxx_t)_{ij} = \lrangle{\partial_{t=0} (\Phi^\bfxx_t)_{*e} \bfgg_i, (\Phi^\bfxx_0)_{*e} \bfgg_j}^\clM_{\bfxx} + \lrangle{(\Phi^\bfxx_0)_{*e} \bfgg_i, \partial_{t=0} (\Phi^\bfxx_t)_{*e} \bfgg_j}^\clM_{\bfxx}.
    \end{align}
    We next convert the time derivative $\partial_{t=0} (\Phi^\bfxx_t)_{*e} \bfgg_i$ into spatial derivative by leveraging the left-invariance of $\bfvv$ that defines $\Phi^\bfxx_t$.
    Consider a general $g \in \clG$ and $\bfgg \in T_g \clG$.
    From the definition of a tangent vector, for any test function $f$ around $\Phi^\bfxx_0(g) = g \cdot \bfxx$, we have:
    \begin{align}
      \lrparen*[\big]{(\Phi^\bfxx_t)_{*g} \bfgg}(f)
      &= \bfgg\lrparen*[\big]{f(\Phi^\bfxx_t(\cdot))\vert_g}
      = \bfgg^i \partial_{g^i} f(\Phi^\bfxx_t(\cdot))\vert_g \\
      &= \bfgg^i (\partial_{\bfxx^\alpha} f \vert_{g \cdot \bfxx}) \partial_{g^i} \Phi^\bfxx_t(g)^\alpha
      = \lrparen{\bfgg^i \partial_{g^i} \Phi^\bfxx_t(g)^\alpha \partial_{\bfxx^\alpha}} (f),
    \end{align}
    where we have used Einstein's summation convention where summations over indices repeated in the superscript and subscript are implied.
    Taking the derivative w.r.t $t$ and by leveraging \eqnref{left-invar-Phi}, we have:
    \begin{align}
      & \partial_{t=0} (\Phi^\bfxx_t)_{*g} \bfgg
      =     \bfgg^i \partial_{g^i} (\partial_{t=0} \Phi^\bfxx_t(g)^\alpha) \partial_{\bfxx^\alpha} \\
      ={} & \bfgg^i \partial_{g^i} \bfvv_{g \cdot \bfxx}^\alpha \partial_{\bfxx^\alpha}
      =     \bfgg^i \partial_{g^i} \lrparen*[\big]{(L_g)_{*\bfxx} \bfvv_\bfxx}^\alpha \partial_{\bfxx^\alpha} \\
      ={} & \bfgg^i \partial_{g^i} \left. \fracpartial{L_g^\alpha}{\bfxx^\beta} \right\vert_\bfxx \bfvv_\bfxx^\beta \partial_{\bfxx^\alpha}
      =     \bfgg^i \partial_{\bfxx^\beta} \left. \fracpartial{L_g^\alpha}{g^i} \right\vert_\bfxx \bfvv_\bfxx^\beta \partial_{\bfxx^\alpha} \\
      ={} & \bfvv_\bfxx^\beta \partial_{\bfxx^\beta} \left. \fracpartial{L_g^\alpha}{g^i} \right\vert_\bfxx \bfgg^i \partial_{\bfxx^\alpha}
      =    \bfvv_\bfxx^\beta \partial_{\bfxx^\beta} \lrparen*[\big]{(\Phi^\bfxx_0)_{*g} \bfgg}^\alpha \partial_{\bfxx^\alpha},
    \end{align}
    which converts the time derivative to spatial derivatives by exchanging the order of differentiating $g \cdot \bfxx = L_g(\bfxx) = \Phi^\bfxx_0(g)$ w.r.t $\bfxx$ and $g$.
    By leveraging the linearity of inner product, we know $\lrangle{\partial_{t=0} (\Phi^\bfxx_t)_{*e} \bfgg_i, (\Phi^\bfxx_0)_{*e} \bfgg_j}^\clM_{\bfxx} = \lrangle{\bfvv_\bfxx^\beta \partial_{\bfxx^\beta} (\Phi^\bfxx_0)_{*e} \bfgg_i, (\Phi^\bfxx_0)_{*e} \bfgg_j}^\clM_{\bfxx}$,
    so:
    \begin{align}
        \partial_{t=0} (\bfG^\bfxx_t)_{ij}
        &= \lrangle{\bfvv_\bfxx^\beta \partial_{\bfxx^\beta} (\Phi^\bfxx_0)_{*e} \bfgg_i, (\Phi^\bfxx_0)_{*e} \bfgg_j}^\clM_{\bfxx} + \lrangle{(\Phi^\bfxx_0)_{*e} \bfgg_i, \bfvv_\bfxx^\beta \partial_{\bfxx^\beta} (\Phi^\bfxx_0)_{*e} \bfgg_j}^\clM_{\bfxx} \\
        &= \bfvv_\bfxx^\beta \partial_{\bfxx^\beta} \lrangle{\Phi^\bfxx_{*e} \bfgg_i, \Phi^\bfxx_{*e} \bfgg_j}_{\bfxx} = \bfvv_\bfxx^\beta \partial_{\bfxx^\beta} (\bfG^\bfxx_0)_{ij},
    \end{align}
    hence $\partial_{t=0} \log \det(\bfG^\bfxx_t) = \tr\lrparen*[\big]{{\bfG^\bfxx_0}^{-1} \partial_{t=0} \bfG^\bfxx_t}
    = \tr\lrparen*[\big]{{\bfG^\bfxx_0}^{-1} \bfvv_\bfxx^\beta \partial_{\bfxx^\beta} \bfG^\bfxx_0}
    = \bfvv_\bfxx^\beta \tr\lrparen*[\big]{{\bfG^\bfxx_0}^{-1} \partial_{\bfxx^\beta} \bfG^\bfxx_0}
    = \bfvv_\bfxx^\beta \partial_{\bfxx^\beta} \log \det(\bfG^\bfxx_0)
    = \lrangle*{\nabla_\bfxx \log \det(\bfG^\bfxx_0), \bfvv_\bfxx}_\bfxx$.
    Therefore, the mentioned expression indeed holds:
    \begin{align}
      \bfhht(\bfxx) = -\frac12 \nabla_\bfxx \log \det(\bfG^\bfxx).
      \label{eqn:bfhht-grad-logdet}
    \end{align}
    
    \itemc~Finally, we derive the expression for the specific case of $\clG = \SO(3)$. Its embedding into $\clM$ through $\bfxx \in \clM$ is given by $\Phi^\bfxx(g):=g\cdot\bfxx = g \bfxx$, where in the last expression, $g$ is meant to be its $3\times3$ rotation-matrix form, and $g \bfxx := [g \xxv^{(n)}]_n$ denotes a set of matrix-vector products.
    Its push-forward mapping is given explicitly as $\Phi^\bfxx_{*g} \bfgg = \bfgg \bfxx$, where $\bfgg \in T_g \clG$ also takes its $3\times3$ matrix form.
    As indicated by \eqnref{so3-expansion}, $\{\bfJ_1, \bfJ_2, \bfJ_3\}$ as defined in \eqnref{J-matrices} forms a basis for $\so(3)$, \ie, the tangent space at $e=\bfI$, which also induces a basis $\{g \bfJ_1, g \bfJ_2, g \bfJ_3\}$ for $T_g \clG$, and this basis frame is left invariant, satisfying the construction above. 
    Therefore, the asked basis frame $\{\bfgg_i\}_i$ at $g$ can be constructed by $\bfgg_i = \frac1{\sqrt{2}} g \bfJ_i$.
    Then the Riemannian metric tensor $\bfG$ is given explicitly by:
    \begin{align}
        \bfG_{ij}^\bfxx(g) = \lrangle{\bfgg_i \bfxx, \bfgg_j \bfxx}_{g \bfxx}^\clM = \frac12 (g \bfJ_i \bfxx)\trs (g \bfJ_j \bfxx) = \frac12 \bfxx\trs \bfJ_i\trs \bfJ_j \bfxx = \frac12 \sum_{n=1}^N \xxv^{(n)} \bfJ_i\trs \bfJ_j \xxv^{(n)}.
    \end{align}
    To proceed, one can easily verify that $\bfJ_i\trs \bfJ_j = \delta_{ij} \bfI - \onev_j \onev_i\trs$, where $\onev_i$ denotes the 3-dimensional one-hot vector where 1 is placed at the $i$-th dimension. Leveraging this expression, we have:
    \begin{align}
      \bfG^\bfxx(g) = \frac{1}{2} \bfK(\bfxx) =: \bfG^\bfxx,
    \end{align}
    where $\bfK(\bfxx) := \sum_{n=1}^{N} \Vert\xxv^{(n)}\Vert^2 \bfI - \sum_{n=1}^{N} \xxv^{(n)} {\xxv^{(n)}}\trs$ is the same as defined in \eqnref{def-K}.
    Note that this expression indeed does not depend on $g$.
    From \eqnref{bfhht-grad-logdet}, we know:
    \begin{align}
        \bfhht(\bfxx) = -\frac12 \nabla_\bfxx \log \det\bfG^\bfxx
        = -\frac12 \nabla_\bfxx \log \det\bfK(\bfxx).
        \label{eqn:mean-curve-vec-SO3-grad}
    \end{align}
    Using Jacobi's formula $\dd\log\det\bfG=\tr(\bfK^{-1}\dd \bfK)$ again, we have:
    \begin{align}
        \bfK(\bfxx) := \sum_{n=1}^{N} \Vert\xxv^{(n)}\Vert^2\bfI - \sum_{n=1}^{N}\xxv^{(n)} {\xxv^{(n)}}\trs, \quad\frac{\partial\bfK}{\partial {\xxv^{(n)}}^i} = 2 {\xxv^{(n)}}^i \bfI-(\onev^i {\xxv^{(n)}}\trs +\xxv^{(n)} {\onev^i}\trs),
    \end{align}
    where $\onev^i \in \bbR^3$ is a one-hot vector at $i$. Therefore,
    \begin{align}
      \tr\lrparen{\bfK^{-1}\frac{\partial\bfK}{\partial {\xxv^{(n)}}^i}} = 2\tr(\bfK^{-1}) {\xxv^{(n)}}^i - 2 {\onev^i}\trs \bfK^{-1} \xxv^{(n)},
    \end{align}
    and finally we get the expression for the mean curvature vector field for $\SO(3)$:
    \begin{align}
        \bfhht(\bfxx) = \lrbrack{ -\frac{1}{2} \nabla_{\xxv^{(n)}}\log\det\bfG^\bfxx }_\uppern = \lrbrack{ (\bfK(\bfxx)^{-1} - \tr(\bfK(\bfxx)^{-1}) \bfI) \xxv^{(n)} }_\uppern.
    \end{align}
\end{proof}

\paragraph{Physical interpretations of the expressions.}
As already assumed when defining the center of mass (CoM) of the system, the $N$ points in the system are treated as possessing the same mass, which simplifies calculation without hurting the utility to reduce the equivalent degrees of freedom.
Under this assumption, the $\bfK(\bfxx)$ matrix defined in \eqnref{inertia-tensor} is the \emph{inertia tensor} of the $N$-point cloud as a rigid body. Given a total/rigid-body angular velocity $\omegav$, the angular momentum of the system is $\bfK(\bfxx) \omegav$.
On the other hand, $\sum_{n'=1}^{N} \xxv^{(n')} \times \vvv^{(n')}$ is the total/rigid-body angular momentum of the system by definition (no matter whether $[\vvv^{(n')}]_{n'}$ deforms the point cloud / molecule). So the angular momentum is $\omegav = \bfK(\bfxx)^{-1} \lrparen*[\big]{ \sum_{n'=1}^{N} \xxv^{(n')} \times \vvv^{(n')} }$. The linear velocity it induces on each point, altogether only contributing a rigid-body rotation, is $[\omegav \times \xxv^{(n)}]_n$. Therefore, subtracting this velocity from the original one, which gives the expression of the horizontal projection operator in \eqnref{so3-projection}, leaves a linear velocity that only deforms the point cloud / molecule, without any rigid-body rotation.
This is in analogy with the treatment for the $\bbT(3)$ translational group where the total/rigid-body (linear) momentum is removed.

As for the lifted mean curvature vector field in \eqnref{mean-curve-vec-SO3}, it can be reformulated as $\bfhht(\bfxx) = -\frac12 \nabla_\bfxx \log \det\bfK(\bfxx)$ (as also shown in the proof, \eqnsref{bfhht-grad-logdet,mean-curve-vec-SO3-grad}).
To understand this, recall that on the quotient space, rotationally equivalent structures are compressed as one point, neglecting the volume of the equivalence class which differs with the shape.
This also induces a neglect of the volume of angular momentum in the original space, which also differs with the shape.
Indeed, as the angular momentum induced by an angular velocity $\omegav$ on a point cloud in state $\bfxx$ is $\Lv = \bfK(\bfxx) \omegav$, where $\bfK(\bfxx)$ is the inertia tensor of the point cloud. Therefore, although the volume of the angular velocity $\omega$ up to a given scale is independent of the configuration of the point cloud, the volume of the angular momentum $\Lv$ does: $\dd \Lv = (\det \bfK(\bfxx)) \ud \omegav$ ($\dd \Lv$ and $\dd \omegav$ here represent the respective volume form; \ie, infinitesimal volume element). So configurations with a larger $\det \bfK(\bfxx)$ occupies a larger portion of the original space, 
so in the purely deformation process, configurations with a larger $\det \bfK(\bfxx)$ would be easier to be hit than it actually should be.
Therefore, the $\bfhht(\bfxx)$ vector field comes to counteract this effect, by adding a steering force towards configurations with a smaller $\det \bfK(\bfxx)$.
Only with this correction can the generation process recover the same target distribution.

Finally, we verify that this $\bfhht(\bfxx)$ vector field is indeed horizontal.
From \eqnref{so3-horiz}, this amounts to showing $\sum_{n=1}^N \xxv^{(n)} \times \bfhht^{(n)}(\bfxx) = \zrov$. Since:
\begin{align}
    \sum_{n=1}^N \xxv^{(n)} \times \bfhht^{(n)}(\bfxx)
    &= \sum_{n=1}^N \xxv^{(n)} \times \big( \bfK(\bfxx)^{-1} \xxv^{(n)} - \tr(\bfK(\bfxx)^{-1}) \xxv^{(n)} \big)
    = \sum_{n=1}^N \xxv^{(n)} \times (\bfK(\bfxx)^{-1} \xxv^{(n)}) \\
    &= \sum_{n=1}^N \frac1{\det \bfK(\bfxx)} \bfK(\bfxx) \Big( \big( \bfK(\bfxx) \xxv^{(n)} \big) \times \xxv^{(n)} \Big),
\end{align}
we only need to verify $\sum_{n=1}^N \big( \bfK(\bfxx) \xxv^{(n)} \big) \times \xxv^{(n)} = \zrov$. Leveraging the expression for $\bfK(\bfxx)$ in \eqnref{inertia-tensor}, we have:
$\sum_{n=1}^N \big( \bfK(\bfxx) \xxv^{(n)} \big) \times \xxv^{(n)}$
\begin{align} 
  =& \sum_{n=1}^N \lrparen*[\bigg]{ \sum_{n'=1}^{N} \Vert\xxv^{(n')}\Vert^2 } \xxv^{(n)} \times \xxv^{(n)} - \sum_{n=1}^N \lrparen*[\bigg]{ \lrparen*[\Big]{ \sum_{n'=1}^{N} \xxv^{(n')} {\xxv^{(n')}}\trs } \xxv^{(n)} } \times \xxv^{(n)} \\
  =& -\sum_{n=1}^N \lrparen*[\bigg]{ \sum_{n'=1}^{N} \lrparen*[\Big]{ {\xxv^{(n')}}\trs \xxv^{(n)} } \xxv^{(n')} } \times \xxv^{(n)}
  = -\sum_{n=1}^N \sum_{n'=1}^{N} \big( {\xxv^{(n')}}\trs \xxv^{(n)} \big) \big( \xxv^{(n')} \times \xxv^{(n)} \big).
\end{align}
Noting that ${\xxv^{(n')}}\trs \xxv^{(n)} = {\xxv^{(n)}}\trs \xxv^{(n')}$ while $\xxv^{(n')} \times \xxv^{(n)} = -\xxv^{(n)} \times \xxv^{(n')}$, we can couple the $(n,n')$ pair with the $(n',n)$ pair in the summation, which all cancels out. Therefore, the result is indeed zero.

\section{Training and Sampling Methods in More General Cases} \label{appx:general-method}

\paragraph{Training objective.} The diffusion model on the total space $\clM$ is trained by the denoising score matching objective. Since the vertical components of the velocity are not strictly needed, we propose to supervise the model only on the horizontal components and allow arbitrary vertical output of the model. Recall that the horizontal projection operator $P_\bfxx$ projects a vector to its horizontal component, \ie, $P_{\bfxx}(\bfvv)=\bfvv^\clH$. Thus the improved training objective is given by
\begin{align}\label{eqn:p-objective}
  \clL(\theta):=\bbE_{p(t)} w(t) \bbE_{(\bfxx_0,\bfxx_1)\sim p_\joint,\bfeps\sim\clN(0,\bfI)} \lrVert{ P_{\bfxx_t}\left(\bfvv_\theta(\bfxx_t,t)-(\alphad_t \bfxx_0 + \betad_t\bfxx_1+ \gammad_t \bfeps)\right) }^2.
\end{align}

\paragraph{ODE sampler.} After the training stage, $P_{\bfxx_t}(\bfvv_\theta(\bfxx_t,t))$ is an approximation of the ground truth vector field in the horizontal subspace. For the deterministic sampler, we need to simulate the horizontal lift of the projected ODE, which is given by
\begin{align}
    \fracdiff{\bfxx_t}{t} = P_{\bfxx_t}(\bfvv(\bfxx_t,t)).
\end{align}
In practice, the ODE process is approximated by numerical solvers, \eg the Euler method and Runge-Kutta methods.

\paragraph{SDE sampler.} For the stochastic sampler, we need to simulate the horizontal lift of the projected original SDE in \eqnref{diffusion-sde}. According to \thmref{projected-diffusion} and \thmref{so3-diffusion}, the lifted process is given by
\begin{align}
  \dd\bfxx_t = P_{\bfxx_t}\lrparen*[\big]{ \bfvv_\theta(\bfxx_t,t) + \eta_t \bfss_\theta(\bfxx_t,t) } \ud t + \eta_t \bfhht(\bfxx_t) \ud t + \sqrt{2 \eta_t} P_{\bfxx_t} \ud \bfww_t,
\end{align}

The training and sampling algorithm is summarized in Algorithm~\ref{alg:training-appx} and~\ref{alg:sampling-appx}.

\algrenewcommand\algorithmicindent{0.5em}
\noindent 
\begin{minipage}[t]{0.495\textwidth}
\begin{algorithm}[H]
\caption{Training for $p_\prior=\clN(0,\bfI)$} 
\label{alg:training} 
\small
\vspace{0.05in}
\begin{algorithmic}[1]
\Repeat
  \State $(\bfxx_0,\bfxx_1) \sim p_\joint$
  \State $t \sim p_t$
  \State $\bfxx_t = \alphah_t \bfxx_0 + \beta_t \bfxx_1$
  \State Take a gradient descent step on
  \Statex $\quad \nabla_\theta ~ w(t) \left\| P_{\bfxx_t} \left( \bfD_\theta(\bfxx_t,t) - \bfxx_1 \right) \right\|^2$
\Until{converged}
\end{algorithmic}
\vspace{0.05in}
\end{algorithm}
\end{minipage}
\hfill
\begin{minipage}[t]{0.495\textwidth}
\begin{algorithm}[H]
\caption{Training for general $p_\prior$} 
\label{alg:training-appx} 
\small
\vspace{0.05in}
\begin{algorithmic}[1]
\Repeat
  \State $(\bfxx_0,\bfxx_1) \sim p_\joint$, $\bfeps \sim \clN(\bfzro,\bfI)$
  \State $t \sim p_t$ 
  \State $\bfxx_t = \alpha_t \bfxx_0 + \beta_t \bfxx_1 + \gamma_t \bfeps$
  \State $\bfvv_t = \alphad_t \bfxx_0 + \betad_t \bfxx_1 + \gammad_t \bfeps$
  \State Take a gradient descent step on
  \Statex $\quad \nabla_\theta ~ w(t) \left\| P_{\bfxx_t} \left( \bfvv_\theta(\bfxx_t,t) - \bfvv_t \right) \right\|^2$
\Until{converged}
\end{algorithmic}
\vspace{0.05in}
\end{algorithm}
\end{minipage}

\begin{algorithm}[H]
\caption{Sampling} 
\label{alg:sampling-appx} 
\small
\begin{algorithmic}[1]
\State $\bfxx_0 \sim p_\prior$
\For{$i = 0$ \textbf{to} $K-1$}
  \State $\Delta t_i = t_{i+1} - t_i$
  \If{ODE sampling}
    \State $\bfxx_{t_{i+1}} = \bfxx_{t_i} + P_{\bfxx_{t_i}} \bfvv_\theta(\bfxx_{t_i}, t_i) \, \Delta t_i$
  \EndIf
  \If{SDE sampling}
    \State $\bfdd_i = P_{\bfxx_{t_i}} \left( \bfvv_\theta(\bfxx_{t_i}, t_i) + \eta_{t_i} \bfss_\theta(\bfxx_{t_i}, t_i) \right) + \eta_{t_i} \bfhht(\bfxx_{t_i})$
    \State $\bfeps \sim \clN(\bfzro,\bfI)$
    \State $\bfxx_{t_{i+1}} = \bfxx_{t_i} + \bfdd_i \Delta t_i + \sqrt{2 \eta_{t_i} \Delta t_i} P_{\bfxx_{t_i}} \bfeps$
  \EndIf
\EndFor
\end{algorithmic}
\end{algorithm}

\section{Experimental Details}\label{appx:experiments}
\subsection{Molecular Structure Generation}\label{appx:geom}
This appendix summarizes our experimental setup, which strictly follows that of ET-Flow \citep{hassan2024flow}.
We detail the datasets, model architecture, training, sampling, and evaluation. 
For a more comprehensive discussion of each component, we refer the reader to the appendices of their original paper.

\subsubsection{Dataset}
First, we evaluate our framework on the molecule structure generation task. In this scenario, our goal is to generate the 3D coordinates of a molecule given the graph structure of the molecule. We conduct the experiments on the GEOM datasets \citep{axelrod2022geom}, which provide structure ensembles generated by metadynamics in CREST \citep{pracht2024crest}, and we focus on the GEOM-QM9 and GEOM-DRUGS datasets. Following the data processing and splits from \citet{hassan2024flow}, we use the random splits with train/validation/test of 243473/30433/1000 for GEOM-DRUGS and 106586/13323/1000 for GEOM-QM9. In addition, data with disconnected molecule graphs are removed for GEOM-DRUGS \citep{hassan2024flow}. Our reproduction is based on the modified data-processing pipeline following the released configurations thus different from the results reported in the original paper.
In evaluation, we use the same sampling steps of 50 NFEs for all ET-Flow series models in \tabref{structure_results_all} for fair comparison.

\subsubsection{Settings}
We primarily follow the setting in \citet{hassan2024flow}. We set the Gaussian distribution as the prior distribution on GEOM-QM9 and use the harmonic prior for GEOM-DRUGS \citep{volk2023alphaflow}. Following \citet{jing2022torsional,xu2022geodiff}, we report the RMSD-based metrics, \eg, Coverage and Average Minimum RMSD (AMR) between generated and ground truth structure ensembles. We parameterize $\bfvv_{\theta}$ by using equivariant graph transformer architectures from ET-Flow \citep{hassan2024flow}, including the O(3) and SO(3) equivariant variants, which also serves as a verification that our framework is compatible with different backbone models. For training,  we use AdamW as the optimizer, and set the hyper-parameter $\epsilon$ to 1e-8 and $(\beta_1,\beta_2)$ to (0.9,0.999). We use the dynamic gradient clipping as \citep{hassan2024flow,hoogeboom22equivariant}. The peak learning rate is set to 5e-4 for GEOM-DRUGS and 7e-4 for GEOM-QM9. The batch size is set to 48 for GEOM-DRUGS and 128 for GEOM-QM9.  The weight decay is set to 1e-8. The model is trained for 1000 epochs for both datasets. The noise scale $\sigma$ is set to $0.1$. We also use 50 time steps with the Euler solver for sampling. All models are trained on 8 NVIDIA A100 GPUs.

\subsubsection{Baselines}
Following \citep{hassan2024flow}, we choose strong baselines trained on GEOM-DRUGS and GEOM-QM9 for a challenging comparison. We report the performance of CGCF \citep{xu2021learning}, GeoMol \citep{ganea2021geomol}, GeoDiff \citep{xu2022geodiff}, Torsional Diffusion \citep{jing2022torsional}, and MCF \citep{wang2023swallowing}. 

\subsection{Protein Structure Generation} \label{appx:protein}

This appendix summarizes our experimental setup, which strictly follows that of Prote\'ina \citep{geffner2025proteina}.
We detail the datasets, model architecture, training, sampling, and evaluation. 
For a more comprehensive discussion of each component, we refer the reader to the appendices of their original paper.

\subsubsection{Dataset}
\label{appx:protein_dataset}
For training, we utilize the FoldSeek AFDB clusters ($\clD_{\text{FS}}$) dataset as curated and described in the Prote\'ina.
This dataset is a high-quality, non-redundant subset of the AlphaFold Database (AFDB), containing 588,318 cluster-representative protein structures with lengths between 32 and 256 residues. 
The dataset is annotated with hierarchical CATH labels, which are leveraged during training. 
Our data processing and handling strictly follow the pipeline detailed in Appx.~M of \citet{geffner2025proteina}.

\subsubsection{Baselines}
\label{appx:protein_baselines}

The representative references in \tabref{protein_backbone} primarily focus on generating the full backbone of proteins, whereas the Prote\'ina setting evaluated here focuses only on generating the alpha carbon atoms ($\rmC_\alpha$) of the residues.
These baselines can be categorized according to their structure-generation strategies.
FrameDiff \citep{yim2023se} and RFDiffusion \citep{watson2023novo} are diffusion-based backbone generators built on residue-frame representations.
FrameDiff \citep{yim2023se} trains a standalone $\SE(3)$-aware denoising model for unconditional backbone generation, whereas RFDiffusion \citep{watson2023novo} adapts the RoseTTAFold structure prediction architecture into a denoising diffusion model and is particularly effective for conditional design and motif scaffolding.
FoldFlow \citep{bose2024sestochastic} and FrameFlow \citep{yim2023fast} retain a similar frame-based geometric representation but replace the score-based diffusion with flow matching, with the goal of achieving more stable dynamics and faster sampling. FoldFlow \citep{bose2024sestochastic} further introduces deterministic, optimal-transport, and stochastic variants.
Proteus \citep{wang2024proteus} also falls within the backbone-frame diffusion family, but improves the denoising architecture with graph triangle blocks and multi-track interactions, enhancing designability and sampling efficiency without relying on large pretrained structure-prediction models.
Chroma \citep{ingraham2023illuminating} adopts a different geometry-aware diffusion strategy, generating protein backbones through correlated coordinate diffusion and subsequently completing sequence and side-chain design with its design network.
Genie2 \citep{lin2024out} is closer to the Prote\'ina setting in terms of its forward process, as it adds Gaussian noise only to the $\rmC_\alpha$ coordinates, while still adopting the residue frame representation and $\SE(3)$-equivariant modules. 
ESM3 \citep{hayes2025simulating} is methodologically distinct from these structure-centric baselines: it is a multimodal protein language model operating over sequence, structure, and function tokens for broad conditional protein generation.
In \tabref{protein_backbone}, results of these baselines under the metrics are taken directly from the Prote\'ina paper \citep{geffner2025proteina}.

\subsubsection{Model Architecture and Training}
\label{appx:protein_model}
We adopt the efficient $\clM_{\text{FS}}^{\text{small}}$ architecture from the original Prote\'ina work \citep{geffner2025proteina}, which is a 60M-parameter non-equivariant Transformer-based architecture that forgoes the use of computationally expensive triangle update layers and pair representation updates.
Key aspects of the training protocol from Prote\'ina are preserved, including their novel Beta-Uniform mixture for the time-sampling distribution $p(t)$, the use of self-conditioning, and data augmentation with random rotations.
All model and training hyperparameters, such as embedding dimensions, number of layers, attention heads, and optimizer settings, are kept consistent with hyperparameters saved in their released checkpoint $\clM_{\text{FS}}^{\text{small}}$.
The hyperparameters for the $\clM_{\text{FS}}^{\text{small}}$ model are detailed in Table~\ref{tab:protein_hyperparameter}, in comparison with the larger models from the original Prote\'ina paper.

\begin{table}[htbp]
\centering
\caption{Hyperparameters for Prote\'ina model.}
\label{tab:protein_hyperparameter}
\begin{tabular}{lccc}
\toprule
\textbf{Hyperparameter} & $\clM_{\text{FS}}$ & $\clM_{\text{FS}}^{\text{no-tri}}$ & $\clM_{\text{FS}}^{\text{small}}$ \\
\midrule
\multicolumn{4}{l}{\textbf{Prote\'ina Architecture}} \\
sequence repr dim & 768 & 768 & 512 \\
\# registers & 10 & 10 & 10 \\
sequence cond dim & 512 & 512 & 128 \\
$t$ sinusoidal enc dim & 256 & 256 & 196 \\
idx. sinusoidal enc dim & 128 & 128 & 196 \\
fold emb dim & 256 & 256 & 196 \\
pair repr dim & 512 & 512 & 196 \\
seq separation dim & 128 & 128 & 128 \\
pair distances dim ($x_t$) & 64 & 64 & 64 \\
pair distances dim ($\tilde{x}(x_t)$) & 128 & 128 & 128 \\
pair distances min (\AA) & 1 & 1 & 1 \\
pair distances max (\AA) & 30 & 30 & 30 \\
\# attention heads & 12 & 12 & 12 \\
\# tranformer layers & 15 & 15 & 12 \\
\# triangle layers & 5 & --- & --- \\
\# trainable parameters & 200M & 200M & 60M \\
\midrule
\multicolumn{4}{l}{\textbf{Prote\'ina Training}} \\
\# steps & 200K & 360K & 150K \\
batch size per GPU & 4 & 10 & 5 \\
\# GPUs & 128 & 96 & 16 \\
\# grad. acc. steps & 1 & 1 & 1 \\
\bottomrule
\end{tabular}
\end{table}

\subsubsection{Sampling}
\label{appx:protein_sampling}
To facilitate a direct comparison with the publicly available Prote\'ina checkpoints, we trained our model with an identical hierarchical fold class conditioning mechanism. 
However, to ensure a fair assessment of foundational generative capabilities, all experiments reported in our main text were performed in a strictly unconditional setting.
We applied the same sampling protocol across all models, using 400 sampling steps and enabling self-conditioning, which consistently improved performance. 
No other guidance techniques, such as autoguidance, were utilized. 
We use deterministic ODE sampling to assess distributional fidelity and SDE sampling to explore the designability-diversity trade-off. 
We adapt the SDE formulation and its Euler-Maruyama numerical scheme, detailed in Appx.~I of \citet{geffner2025proteina}, for our quotient space framework, while retaining all other configurations, such as the sampling scheduler and $g(t)$, from the original paper.

\subsubsection{Evaluation}
\label{appx:protein_evaluation}
We evaluate our models rigorously adheres to the metrics established and validated in the Prote\'ina paper.
We assess model performance using the standard suite of metrics in protein backbone structure design:
\begin{itemize}
    \item \textbf{Designability.} Quantified by the self-consistency RMSD (scRMSD) protocol, using ProteinMPNN for inverse folding and ESMFold for structure prediction, with a success threshold of scRMSD less than 2 \AA.
    \item \textbf{Diversity.} Measured in two ways: by the average pairwise TM-score among designable samples, and by the number of distinct structural clusters identified by FoldSeek at a TM-score threshold of 0.5.
    \item \textbf{Novelty.} Assessed by calculating the maximum TM-score of each designable sample against reference structures in the PDB and AFDB databases.
\end{itemize}
We also  adopt the novel probabilistic metrics introduced by \citet{geffner2025proteina}, to measure how well our model captures the true distribution of protein structures:
\begin{itemize}
    \item \textbf{FPSD.} Measured the distributional similarity between generated and reference structures in the feature space of a pre-trained fold class predictor.
    \item \textbf{fS.} Evaluated both the quality and diversity of samples based on the confidence and entropy of fold class predictions.
    \item \textbf{fJSD.} Quantified the similarity between the categorical fold class distributions of generated and reference sets.
\end{itemize}

It is noteworthy that we have omitted the Diversity and Novelty metrics from our main text to avoid comparisons with potentially inaccurate results in the literature. 
This decision is based on a bug recently identified in the alntmscore output of FoldSeek versions prior to v10 (release 10-941cd33), which renders many previously reported TM-based metrics incorrect (also found in \citep{daras2025ambient}). 
To provide a controlled and accurate benchmark, we conducted our own analysis using the FoldSeek v10 (release 10-941cd33).
We limited this re-evaluation to the released small Prote\'ina model and our corresponding model trained in the quotient space. The full results of this comparison are summarized in \tabref{protein_backbone_full}.

\begin{table}[htbp]
\centering
\footnotesize
\caption{Complete performance comparison of the released Prote\'ina checkpoints against models using quotient-space diffusion. Best results are marked in \textbf{bold}.}
\label{tab:protein_backbone_full}
\vspace{-0.3cm}
\resizebox{1.0\textwidth}{!}{%
\begin{tabular}{l@{\hspace{-2pt}}c@{\hspace{4pt}}c@{\hspace{1pt}}ccccccccc}
\toprule
\multirow{2}{*}{Model} & \multirow{2}{*}{Designability (\%) $\uparrow$} & \multicolumn{2}{c}{Diversity} & \multicolumn{2}{c}{Novelty $\downarrow$ vs.} & \multicolumn{2}{c}{FPSD $\downarrow$ vs.} & fS $\uparrow$ in & \multicolumn{2}{c}{fJSD $\downarrow$ vs.} \\
\cmidrule(lr){3-4} \cmidrule(lr){5-6} \cmidrule(lr){7-8} \cmidrule(lr){9-9} \cmidrule(lr){10-11}
& & Cluster $\uparrow$ & TM-Sc. $\downarrow$ & PDB & AFDB & PDB & AFDB & C/A/T & PDB & AFDB \\
\midrule 
\multicolumn{7}{@{}l}{\textbf{SDE Sampling}} \\
Prote\'ina $\clM_{\text{FS}}^{\text{small}}, \gamma=0.35$& 96.0 & 0.44 (209) & 0.50 & 0.86 & 0.91 & 386.5 & 378.2 & 1.77/4.97/17.78 & 2.17 & 1.73 \\
% \rowcolor{gray!25}
\textbf{+ Quotient-space diffusion} & \textbf{97.6} & 0.40 (197) & 0.48 & 0.86 & 0.91 & 274.7 & 277.1 & 2.24/6.69/20.99 & 1.68 & 1.55 \\
Prote\'ina $\clM_{\text{FS}}^{\text{small}}, \gamma=0.45$ & 92.2 & 0.55 (253) & 0.49 & 0.84 & 0.90 & 332.9 & 320.4 & 1.83/5.01/20.22 & 1.93 & 1.49 \\
% \rowcolor{gray!25}
\textbf{+ Quotient-space diffusion} & 92.6 & 0.51 (253) & 0.47 & 0.85 & 0.90 & 244.5 & 246.3 & 2.24/6.68/23.47 & 1.43 & 1.28 \\
Prote\'ina $\clM_{\text{FS}}^{\text{small}}, \gamma=0.50$ & 89.2 & 0.57 (255) & 0.48 & 0.83 & 0.89 & 306.2 & 290.8 & 1.86/4.92/21.15 & 1.81 & 1.36 \\
% \rowcolor{gray!25}
\textbf{+ Quotient-space diffusion} & 90.2 & 0.51 (231) & 0.47 & 0.84 & 0.90 & 228.0 & 228.7 & 2.25/6.59/25.24 & 1.32 & 1.17 \\
\midrule
\multicolumn{7}{@{}l}{\textbf{ODE Sampling}} \\
Prote\'ina $\clM_{\text{FS}}^{\text{small}}$ & 13.8 & \textbf{0.90 (62)} & \textbf{0.43} & \textbf{0.80} & 0.87 & 83.18 & 21.93 & 2.45/5.63/31.76 & 0.58 & 0.12 \\
% \rowcolor{gray!25}
\textbf{+ Quotient-space diffusion} & 15.6 & 0.87 (68) & \textbf{0.43} & \textbf{0.80} & \textbf{0.86} & \textbf{69.94} & \textbf{17.56} & \textbf{2.57/6.40/32.14} & \textbf{0.41} & \textbf{0.11} \\
\bottomrule
\end{tabular}%
}
\end{table}

\section{Additional Experimental Results and Discussions} \label{appx:add-exp}

\subsection{Efficiency and Complexity Analysis}
\paragraph{Complexity analysis.}
In this subsection, we give a detailed discussion on the computational cost of our method. As mentioned in \thmref{so3-diffusion}, we need to compute the inversion of the matrix $\bfK$ and the cross product for the horizontal projection operator $P_\bfxx$ and the mean curvature vector $\bfhht(\bfxx)$. For the calculation of $\bfK^{-1}$, notice that $\bfK$ is always a $3\times 3$ matrix, so construction cost of $\bfK^{-1}$ is only linear $O(N)$, where $N$ is the number of atoms (linear $O(N)$ cost for constructing $\bfK$, and constant $O(1)$ cost for inversion). The cross product is conducted atom-wise, so its computational cost is also linear $O(N)$. So we can conclude that the overall computational complexity is $O(N)$ for both $P_\bfxx$ and $\bfhht(\bfxx)$.

We would like to mention that the alignment operation adopted in the heuristic alignment-based diffusion strategies also has the same complexity. To see this, for aligning $\bfxx\in\bbR^{3\times N}$ towards $\bfyy\in\bbR^{3\times N}$, the Kabsch-Umeyama algorithm constructs the optimal rotation matrix as $(\bfH\trs \bfH)^{\frac{1}{2}} \bfH^{-1}$, where $\bfH:=\bfyy \bfxx\trs \in \bbR^{3\times 3}$ requires a linear $O(N)$ cost. In practice, the $O(N)$ computational cost is negligible compared to the cost of gradient back-propagation through the neural network. A comparison of practical training times is shown in the following table.

\begin{table}[h]
\centering
\caption{Training speed comparison on QM9.}
% \begin{tabularx}{0.8\textwidth}{lXXXX}
\begin{tabular}{lcccc}
\toprule
Methods & \tabincell{c}{Original \\ diffusion} & \tabincell{c}{GeoDiff \\ alignment} & \tabincell{c}{AF3 \\ alignment} & \tabincell{c}{Quotient-space \\ diffusion} \\
\midrule
training speed (iters/s) & 4.19 & 4.07 & 4.08 & 4.10 \\
\bottomrule
\end{tabular}
\end{table}

All the results are tested on a single Nvidia A100 GPU.
From the results, we can see that the additional computational cost brought by the alignment and projection is negligible.

\paragraph{Numerical stability.} In our quotient-space diffusion model framework, we need to calculate the matrix inversion of $\bfK$, which may have numerical issues for near-collinear systems of points. In practice, we add an $\epsilon \bfI$ term before conducting matrix inversion, that is, we calculate $(\epsilon \bfI + \bfK)^{-1}$ in practice, where $\bfI$ is the $3\times3$ identity matrix.
This treatment is widely adopted in algorithms facing similar situations, \eg, the practical implementation of the Kabsch-Umeyama algorithm for alignment.
Our typical choice of $\epsilon$ is 1e-8, and we found that the training process is stable under this setting. We have shown the training curve of the model on the protein structure generation task in \figref{protein-loss-curve}, which indicates no numerical issues arise during the training process.

\begin{figure}
  %\vspace{-20pt}
  \centering
  \includegraphics[width=0.4\textwidth]{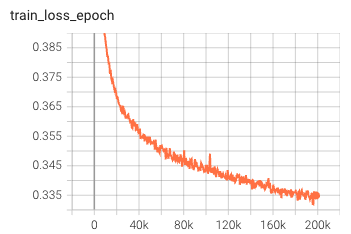}
  \caption{Training loss vs. training epochs. We find that our training is stable in practice.}
  \vspace{-10pt}
  \label{fig:protein-loss-curve}
\end{figure}

\subsection{The Implementation of $\clG$-Equivariant Vector Field}
In \thmref{so3-diffusion}, we require that the vector field is $\SO(3)$-equivariant. In practice, this can be implemented by using a $\SO(3)$-equivariant network architecture or applying data augmentation. In this subsection, we justify that both of these choices are valid, such that the diffusion model can generate a $\SO(3)$-invariant distribution.

\paragraph{Diffusion model with data augmentation.} The optimal solution of the Euclidean diffusion model is given by $\bfD_{\theta}^*(\bfxx_t)=\bbE[\bfxx_1|\bfxx_t]$ \citep{song2021score,karras2022elucidating}. When the data distribution is augmented by random rotation, the data distribution becomes SO(3)-invariant. Thus, the optimal diffusion model can recover the $\SO(3)$-invariant data distribution. When the transition density $p(\bfxx_t|\bfxx_1)$ is $\SO(3)$-equivariant, \ie $p(\bfxx_t|\bfxx_1)=p(g\cdot\bfxx_t|g\cdot\bfxx_1),\forall g\in\SO(3)$, the optimal network is $\SO(3)$-equivariant. To see this, let $g\in\SO(3)$ be an arbitrary rotation matrix. Since
    $\bfD_{\theta}^*(g\cdot\bfxx_t)=\bbE[\bfxx_1|g\cdot\bfxx_t]$,
by the Bayes formula,
\begin{align}
    \bbE[\bfxx_1|g\cdot\bfxx_t]&=\frac{\bbE_{p_\target(\bfxx_1)}[\bfxx_1p(g\cdot\bfxx_t|\bfxx_1)]}{\bbE_{p_\target(\bfxx_1)}[p(g\cdot\bfxx_t|\bfxx_1)]}
    =\frac{\bbE_{p_\target(\bfxx_1)}[\bfxx_1p(\bfxx_t|g^{-1}\cdot\bfxx_1)]}{\bbE_{p_\target(\bfxx_1)}[p(\bfxx_t|g^{-1}\bfxx_1)]}\\
    &=\frac{g\cdot\bbE_{p_\target(g^{-1}\bfxx_1)}[g^{-1}\bfxx_1p(\bfxx_t|g^{-1}\cdot\bfxx_1)]}{\bbE_{p_\target(g^{-1}\bfxx_1)}[p(\bfxx_t|g^{-1}\bfxx_1)]}
    = g\cdot\bbE[\bfxx_1|\bfxx_t],
\end{align}
where we use the equivariance property of the transition density to get the second equality and the invariance property of $p_\target$ to get the third equality. Thus, we can conclude that the optimal solution under these conditions is $\SO(3)$-equivariant. \citet{geffner2025proteina} also gives an empirical validation that a well-trained neural network becomes nearly equivariant even if its architecture is not equivariant.

\paragraph{Equivariant architecture.} When the model is required to be $\SO(3)$-equivariant, the optimal solution of the diffusion model is not $\bbE[\bfxx_1|\bfxx_t]$. To figure out the optimal solution, we consider the training loss at time $t$. The loss function at $t$ is given by
\begin{align}
    \clL_t(\theta)&=\bbE\Vert\bfD_{\theta}(\bfxx_t,t)-\bfxx_1\Vert^2 \\
    &=\int\dd^{3N}\bfxx_1\int\dd^{3N}\bfxx_t \; p(\bfxx_1,\bfxx_t)\left(\Vert\bfD_{\theta}(\bfxx_t,t)\Vert^2+ \Vert\bfxx_{1}\Vert^2-2\langle\bfD_{\theta}(\bfxx_t,t),\bfxx_1\rangle\right).
\end{align}
The optimal solution satisfies
\begin{align}
    \bfD_{\theta}^*(\bfxx_t,t)=\argmin_{\bfD_{\theta}  \text{ is $\SO(3)$-equivariant}} \clL_t(\theta).
\end{align}
The training loss can be simplified using the equivariant constraint: $\clL_t(\theta)$
\begin{align}
    &=\int\dd^{3N}\bfxx_1\int\dd^{3N}\bfxx_t ~p(\bfxx_1,\bfxx_t)\left(\Vert\bfD_{\theta}(\bfxx_t)\Vert^2+ \Vert\bfxx_{1}\Vert^2-2\langle\bfD_{\theta}(\bfxx_t),\bfxx_1\rangle\right)\\
    &=\int_{\bbR^{3N}_\CoMcirc/\SO(3)} \dd \bfrr_t\int_{\SO(3)} \dd g \int\dd^{3N}\bfxx_1 ~p(\bfxx_1,g\cdot \bfrr_t)\left(\Vert\bfD_{\theta}(g\cdot \bfrr_t)\Vert^2+ \Vert\bfxx_{1}\Vert^2-2\langle\bfD_{\theta}(g\cdot \bfrr_t),\bfxx_1\rangle\right),
\end{align}
where $\bbR^{3N}_\CoMcirc$ is defined in \appxref{r3N-se3-construction}, and $\bfrr_t \in \bbR^{3N}$ is a representation of a quotient-space element in the original Euclidean space.
Since $\bfD_\theta$ is $\SO(3)$-equivariant, $\bfD_{\theta}(g\cdot \bfrr_t) = g \cdot \bfD_{\theta}(\bfrr_t)$, then we have: $\clL_t(\theta)$
\begin{align}
    &=\int_{\bbR^{3N}_\CoMcirc/\SO(3)} \dd \bfrr_t\int_{\SO(3)} \dd g \int\dd^{3N}\bfxx_1 ~p(\bfxx_1,g\cdot \bfrr_t)\left(\Vert\bfD_{\theta}(\bfrr_t)\Vert^2+ \Vert\bfxx_{1}\Vert^2-2\langle g\cdot\bfD_{1\theta}( \bfrr_t),\bfxx_1\rangle\right).
\end{align}
Define $p(\bfrr_t)=\int_{\SO(3)} \dd g \int\dd^{3N}\bfxx_1 ~p(\bfxx_1,g\cdot \bfrr_t)$, and $p(\bfxx_1, g\mid \bfrr_t)=\frac{p(\bfxx_1,g\cdot\bfrr_t)}{p(\bfrr_t)}$. Then we have: $\clL_t(\theta)$
\begin{align}
    ={}&\int_{\bbR^{3N}_\CoMcirc/\SO(3)} \dd \bfrr_t \left[p(\bfrr_t)\Vert\bfD_{\theta}(\bfrr_t)\Vert^2 -2\langle \bfD_{\theta}(\bfrr_t),\int_{\SO(3)} \dd g \int\dd^{3N}\bfxx_1 ~ p(\bfxx_1,g\cdot \bfrr_t) g^{-1}\cdot\bfxx_1\rangle\right] \\
    &+\int_{\bbR^{3N}_\CoMcirc/\SO(3)} \dd \bfrr_t\int_{\SO(3)} \dd g \int\dd^{3N}\bfxx_1 \, p(\bfxx_1,g\bfrr_t)\Vert\bfxx_{1}\Vert^2.
\end{align}
So we can conclude that
\begin{align}
    \bfD_{\theta}^*(\bfrr_t,t)&=\int_{\SO(3)} \dd g \int\dd^{3N}\bfxx_1 ~~p(\bfxx_1,g\mid \bfrr_t)~ g^{-1}\cdot \bfxx_1, \\
    \bfD_{\theta}^*(g'\cdot \bfrr_t)&=\int_{\SO(3)} \dd g\int\dd^{3N}\bfxx_1 ~~p(\bfxx_1,g\mid \bfrr_t)~ g'\cdot g^{-1}\cdot \bfxx_1, \forall g \in \SO(3).
\end{align}
Notice that
\begin{align}
    \bfD_{\theta}^*(\bfrr_t)&=\int_{\SO(3)} \dd g \int\dd^{3N}\bfxx_1 ~~p(\bfxx_1,g\mid \bfrr_t)~ g^{-1}\cdot \bfxx_1\\
    &=\frac{\int_{\SO(3)} \dd g \int\dd^{3N}\bfxx_1 ~~p(g\cdot\bfxx_1)p(g\cdot \bfrr_t\mid g\cdot\bfxx_1)\bfxx_1}{\int_{\SO(3)} \dd g \int\dd^{3N}\bfxx_1 ~~p(g\cdot\bfxx_1)p(g\cdot \bfrr_t\mid g\cdot\bfxx_1)}\\
    &=\frac{\int_{\SO(3)} \dd g \int\dd^{3N}\bfxx_1 ~~p(g\cdot\bfxx_1)p(\bfrr_t\mid \bfxx_1)\bfxx_1}{\int_{\SO(3)} \dd g \int\dd^{3N}\bfxx_1 ~~p(g\cdot\bfxx_1)p(\bfrr_t\mid \bfxx_1)},
\end{align}
which is equivalent to the case $p_\target=\int_{\SO(3)}\dd g ~~p(g\cdot\bfxx_1)$, \ie using the augmentation by random $\SO(3)$ rotation.

\subsection{Training and Sampling Acceleration}
\label{appx:expm-acceleration}
\begin{figure*}[t]
  \vspace{-2pt}
  \begin{subfigure}[b]{0.50\textwidth}
    \centering
    \includegraphics[width=\textwidth]{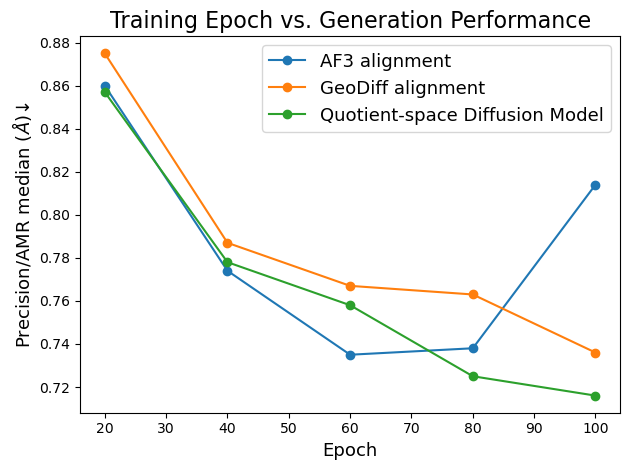}
    \captionsetup{skip=2pt}
    %\caption{}
  \end{subfigure}
  \hspace{-2pt}
  \begin{subfigure}[b]{0.47\textwidth}
    \centering
    \includegraphics[width=\textwidth]{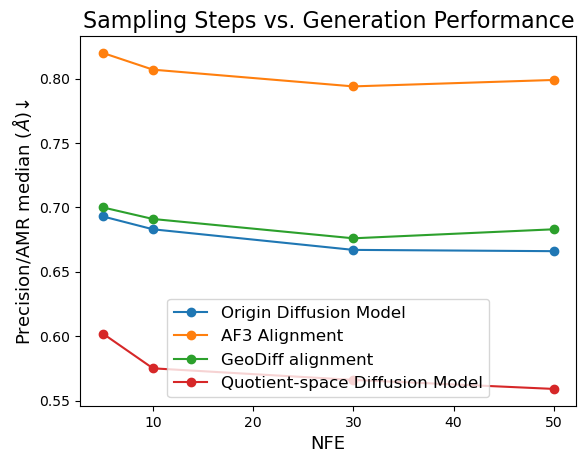}
    \captionsetup{skip=2pt}
    %\caption{}
  \end{subfigure}
  \hspace{-2pt}
  \vspace{-4pt}
  \caption{Training and sampling convergence speed comparison among different symmetry training strategies using the ET-Flow($\SO(3)$) architecture on the GEOM-DRUGS dataset.
  \textbf{(Left)} The relationship between training epochs and generation performance measured by the precision AMR median metric. 
  \textbf{(Right)} The relationship between the number of function evaluations (NFE) for sampling and generation performance measured by the precision AMR median metric.
  }
  \label{fig:acc}
\end{figure*}

In this subsection, we study the training and sampling convergence speed of different methods. For the training convergence speed comparison, we plot the generation performance measured by the precision AMR median metric with respect to the training epochs for previous heuristic alignment methods and our quotient-space diffusion model in \figref{acc}(Left). We only focus on the first 100 epochs for all the methods. These models are trained with the same architecture ET-Flow($\SO(3)$) and training configurations on the GEOM-DRUGS dataset. The results indicate that our method achieves a similar convergence speed to the AF3 heuristic method, because both methods reduce the learning difficulty of the model, and our method leads to a faster convergence than the GeoDiff alignment method since the latter only reduces the variance in learning a target in the equivalence class but has not removed this learning task, as shown in \tabref{comparison}.
We also notice that the AF3 alignment method starts to get worse generation performance after 80 training epochs. This happens due to the incompatibility between the modified learning target and the original sampling process.
% incompatibility between the training loss and the generation performance metric, as the AF3 method is originally designed for the protein structure prediction task, which is not evaluated by distributional metrics.

For the sampling convergence speed comparison, we plot the generation performance measured by the precision AMR median metric with respect to the number of function evaluations (NFE) for the sampling process in \figref{acc}(Right). For all these methods trained on the GEOM-DRUGS dataset, we use the Flow Matching ODE sampler \citep{lipman2023flow} with Euler discretization. From the results, we can observe that models trained with different strategies exhibit similar convergence trends (performance gradually degrades as NFE decreases), our quotient-space diffusion framework consistently outperforms all baselines across every NFE setting. 

\subsection{Quotient Space beyond $\bbR^{3N}/\SE(3)$}
Our framework can generalize to 
quotient spaces generated by symmetry groups beyond the special Euclidean group $\SE(3)$. Possible examples include the $\mathrm{U}(1)$ symmetry in quantum wavefunctions, the $\mathrm{SU}(2)$ symmetry in particle physics, 
and the $\SO(3)$ symmetry in higher ($>3$) representation spaces for tasks including the mean-field electron Hamiltonian matrix prediction.
In this work, we focus on the $\SE(3)$ case for its significant relevance to scientific research \citep{abramson2024accurate}. Applications of our framework on the mentioned more diverse systems above are left as future work.

\subsection{Discussion on Transition Probability for Reward-Guided Generation}
Calculating transition probability is required in sampling from a reward-tilted distribution.
It is a common need with significant practical relevance, \eg, inference-time scaling for image generation \citep{singhal2025a} and for protein structure generation \citep{wohlwend2025boltz}. We treat it also as an important future extension of our quotient-diffusion framework.

Regarding the transition probability calculation corresponding to the stochastic process in \eqnref{so3-lifted-sde}, we would like to first point out that an infinitesimal update step under say, Euler-Maruyama discretization, happens on a linear subspace at the current position defined by the projection operator $P_{\bfxxt_t}$ (note that $\bfhht(\bfxxt_t)$ already lives in the subspace by definition); specifically, the Gaussian noise rising from the Wiener process is projected onto the subspace. A consequence of this operation is that the transition probability density function would be ill-defined if viewed as a distribution on the total space $\clM$ (\ie, it is not absolutely continuous with respect to the Lebesgue measure of the total space; it is infinite on the linear subspace and is zero outside). For an appropriate formulation in sequential-Monte-Carlo/reweighting-based tilting sampling, the distribution should be viewed on the subspace, and its density function be defined with respect to the Lebesgue measure on the subspace.

More concretely, using \emph{simplified symbols}, the one-step transition can be seen as projecting the Gaussian random variable in the $M$-dimensional total space, $\bfxx \sim \clN(\bfmu, \bfSigma)$, using the projection in matrix form $\bfP$. This operation can be seen as marginalizing out the component orthogonal to the projected linear subspace. To leverage this view, we can first conduct a coordinate transformation that decouples the two components. This can be done from the diagonalization $\bfP = \bfQ^{-1} \bfLambda \bfQ$.
Since $\bfP$ is a projection matrix satisfying $\bfP^2 = \bfP$, $\bfLambda$ is a diagonal matrix consisting only 1 and 0; denote the number of 1's as $Q$, which is the dimensionality of the linear subspace. Then written under the new coordinate system transformed by $\bfQ$, the projected coordinates is $\bfLambda \bfQ^{-1} \bfxx$, which essentially omitting the last $M-Q$ components corresponding to the coordinates orthogonal to the linear subspace, so the distribution of the projected vector viewed in the $Q$-dimensional linear subspace is the distribution of $\bfQ^{-1} \bfxx$ marginalized over the last $M-Q$ coordinates.
Explicitly, $\bfQ^{-1} \bfxx \sim \clN(\bfQ^{-1} \bfmu, \bfQ^{-1} \bfSigma \bfQ^{-\top})$, and its marginalized distribution for the first $Q$ coordinates is a Gaussian with mean vector taking the first $Q$ components of $\bfQ^{-1} \bfmu$, and the covariance matrix as the top-left $Q \times Q$ block of the $M \times M$ matrix $\bfQ^{-1} \bfSigma \bfQ^{-\top}$. This is how the transition probability is calculated.

We would like to mention that \thmref{horizontal-lifted-sde} and \corref{length} guarantee that this generation process still produces the same target distribution at the end when viewed on the quotient space (which is sufficient since we do not need to care about the distribution on the equivalent class).

As for the application of sequential-Monte-Carlo/reweighting-based approaches, the proposal distribution (if different from the transition distribution) must also be a distribution on the linear subspace, whose probability density should also be written in that space. The step-wise importance weight would be relatively flexible to choose as long as they accumulate to the desired reward at the end.

\end{document}